\documentclass[10pt]{article} 

\usepackage[accepted]{rlj} 

\usepackage{amssymb}
\usepackage{mathtools}
\usepackage{mathrsfs}
\usepackage{graphicx}
\usepackage{subcaption}
\usepackage[space]{grffile}
\usepackage{url}
\usepackage{algorithm}
\usepackage{algorithmic}
\usepackage{multirow}
\usepackage{wrapfig}

\usepackage{amsthm}             
\theoremstyle{remark}

\theoremstyle{definition}
\newtheorem{example}{Example}
\newtheorem{definition}{Definition}
\newtheorem{assumption}{Assumption}
\newtheorem{theorem}{Theorem}
\newtheorem{proposition}{Proposition}

\usepackage[on]{editing}

\title{Scalable Causal Imitation Learning}

\setrunningtitle{Scalable Causal Imitation Learning}

\author{Eylam Tagor\textsuperscript{1}, Mingxuan Li\textsuperscript{1}, Elias Bareinboim\textsuperscript{1}}

\emails{eylam.tagor@columbia.edu, ml@cs.columbia.edu, eb@cs.columbia.edu}

\affiliations{
$^{1}$\textbf{Department of Computer Science, Columbia University}\\
\par
}

\contribution{
    We introduce Causal SQIL and Causal IQ-Learn, two CIL algorithms for continuous-control settings with unobserved confounders. Both algorithms apply state-of-the-art inverse soft Q-learning objectives to causally-adjusted state representations.
    }
    {
    Prior CIL algorithms \citep{zhang2020causal, kumor2021sequential, ruan2023causal, ruan2024partial} are only suitable for short-horizon, low-dimensional domains. Standard (non-causal) SQIL \citep{reddy2020sqil} and IQ-Learn \citep{garg2021iqlearn} assume no confounding.
    }

\contribution{
    We develop an approximation of the sequential $\pi$-backdoor criterion that reduces the adjustment set computation from the full horizon to a fixed-size sliding window, making causal adjustment tractable for environments with horizons of a thousand or more steps.
    }
    {
    The sequential $\pi$-backdoor criterion \citep{kumor2021sequential} provides a complete graphical characterization of imitability given knowledge of the causal diagram but is infeasible to compute and bloats dimensionality; our approximate adjustment is correct assuming bounded temporal influence and endogenous time-homogeneity.
    }

\contribution{
    We provide an empirical evaluation across confounded high-dimensional long-horizon control tasks, demonstrating that Causal SQIL and Causal IQ-Learn achieve competitive success rates on tasks where Causal BC and Causal GAIL degrade and all causally unaware methods fail to learn meaningful policies.
    }
    {
    These environments are augmentations of OGBench \citep{park2025ogbench} tasks with latent confounders and partial observability modeled by structural causal models to mimic real-world imitation tasks. Prior empirical evaluations of CIL use short-horizon discrete MDPs or low-dimensional continuous environments.
    }

\keywords{Causality, Imitation Learning, Inverse Reinforcement learning, Unobserved Confounders}

\summary{Imitation learning enables learning a policy in an unknown environment with a latent reward signal using expert demonstrations, but it struggles when the imitator's and expert's observations are mismatched and unobserved confounders are present in expert demonstrations. By identifying appropriate adjustment sets via the sequential $\pi$-backdoor criterion, causal imitation learning (CIL) provides a framework for approximating the expert's policy from confounded data. However, existing CIL methods, Causal Behavioral Cloning (Causal BC) and Causal Generative Adversarial Imitation Learning (Causal GAIL), are designed for short-horizon, low-dimensional settings. When applied to continuous control tasks with long horizons and high-dimensional state-action spaces, these methods exhibit poor performance: Causal BC suffers from compounding errors, Causal GAIL is unstable and sample-inefficient, and sequential $\pi$-backdoor adjustment becomes impractical.

We introduce Causal Soft Q Imitation Learning (SQIL) and Causal Inverse soft-Q Learning (IQ-Learn), two off-policy causal imitation learning algorithms that combine the causal adjustment framework with state-of-the-art inverse reinforcement learning objectives. Both algorithms operate on causally-adjusted state representations produced by an efficient approximation of the sequential $\pi$-backdoor criterion, exploiting the causal structure of continuous control environments to reduce the full-horizon adjustment to a fixed-size sliding window. We evaluate all methods in a suite of confounded environments and find that Causal SQIL and Causal IQ-Learn substantially outperform prior CIL algorithms on long-horizon tasks, sometimes surpassing the expert, whereas all causally unaware imitation methods fail to learn meaningful behavior.}

\begin{document}

\makeCover
\maketitle

\begin{abstract}
Imitation learning enables learning a policy in an unknown environment with a latent reward signal using expert demonstrations, but it struggles when the imitator's and expert's observations are mismatched and unobserved confounders are present in expert demonstrations. By identifying appropriate adjustment sets via the sequential $\pi$-backdoor criterion, causal imitation learning (CIL) provides a framework for approximating the expert's policy from confounded data. However, existing CIL methods, Causal Behavioral Cloning (Causal BC) and Causal Generative Adversarial Imitation Learning (Causal GAIL), are designed for short-horizon, low-dimensional settings. When applied to continuous control tasks with long horizons and high-dimensional state-action spaces, these methods exhibit poor performance: Causal BC suffers from compounding errors, Causal GAIL is unstable and sample-inefficient, and sequential $\pi$-backdoor adjustment becomes impractical. We introduce Causal Soft Q Imitation Learning (SQIL) and Causal Inverse soft-Q Learning (IQ-Learn), two off-policy causal imitation learning algorithms that combine the causal adjustment framework with state-of-the-art inverse reinforcement learning objectives. Both algorithms operate on causally-adjusted state representations produced by an efficient approximation of the sequential $\pi$-backdoor criterion, exploiting the causal structure of continuous control environments to reduce the full-horizon adjustment to a fixed-size sliding window. We evaluate all methods in a suite of confounded environments and find that Causal SQIL and Causal IQ-Learn substantially outperform prior CIL algorithms on long-horizon tasks, sometimes surpassing the expert, whereas all causally unaware imitation methods fail to learn meaningful behavior.

Repository:~\href{https://github.com/CausalAILab/Scalable-Causal-Imitation-Learning}{https://github.com/CausalAILab/Scalable-Causal-Imitation-Learning}
\end{abstract}

\section{Introduction}
\label{sec:intro}

Imitation learning (IL) has become a central paradigm in robotics and control tasks as an alternative to reinforcement learning (RL) in domains where reward signals are unavailable, sparse, or difficult to engineer~\citep{osa2018, zare2023survey}. Rather than optimizing a task-specific reward, IL seeks to learn from demonstrations collected as state-action trajectories from an expert deployed in the environment. This framework underlies a large body of work in behavioral cloning, dataset aggregation, and inverse RL \citep{ross2011reduction, ziebart2008maxent, ho2016generative, fu2018learning, chi2023diffusion, zhao2023aloha}, and is widely used in offline RL and robotics applications \citep{levine2020offline, fu2021d4rl, prudencio2023survey, brohan2023rt2}.

Traditionally, IL methods assume that the expert and imitator operate with the same sensory capabilities, meaning every variable that the expert can observe is also observable to the imitator. Under this No Unobserved Confounders (NUC) assumption, the expert policy is identifiable from observational data and standard IL can recover it given sufficient demonstrations. However, realistic decision-making systems rarely satisfy NUC. In practice, experts often have access to additional sensors or context unavailable to the imitator, and latent conditions such as wind, friction, and payload may unpredictably change their distribution. These phenomena introduce unobserved confounding to the imitation task that may jointly affect the state transition, expert's action and rewards.

\begin{figure*}[t]
    \centering
    \begin{subfigure}[t]{0.38\textwidth}
        \centering
        \includegraphics[width=\linewidth]{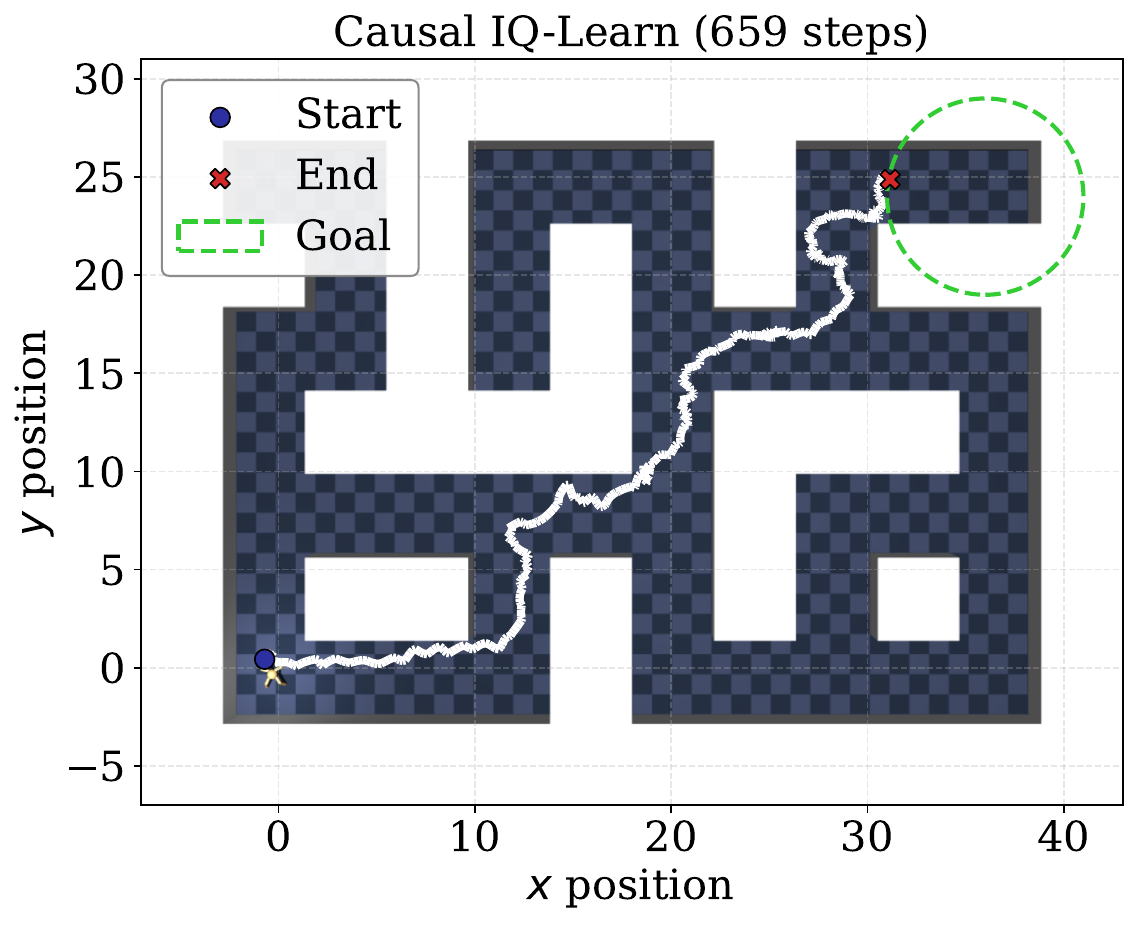}
        \label{fig:panel_a}
    \end{subfigure}
    \hspace{1.75cm}
    \begin{subfigure}[t]{0.38\textwidth}
        \centering
        \includegraphics[width=\linewidth]{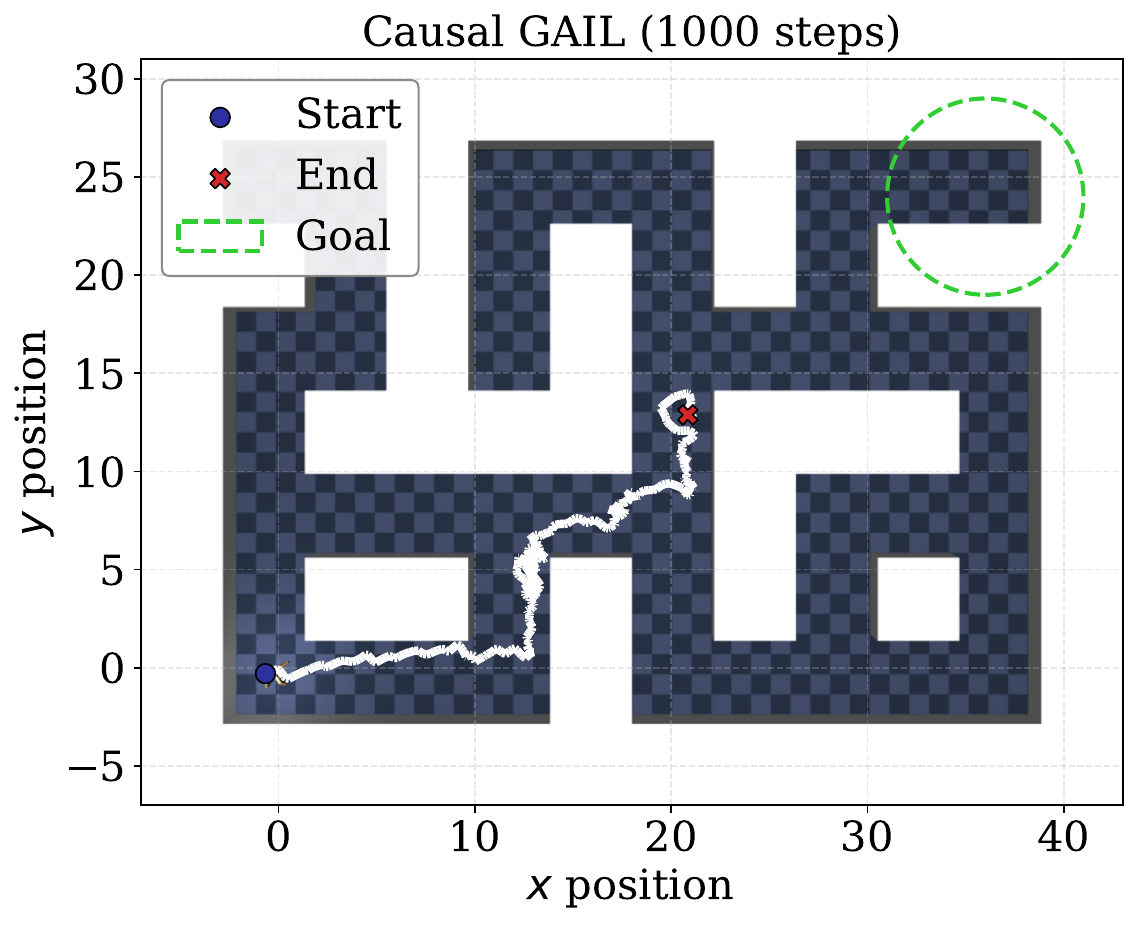}
        \label{fig:panel_b}
    \end{subfigure}
    \begin{subfigure}[c]{0.38\textwidth}
        \centering
        \includegraphics[width=\linewidth]{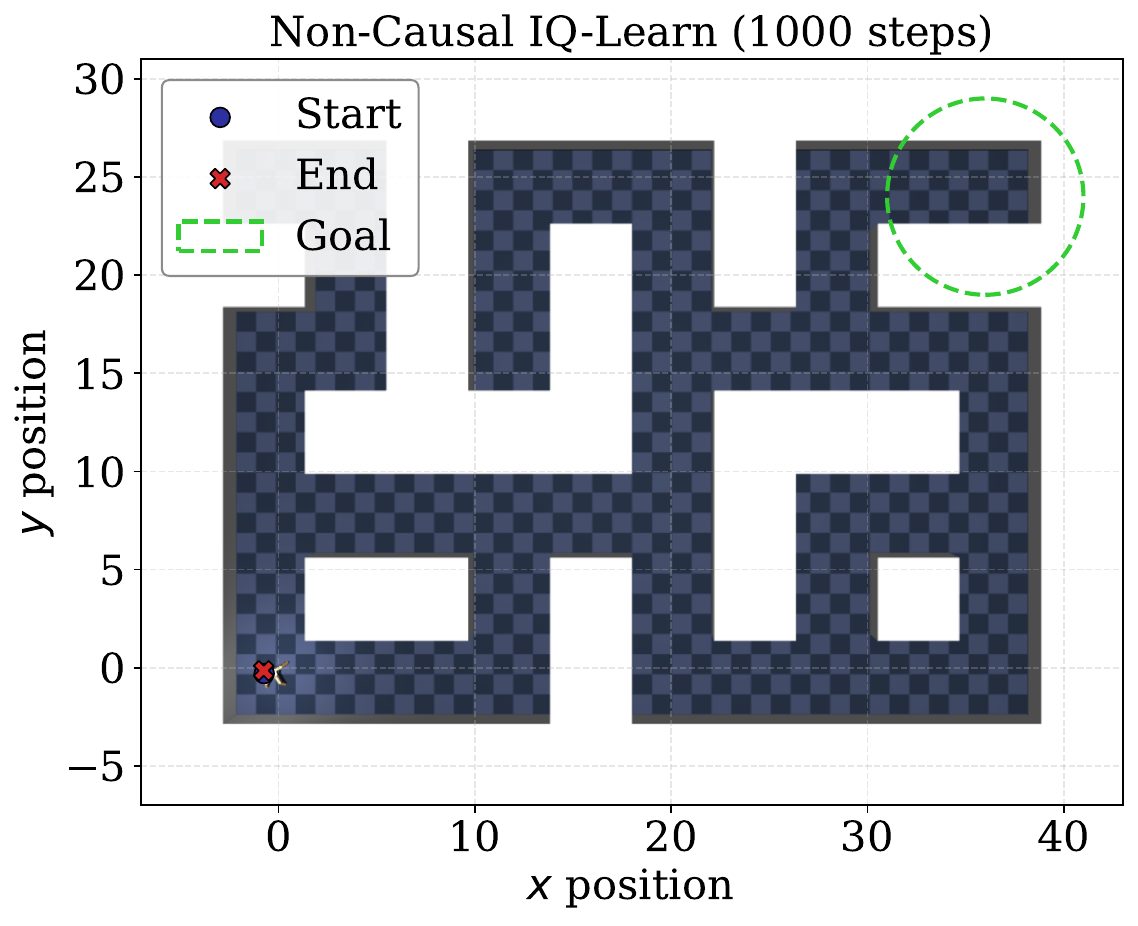}
        \label{fig:panel_c}
    \end{subfigure}
    \hspace{1cm}
    \begin{subfigure}[c]{0.46\textwidth}
        \centering
        \includegraphics[width=\linewidth]{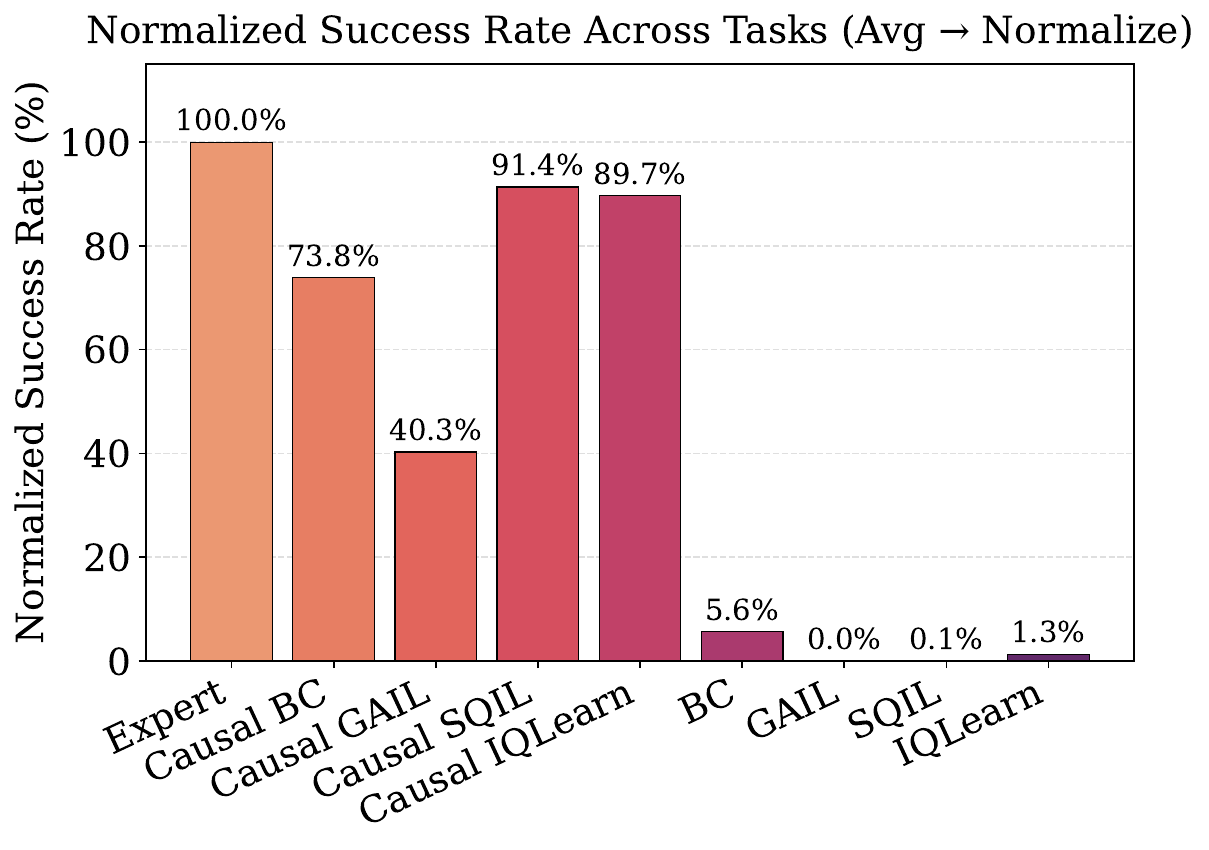}
        \label{fig:panel_d}
    \end{subfigure}
    \caption{Performance of various imitation learning algorithms on the Confounded AntMaze Large task. (a) One of our proposed algorithms, Causal IQ-Learn. (b) Existing CIL methods such as Causal GAIL struggle to scale to high-dimensional long-horizon tasks, even when equipped with our windowed adjustment (Section~\ref{algos-adjustment}) to bypass their infeasibility (see Appendix~\ref{app:wallclock}). (c) Non-causal methods fail by overfitting to spurious correlations corrupted by unobserved confounding. (d) Success rates for all algorithms tested aggregated across all tasks evaluated; Causal SQIL and Causal IQ-Learn perform best through leveraging causal knowledge and scalable policy learning.}
    \label{fig:four_panel}
\end{figure*}

By falsely assuming NUC or ignoring unobserved confounding, standard IL methods overfit to spurious correlations during training, thus failing to generalize when these correlations shift at runtime \citep{dehaan2019causal, lu2023imitation}. Figure~\ref{fig:four_panel} illustrates this on a confounded maze navigation task: the observation contains quantities that correlate with expert actions but, due to influence from unobserved confounders, suffer distribution shift at runtime. Causally unaware methods mistake these spurious correlations for causal signals during training and incorporate them into their decision-making, ultimately failing to exhibit coherent movement during runtime and achieve near-$0\%$ success (Figure~\ref{fig:four_panel}c,d). In safety-critical tasks in which IL is often applied due to its independence from reward engineering, such as autonomous driving \citep{chen2024endtoend, codevilla2019exploring} and robotic manipulation \citep{chi2023diffusion, zheng2024imitation}, such failures are only revealed during deployment and thus pose an unaffordable risk which severely limits the utility of IL.

Causal imitation learning (CIL) addresses this by leveraging structural causal knowledge to identify which observed variables are safe to condition on \citep{zhang2020causal, kumor2021sequential, ruan2023causal, ruan2024partial}. However, existing CIL methods have remained restricted to low-dimensional, short-horizon tasks: in continuous control benchmarks \citep{todorov2012mujoco, park2025ogbench} where observation spaces are high-dimensional and episodes span thousands of steps, they suffer from compounding error and training instability (Figure~\ref{fig:four_panel}b). On the same large maze task, these methods recover between $13-71\%$ of expert performance despite recovering $84-89\%$ on the medium maze (Table~\ref{tab:main-results}). A  review of the CIL and IL literature is provided in Appendix~\ref{app:related-work}.

Collectively, we identify the gap in the current state of imitation learning literature: scalable methods can imitate long-term expert policies, but are fragile when NUC is violated; causal methods are robust under confounding, but struggle to scale. Our contributions address this gap by introducing scalable CIL through soft Q-learning methods augmented by an efficient approximation to the sequential $\pi$-backdoor adjustment sets that leverages structural properties of the environment. For evaluation, we introduce a suite of confounded control tasks based on OGBench~\citep{park2025ogbench} where confounding biases are designed to mirror what is naturally found in real-world scenarios. We find that in these environments, existing CIL algorithms struggle to recover a consistent policy and non-causal IL algorithms fail altogether, whereas our proposed algorithms are able to achieve $~90\%$ of the expert's success rate on average (Figure~\ref{fig:four_panel}d) and even surpass it on some tasks.

\paragraph{Notations.}

We will consistently use capital letters ($X$) to represent variables either in the observation or in the causal diagram, and lowercase ($x$) for their values. We bold capital letters ($\mathbf{X}$) for sets of variables. We denote the parents of the variable $X$ in a causal diagram $\mathrm{pa}(X)$ and its children $\mathrm{ch}(X)$ (with $\mathrm{Pa}(X)$ and $\mathrm{Ch}(X)$ including $X$) and the C-component of $X$ in $\mathcal{G}$ as $\mathbf{C}(X)$. For a topological order $\prec$ and the subgraph $\mathcal{G}(X)$ induced by $X$ and its predecessors, the extended parent set $\mathrm{pa}^+(X) = \mathrm{pa}(\mathbf{C}(X)) \setminus \{X\}$; $\mathrm{pa}^+(X)$ contains every variable collider-connected to $X$ within $\mathcal{G}(X)$ together with the parents of those variables. We write $\mathrm{ch}^+(X)$ for the effective children of $X$, or every variable reachable from $X$ by a directed path whose internal nodes are all latent. $X_t$ and $X_H$ denote the instance of $\mathbf{X}$ at timestep $t$ and at the last step, respectively. We use $P(X)$ as the probability distribution over $X$, $\pi(X \mid \mathbf{Z})$ as the behavioral policy distribution conditioning on $\mathbf{Z}$, and $do(x)$ as the intervention fixing $X$ to take values $x$.

\section{The Challenge of Imitation under Unobserved Confounding}
\label{sec:cil}

We model the joint expert–environment system as a structural causal model (SCM).

\begin{definition}[Structural Causal Model \citep{pearl2009causality, bareinboim2022pearl}]
\label{def:scm}
A structural causal model (SCM) is a tuple $\mathcal{M} = \langle \mathbf{U}, \mathbf{V}, \mathscr{F}, P(\mathbf{u}) \rangle$, where:
\begin{enumerate}[label=\textbullet, leftmargin=20pt, topsep=0pt, parsep=0pt, itemsep=1pt]
    \item $\mathbf{U}$ is a set of exogenous variables determined by factors outside the model;
    \item $\mathbf{V} = \{V_1, \ldots, V_n\}$ is a set of endogenous variables determined by variables in $\mathbf{U} \cup \mathbf{V}$;
    \item $\mathscr{F} = \{f_{V} : V \in \mathbf{V}\}$ is a set of structural functions such that each $V \in \mathbf{V}$ is assigned via $V \leftarrow f_V(\mathrm{pa}(V), U_V)$, where $\mathrm{pa}(V) \subseteq \mathbf{V} \setminus \{V\}$ are endogenous parents of $V$ and $U_V \subseteq \mathbf{U}$;
    \item $P(\mathbf{u})$ is a joint probability distribution over the exogenous variables $\mathbf{U}$.
\end{enumerate}
Each SCM $\mathcal{M}$ induces a causal diagram $\mathcal{G}$ with one node for each $V \in \mathbf{V}$, directed edges $\mathrm{pa}(V)~\to~V$, and bidirected edges between variables that share an unobserved parent in $\mathbf{U}$.
\end{definition}

In our setting, $\mathbf{V}$ includes states, actions, and latent environment variables, $\mathbf{X} \subseteq \mathbf{V}$ is the action set, and $Y \in \mathbf{U}$ is the latent reward. To differentiate between variables that are unobserved, observed, and observed by the expert only, we partition the endogenous variables into
\[
\mathbf{V}^O \subseteq \mathbf{V} \quad\text{(observed to the imitator)}, \qquad
\mathbf{V}^L = \mathbf{V} \setminus \mathbf{V}^O \quad\text{(latent to the imitator)}.
\]

The expert demonstrations reflect the joint observational distribution $P(\mathbf{V}^O)$, whereas the imitator, operating under its own policy, induces the interventional distribution $P(\mathbf{V} \mid do(\pi))$. In the presence of latent variables $\mathbf{V}^L$, these two distributions may differ substantially: correlations between observed variables and expert actions may be driven by unobserved confounders rather than causal paths. The goal of CIL is therefore to determine, for each time step $t$, the sufficient subset of observed variables $\mathbf{Z}_t \subseteq \mathbf{V}^O$ for constructing an unbiased approximation of the expert's decision mechanism, $\pi_t(x_t \mid \mathbf{Z}_t) \approx P(x_t \mid \mathbf{Z}_t),$ in a way that is stable to the removal of latent confounding.

\begin{figure}[t]
    \centering
    \includegraphics[width=0.75\linewidth]{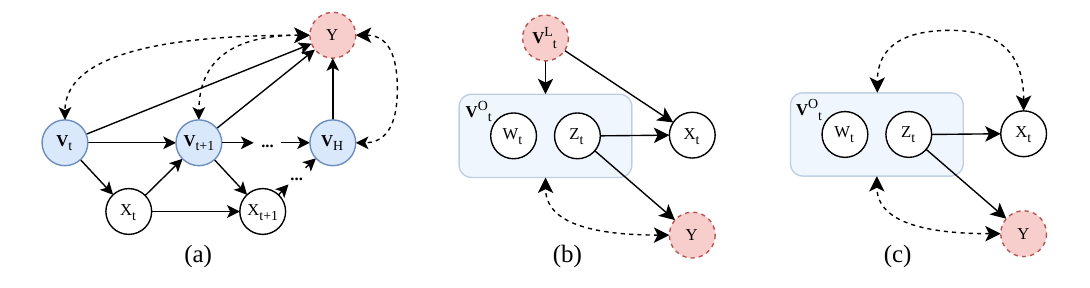}
    \caption{(a) Generic causal diagram for sequential imitation learning with unobserved confounders, as perceived by the expert. (b) Expanded single-timestep slice of a diagram with a spurious variable $W_t$, a useful state variable $Z_t$, and latent state $\mathbf{V}^L_t$. (c) Imitator perception of diagram (b), demonstrating how latent observation variables become additional unobserved confounders.}
    \label{fig:generic-causal-graph}
\end{figure}

\begin{example}[Confounded AntMaze]
\label{ex:antmaze-scm}
Consider an ant robot navigating a maze toward a goal region, receiving a terminal reward $Y$ upon success (see Figure~\ref{fig:four_panel} for visualization). The expert observes the full state $\mathbf{V}$, including its torso orientation $\mathbf{O}$, and selects joint torques $\mathbf{X}$ accordingly to move itself; Figure~\ref{fig:generic-causal-graph}a shows this sequential structure. The imitator, however, does not observe $\mathbf{O}$ (i.e.\ $\mathbf{O} \in \mathbf{V}^L$). A compass sensor $\mathbf{W}$ serves as a noisy surrogate for the ant's bearing. The environment is also subject to a latent wind $\mathbf{U}$ (shown implicitly through bidirectional edges) that applies an external force to the dynamics, affecting the compass reading and the difficulty of reaching the goal.

Figure~\ref{fig:generic-causal-graph}b shows a single-timestep slice of this structure. The compass $W_t$ receives incoming edges from both the hidden orientation $O_t \in \mathbf{V}^L_t$ and the latent wind $U_t$, but has no outgoing edges: it is a collider that plays no causal role in determining future states, actions, or rewards. The useful state variables (position, joint angles, velocities) are represented by $Z_t$, which does causally influence the expert's action due to its edge into $X_t$. From the imitator's perspective (Figure~\ref{fig:generic-causal-graph}c), the hidden $\mathbf{V}^L_t$ becomes an additional unobserved confounder, introducing a bidirected edge between $\mathbf{V}^O_t$ and $X_t$.

A causally unaware imitator that conditions on all observed variables, including $W_t$, inadvertently opens the spurious path $Y \gets U_t \to W_t \gets O_t \to X_t$. During training, the compass correlates with the expert's turning behavior because both are influenced by the wind. The imitator mistakes this for a causal signal: it learns, for instance, that when $W_t$ points east the expert turns right, not realizing both facts are driven by an eastward gust. Once deployed under a different $P(U)$, these associations become actively harmful: the imitator turns into walls whenever the wind changes direction. $\hfill \blacksquare$
\end{example}

\subsection{Sequential $\pi$-Backdoor Criterion}
\label{cil-pibd}

Formally,  the sequential $\pi$-backdoor criterion graphically determines what each $\mathbf{Z}_t$ must contain so that conditioning on $\mathbf{Z}_t$ blocks all noncausal paths from $X_t$ to the final outcome $Y$.

\begin{definition}[Sequential $\pi$-Backdoor Criterion \citep{kumor2021sequential}]
\label{def:seq-pi-backdoor}
Let $\mathcal{G}$ be the causal diagram induced by the SCM.
For each action $X_t$, define a manipulated graph $\mathcal{G}_t^{'}$ obtained by: (i) removing all incoming edges into future actions $\mathbf{X}_{t+1:H}$, and (ii) replacing each future action $X_j$ ($j>t$) by a node whose parents are restricted to $\mathbf{Z}_j$. A family of sets $\{\mathbf{Z}_t\}_{t=0}^{H}$ satisfies the sequential $\pi$-backdoor for $(\mathcal{G}, \mathbf{X}, Y)$ if, for every $t$, either $(X_t \perp\!\!\!\perp Y \mid \mathbf{Z}_t)_{(\mathcal{G}_t^{'})_{X_t}} \quad\text{or}\quad X_t \notin \mathrm{An}_{{\mathcal{G}_t^{'}}}(Y).$ Here $(\mathcal{G}_t^{'})_{X_t}$ denotes the graph obtained from $\mathcal{G}_t^{'}$ by deleting outgoing edges from $X_t$.
\end{definition}

When $\{\mathbf{Z}_t\}$ satisfies Definition~\ref{def:seq-pi-backdoor}, conditioning on $\mathbf{Z}_t$ removes all confounding and noncausal dependencies between $X_t$ and $Y$ that arise from shared latent parents in $\mathbf{V}^L$, proxy variables, or unobserved factors. Crucially, $\mathbf{Z}_t$ is restricted to the observable set $\mathbf{V}^O$. Returning to Example~\ref{ex:antmaze-scm} and Figure~\ref{fig:generic-causal-graph}, applying the sequential $\pi$-backdoor to the diagram yields an adjustment set $\mathbf{Z}_t$ that contains the useful state variables (position, joint angles, velocities) but excludes the compass $W_t$. By conditioning only on $\mathbf{Z}_t$, the imitator's policy $\pi(x_t \mid \mathbf{z}_t)$ is indifferent to the wind-driven distributional shift in $W_t$ and instead relies exclusively on variables that causally determine the expert's actions. Even when the orientation $O_t$ is unobserved and the full imitability condition is broken, learning a policy over $\mathbf{Z}_t \subseteq \mathbf{V}^O$ can still approximate the expert behavior because the remaining observed variables carry sufficient causal signal for navigation.

\section{Scalable Causal Imitation Learning}
\label{sec:algos}

The sequential $\pi$-backdoor criterion identifies the correct conditioning set, but correct adjustment sets alone are not sufficient for imitation at scale. Prior CIL work has instantiated the criterion within two algorithmic paradigms: Causal Behavioral Cloning (Causal BC), or supervised cloning of the expert's conditional policy on the adjustment sets $\hat{\pi}_t(x_t \mid \mathbf{z}_t) = P(X_t \mid \mathbf{Z}_t)$ \citep{kumor2021sequential}, and Causal Generative Adversarial Imitation Learning (Causal GAIL), which matches occupancy measures over the adjustment sets via adversarial training \citep{ruan2023causal}. Both exhibit fundamental scaling limitations, which we illustrate on the Confounded AntMaze task from Example~\ref{ex:antmaze-scm}.

\paragraph{Failure of Causal BC in long-horizon tasks.}
Even when supplied with correct adjustment sets, Causal BC remains a simple supervised learner. In long-horizon tasks, small prediction errors compound and the policy drifts into states not covered by the expert demonstrations, where its predictions are unreliable. While Causal BC achieves a respectable $88.9\%$ normalized to the expert in AntMaze-Medium ($H \approx 300$ effective steps for successful solves), it drops to $71.2\%$ in AntMaze-Large ($H \approx 700$ for successful episodes) and to $30.8\%$ in HumanoidMaze-Medium ($H > 1000$ for successful episodes). This degradation is shown in Figure~\ref{fig:failure-bc}, where Causal BC easily navigates a medium maze; on a large maze, however, the agent initially follows the expert path but gradually drifts off course due to compounding error, entering unseen states from which it cannot recover.

\begin{figure}[t]
    \centering
    \begin{subfigure}[m]{0.29\textwidth}
        \centering
        \includegraphics[width=\linewidth]{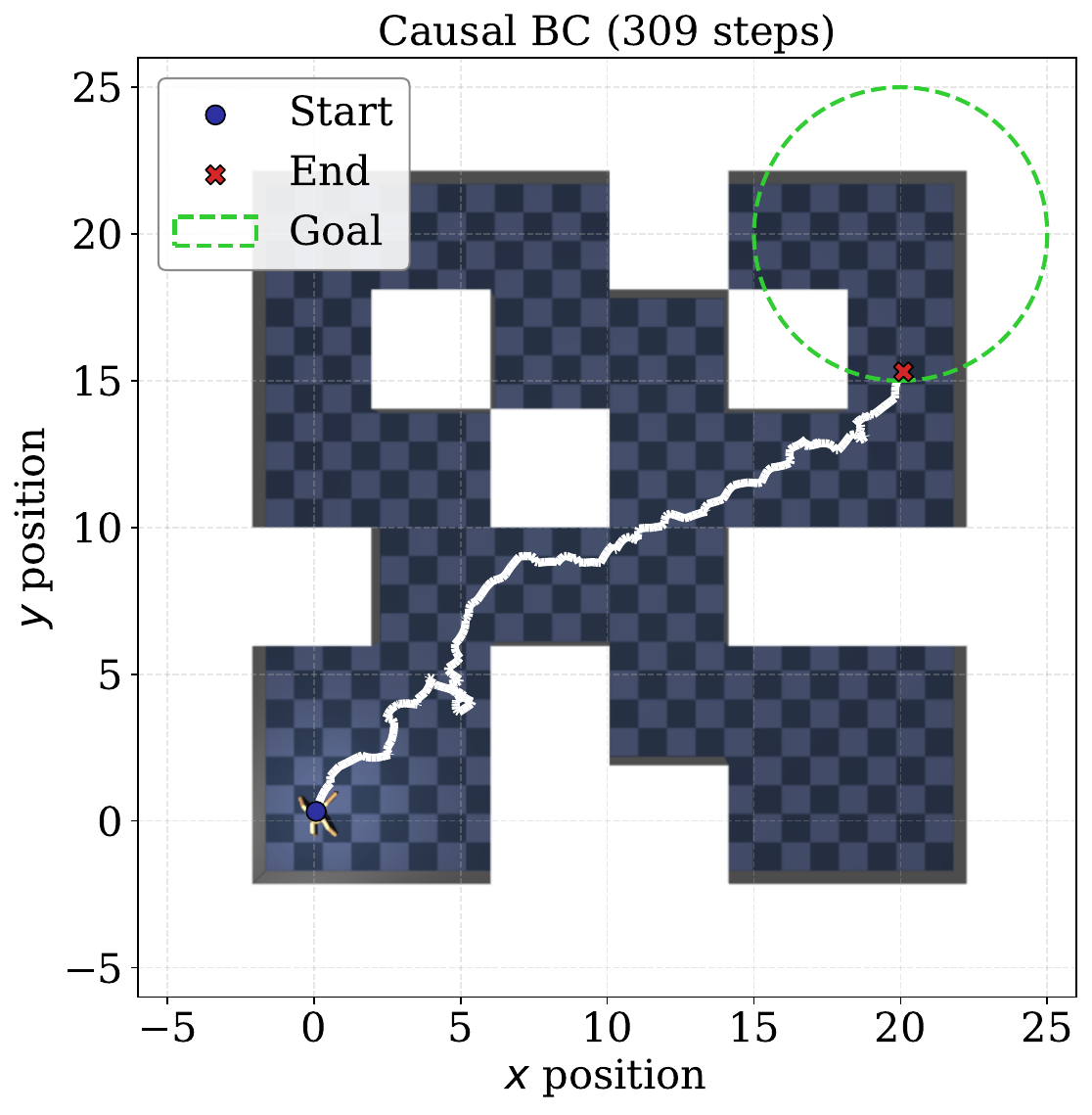}
        \label{fig:cbc-traj-med}
    \end{subfigure}
    \hspace{1.5cm}
    \begin{subfigure}[m]{0.38\textwidth}
        \centering
        \includegraphics[width=\linewidth]{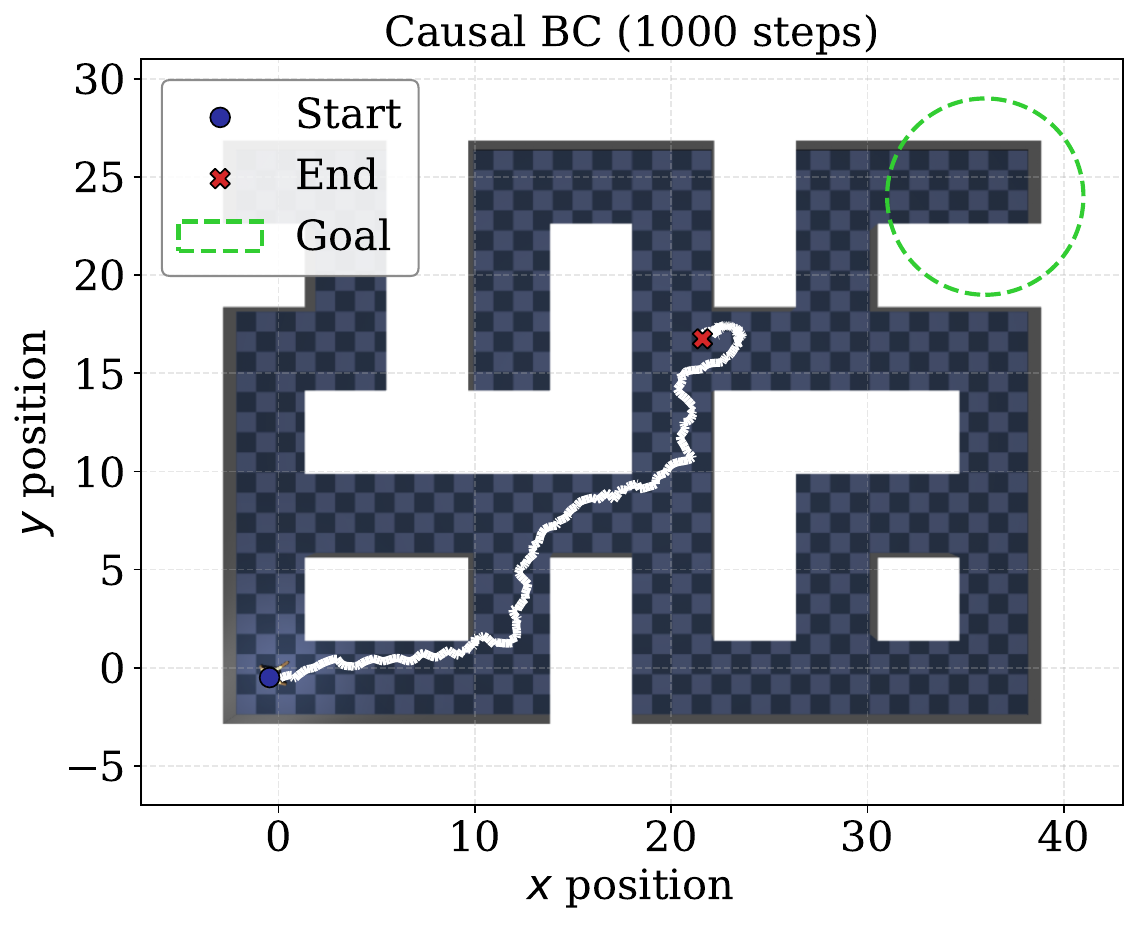}
        \label{fig:cbc-traj-large}
    \end{subfigure}

    \caption{Causal BC on Confounded AntMaze. On Medium (left), the agent reaches the goal. On Large (right), the agent follows the expert path initially but drifts off course partway through; once outside the demonstration support, it cannot recover and fails to reach the goal.}
    \label{fig:failure-bc}
\end{figure}

\paragraph{Failure of Causal GAIL in high-dimensional domains.}
While Causal GAIL addresses compounding error by matching occupancy measures via adversarial training, it introduces its own scaling difficulties through its reliance on on-policy rollouts (typically via PPO) and a discriminator to distinguish expert from imitator trajectories. In long-horizon tasks, the agent must discover the complete path to the goal through exploration before the discriminator can provide a useful learning signal for later portions of the trajectory. In AntMaze-Medium, this exploration is still somewhat feasible and Causal GAIL achieves $84.6\%$ success, but performance drops to $13.1\%$ in AntMaze-Large and collapses to $0\%$ by HumanoidMaze-Medium. Figure~\ref{fig:four_panel}b shows that even in AntMaze-Large, the agent learns only coherent navigation for roughly the first two-thirds of the maze. This indicates a credit-assignment failure, as on-policy training collects too few complete traversals to propagate reward signal to later stages of navigation and prevents scaling to long-horizon tasks.

These failure modes point to a clear algorithmic requirement: off-policy methods that treat expert demonstrations as a static buffer, learn from self-collected transitions, and propagate reward signal across long horizons via temporal-difference learning. SQIL~\citep{reddy2020sqil} and IQ-Learn~\citep{garg2021iqlearn}, two recent off-policy imitation learning methods built on soft Q-learning, satisfy all three properties but assume unconfounded environments. We now describe how to combine them with the causal adjustment framework.

\subsection{Causal SQIL and Causal IQ-Learn}
\label{algos-methods}

Given a $\pi$-backdoor admissible scope $\mathcal{S} = \{\langle X_t, \mathbf{Z}_t \rangle\}_{t=0}^{H-1}$, a straightforward causally-adjusted state representation would be $\mathbf{z}_t$ that concatenates the values of the variables in $\mathbf{Z}_t$ at each timestep. Both Causal SQIL and Causal IQ-Learn then operate on $(\mathbf{z}_t, x_t)$ pairs in place of the standard $(s_t, a_t)$ pairs, applying their respective objectives to the adjusted representation.

\paragraph{Causal SQIL.}
SQIL~\citep{reddy2020sqil} assigns a fixed reward of $r=1$ to expert transitions and $r=0$ to policy transitions, then trains an SAC agent on the combined replay buffer. We apply this to causally-adjusted inputs $(\mathbf{z}_t, x_t)$: the critic minimizes the soft Bellman residual
\begin{equation}
y = r + \gamma \Big(\min_{j=1,2} Q_{\bar{\theta}_j}(\mathbf{z}', a') - \alpha \log \pi_\phi(a' \mid \mathbf{z}')\Big), \qquad
\mathcal{L}_Q = \mathbb{E}\Big[\big(Q_\theta(\mathbf{z},x) - y\big)^2\Big],
\end{equation}
with $a' \sim \pi_\phi(\cdot \mid \mathbf{z}')$, and the actor maximizes the entropy-regularized objective
\begin{equation}
\mathcal{L}_\pi = \mathbb{E}_{\mathbf{z}}\Big[\alpha \log \pi_\phi(x \mid \mathbf{z}) - \min_{j=1,2} Q_{\theta_j}(\mathbf{z}, x)\Big], \quad x \sim \pi_\phi(\cdot \mid \mathbf{z}).
\end{equation}
Because the adjustment set $\mathbf{Z}_t$ satisfies the $\pi$-backdoor criterion, the Q-function learns value estimates that are not confounded by latent variables, while temporal-difference learning propagates the expert signal across the full horizon. Full pseudocode is given in Algorithm~\ref{alg:causal-sqil} (Appendix~\ref{app:pseudocode}).

\paragraph{Causal IQ-Learn.}
IQ-Learn~\citep{garg2021iqlearn} learns a Q-function whose implicit reward is consistent with expert behavior. The critic enforces the soft Bellman equation on expert data,
\begin{equation}
\mathcal{L}_{\mathrm{expert}} = \mathbb{E}_{(\mathbf{z},x,\mathbf{z}') \sim \mathcal{D}_{\mathrm{exp}}}
\Big[\big(Q_\theta(\mathbf{z}, x) - \gamma \, V_{\bar{\theta}}(\mathbf{z}')\big)^2\Big],
\end{equation}
where $V(\mathbf{z}) = \log \mathbb{E}_{x \sim \pi}[\exp Q(\mathbf{z},x)]$, and applies a policy-consistency regularizer on policy data,
\begin{equation}
\mathcal{L}_{\mathrm{reg}} = \mathbb{E}_{(\mathbf{z},x) \sim \pi}
\Big[\big(\log \pi_\phi(x \mid \mathbf{z}) - Q_\theta(\mathbf{z}, x) + V_\theta(\mathbf{z})\big)^2\Big].
\end{equation}
The combined loss $\mathcal{L}_Q = \mathcal{L}_{\mathrm{expert}} + \lambda \, \mathcal{L}_{\mathrm{reg}}$ trains the critic, grounding the implicit reward $r(\mathbf{z},x) = Q(\mathbf{z},x) - \gamma V(\mathbf{z}')$ in deconfounded state-action associations rather than spurious correlations. The actor uses the same entropy-regularized objective as Causal SQIL. Full pseudocode is given in Algorithm~\ref{alg:causal-iqlearn} (Appendix~\ref{app:pseudocode}).

The causal adjustment layer is algorithm agnostic. The downstream RL algorithm is unmodified; rather, it is the input representation that is changed. This property means that any future IL algorithm built on the soft Q learning approach can be made causal by the same procedure.

\subsection{Scalable Causal Adjustment}
\label{algos-adjustment}

Although the sequential $\pi$-backdoor yields a principled solution to the causal imitation problem, applying it swiftly becomes intractable as horizon and dimensionality increase. In a sequential decision-making problem with a terminal reward $Y$ and a horizon $H \geq 1000$, the \textsc{FindOX} algorithm of \citet{kumor2021sequential} (which returns the maximal admissible set $\mathbf{V}^O_X$ for sequential $\pi$-backdoor adjustment; reproduced in Appendix~\ref{app:pseudocode}) operates over thousands of nodes and results in $\left| \mathbf{Z}_t \right|$ that linearly increases with $t$. In an SCM where the full histories of $\mathbf{V}$ and $\mathbf{X}$ have at least a potential causal effect on $X_t$, adjustment is inefficient and bloats observational dimensionality.

To make adjustment feasible, we propose an approximation of the true adjustment sets by generalizing structural properties of confounded control environments. Our assumption is stated in terms of the extended parent set $\mathrm{pa}^+$, which rests on the notion of a confounded component.

\begin{definition}[Confounded Component \citep{tian2002ccomp}]
\label{def:ccomp}
Given a causal diagram $\mathcal{G}$ over variables $\mathbf{V}$, a confounded component (C-component) is a maximal set $\mathbf{C} \subseteq \mathbf{V}$ such that every pair of variables in $\mathbf{C}$ is joined by a path consisting entirely of bidirected edges. For $V \in \mathbf{V}$, $\mathbf{C}(V)$ denotes the C-component containing $V$.
\end{definition}

Fixing a topological order and letting $\mathcal{G}(V)$ be the subgraph induced by $V$ and its predecessors, recall that the extended parent set is $\mathrm{pa}^+(V) = \mathrm{pa}(\mathbf{C}(V)) \setminus \{V\}$, essentially comprising of all variables collider-connected to $V$ and their parents. The C-component generalizes the parent set to the semi-Markovian setting: under unobserved confounding a variable can remain dependent on a non-descendant joined to it by a bidirected edge even after conditioning on its directed parents, which the extended parents set accounts for. Throughout, $\mathrm{pa}^+$ is computed on the diagram's latent projection $\widetilde{\mathcal{G}}$ onto $\mathbf{V}^O \cup \{Y\}$ \citep{verma1990equivalence, tian2002ccomp}, in which latent-mediated paths appear as single edges (formal definition in Appendix~\ref{app:proof-windowed}). We use this to bound the temporal reach of confounding:

\begin{assumption}[$k$-Bounded Time-Homogeneous Confounding]
\label{assm:bounded-confounding}
Let $\mathcal{G}_H$ be the causal diagram induced by the SCM unrolled over horizon $H$, with endogenous variables $\mathbf{V}_t$ at each timestep $t$, and let $\widetilde{\mathcal{G}}_H$ be its latent projection onto $\mathbf{V}^O \cup \{Y\}$, with $Y$ treated as an element of $\mathbf{V}_H$:
\begin{enumerate}[label=(\roman*), leftmargin=20pt, topsep=2pt, parsep=0pt, itemsep=2pt]
    \item \textbf{$k$-bounded influence.} There exists a constant $k \geq 1$ such that for every $t$, every node in $\mathrm{pa}^+(\mathbf{V}_t)$ occurs at a timestep no earlier than $t - k$.
    \item \textbf{Endogenous time-homogeneity.} The structural functions $\mathscr{F}$ are identical at every timestep: for all $t, t'$ and for any fixed $j \leq k$, the local causal structure among $(\mathbf{V}_{t-j}, \ldots, \mathbf{V}_t, X_t)$ is isomorphic to that among $(\mathbf{V}_{t'-j}, \ldots, \mathbf{V}_{t'}, X_{t'})$.
\end{enumerate}
\end{assumption}

Condition~(i) ensures that all causal influence on $X_t$, whether from observed state variables or unobserved confounders, is fully captured within a window of length $k$ timesteps. Condition~(ii) ensures that this local causal structure applies to any timestep in the unrolled graph, such that computing adjustment sets for one $k$-length window yields sufficient information of the relevant causal relationships to apply for all timesteps. Both conditions of Assumption~\ref{assm:bounded-confounding} are naturally satisfied in physics-based continuous control environments such as MuJoCo, where dynamics depend solely on the immediate state and external forces from confounders have temporally localized effects.

\begin{algorithm}[t]
\caption{Windowed Sequential $\pi$-Backdoor Adjustment}
\label{alg:feasible-pi-backdoor}
\begin{algorithmic}[1]

\REQUIRE Window size $k$, full horizon $H$.

\STATE Construct proxy environment $\mathcal{E}_k$ with horizon $h=k+1$ and extract its causal graph $\mathcal{G}_k$.
\STATE Run \textsc{FindOX}$(\mathcal{G}_k, \mathbf{X}, Y)$ $\to$ $\mathbf{O}^X$. \textbf{If} $\mathbf{X} \not\subseteq \mathbf{O}^X$, \textbf{return} not imitable.
\STATE Compute ancestral graph $\mathcal{G}_k^Y$. Compute Markov boundary $\mathrm{MB} \leftarrow \mathrm{Pa}^{+}(\mathbf{C}(\mathrm{Ch}^{+}(\mathbf{O}^X))) \setminus \mathbf{O}^X$ and boundary actions $\mathrm{BA} \leftarrow \{X_i \in \mathbf{X} \cap \mathbf{O}^X \mid \mathrm{ch}^+(X_i) \not\subseteq \mathbf{O}^X\}$ in $\mathcal{G}_k^Y$.
\FOR{each action $X_t$ in $\mathcal{G}_k$}
    \IF{$\mathbf{X}_t \in \mathrm{BA}$}
        \STATE $\mathbf{Z}_t^k \leftarrow (\mathrm{MB} \cup \mathrm{BA}) \cap \mathrm{before}(X_t)$
    \ELSE
        \STATE $\mathbf{Z}_t^k \leftarrow \varnothing$ \hfill \textit{// $X_t \notin \mathrm{An}(Y)$ in $\mathcal{G}_t'$; satisfies condition (2) of Def.~\ref{def:seq-pi-backdoor}}
    \ENDIF
\ENDFOR
\FOR{$t = 0, \dots, H-1$}
    \STATE $\mathbf{Z}_t^H \leftarrow \{(v, \tau) \in \mathbf{Z}_t^k \mid \tau \ge t - k\}$ \hfill \textit{// Clip to window of width $k$}
\ENDFOR
\STATE Build sliding window specification $S$: for each observed variable type $V$ appearing in $\bigcup_t \mathbf{Z}_t^k$, enumerate lags $\{-1, \dots, -k\}$ with dimension $d_V$.

\STATE \textbf{return} $\{\mathbf{Z}_t^H\}_{t=0}^{H-1}$, $S$.

\end{algorithmic}
\end{algorithm}

Algorithm~\ref{alg:feasible-pi-backdoor} implements this approximation in two stages. The first stage (Lines~1--10) solves the exact sequential $\pi$-backdoor on a short-horizon proxy graph $\mathcal{G}_k$ with horizon $k{+}1$. \textsc{FindOX}~\citep{kumor2021sequential} identifies the maximal admissible set $\mathbf{O}^X$; if $\mathbf{X} \not\subseteq \mathbf{O}^X$, the problem is not imitable. The algorithm then constructs per-action adjustment sets from the Markov boundary $\mathrm{MB}$ of $\mathbf{O}^X$ in the ancestral graph $\mathcal{G}_k^Y$ (the minimal observed set blocking $\mathbf{O}^X$ from all other variables) and the boundary actions $\mathrm{BA}$ (actions whose causal effect on $Y$ persists regardless of future actions) \citep[Definition~3.1, Lemma~3.2]{kumor2021sequential}: boundary actions condition on $(\mathrm{MB} \cup \mathrm{BA}) \cap \mathrm{before}(X_t)$, while non-boundary actions require no conditioning. Intuitively, in Example~\ref{ex:antmaze-scm} and Figure~\ref{fig:generic-causal-graph}, the Markov boundary selects recent instances of $\mathbf{Z}$ and $\mathbf{X}$ within the $k$-step window while instances of $\mathbf{W}$, a collider with no outgoing edges, is excluded from $\mathbf{O}^X$ and thus from any adjustment set.

The second stage (Lines~11-15) transfers the proxy-graph adjustment sets to the full horizon. By time-homogeneity, the sets $\{\mathbf{Z}_t^k\}$ exhibit a repeating pattern of relative lags; clipping each to a window of $k$ steps reduces dimensionality from $O(H)$ to $O(k)$. The output is a fixed-dimensional sliding window $S$ that specifies adjustment set lags and dimensions for each timestep, used to encode inputs for the imitator. Its correctness is guaranteed by the following theorem (Proof in Appendix~\ref{app:proof-windowed}).

\begin{theorem}[Correctness of Windowed Adjustment]
\label{thm:windowed-adjustment}
Let $\mathcal{G}_H$ be the causal diagram of an SCM unrolled over horizon $H$, and let Assumption~\ref{assm:bounded-confounding} hold with window size $k$. Let $\{\mathbf{Z}_t^k\}_{t=0}^{k}$ be adjustment sets constructed by Algorithm~\ref{alg:feasible-pi-backdoor} that satisfy the sequential $\pi$-backdoor criterion (Definition~\ref{def:seq-pi-backdoor}) for the proxy graph $\mathcal{G}_k$ with horizon $k+1$. Then the transferred sets $\{\mathbf{Z}_t^H\}_{t=0}^{H-1}$ returned by Algorithm~\ref{alg:feasible-pi-backdoor} satisfy the sequential $\pi$-backdoor criterion for $(\mathcal{G}_H, \mathbf{X}, Y)$. 
\end{theorem}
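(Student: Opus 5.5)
The argument is a locality-plus-isomorphism transfer. Fix an arbitrary action $X_t$ in $\mathcal{G}_H$. I want to show that either $X_t \notin \mathrm{An}_{\mathcal{G}_t'}(Y)$, or $(X_t \perp\!\!\!\perp Y \mid \mathbf{Z}_t^H)$ holds in $(\mathcal{G}_t')_{X_t}$, where $\mathcal{G}_t'$ is built from $\mathcal{G}_H$ with future actions' parents restricted to the transferred sets $\mathbf{Z}_j^H$. The plan has four steps.

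\textbf{Step 1: map into the proxy.} I map $X_t$ to a representative action $X_{\tau(t)}$ of the proxy $\mathcal{G}_k$, taking $\tau(t)=\min(t,k)$. By Assumption~\ref{assm:bounded-confounding}(ii), the window $(\mathbf{V}_{t-j},\dots,\mathbf{V}_t,X_t)$ is isomorphic to the corresponding window in $\mathcal{G}_k$ for every $j\le k$ (for $t<k$ the window is truncated identically in both graphs). The isomorphism $\phi_t$ shifts time indices by $t-\tau(t)$. By construction on Line~12, $\phi_t(\mathbf{Z}_t^H)=\mathbf{Z}_{\tau(t)}^k\cap\{\text{lags}\ge -k\}$. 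So I must first check that clipping removes nothing from $\mathbf{Z}_{\tau(t)}^k$. This holds because $\mathbf{Z}^k$ is drawn from $\mathrm{MB}\cup\mathrm{BA}$, which is contained in $\mathrm{Pa}^+(\mathbf{C}(\mathrm{Ch}^+(\mathbf{O}^X)))$. Assumption~\ref{assm:bounded-confounding}(i) then places every element of it no earlier than $k$ steps before the relevant node.

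\textbf{Step 2: reduce a blocking statement on $\mathcal{G}_H$ to the window.} Suppose there is an open path $p$ from $X_t$ to $Y$ given $\mathbf{Z}_t^H$ in $(\mathcal{G}_t')_{X_t}$. Working in the latent projection $\widetilde{\mathcal{G}}_H$, I decompose $p$ at the first node where it leaves the window $[t-k,t]$.
\begin{itemize}
\item \emph{Backward escape.} The path would have to traverse a bidirected edge or an edge into the window from a node earlier than $t-k$. This contradicts (i) applied to the C-component and extended parents of the window nodes: any such edge is incident to $\mathrm{pa}^+(\mathbf{V}_s)$ with $s\ge t-k$.
\item \emph{Forward escape.} The path moves into future timesteps. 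There, future actions $X_j$ have parents only in $\mathbf{Z}_j^H$. By time-homogeneity, the future segment of $\mathcal{G}_t'$ reproduces the structure of the proxy beyond $\tau(t)$, which has horizon $k+1$ so that one full window of future is represented. Any open continuation to $Y$ in $\mathcal{G}_H$ therefore induces, via repeated application of $\phi$ on overlapping windows, an open continuation in the proxy.
\end{itemize}
Concatenating the two pieces gives an open path from $X_{\tau(t)}$ to $Y$ given $\mathbf{Z}_{\tau(t)}^k$ in the proxy's manipulated graph. This contradicts the hypothesis that $\{\mathbf{Z}^k\}$ satisfies Definition~\ref{def:seq-pi-backdoor} on $\mathcal{G}_k$.

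\textbf{Step 3: the non-boundary case.} If $X_{\tau(t)}\notin\mathrm{BA}$, then $X_{\tau(t)}$ is not an ancestor of $Y$ in the proxy manipulated graph. A directed path from $X_t$ to $Y$ in $\mathcal{G}_t'$ would pass through future nodes. Chopping it into window-length segments and mapping each segment by the isomorphisms gives a directed path in the proxy, a contradiction. So condition (2) transfers.

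\textbf{Main obstacle.} The hardest step is the forward-escape argument in Step 2. Paths in $\mathcal{G}_H$ can wander across many windows before reaching the terminal $Y$, while the proxy has only $k+1$ steps. I would handle this by induction on the number of windows the path spans. The key claim is that the set of nodes at timestep $s+k$ reachable by an open path from a window-$s$ node is the shifted image of the same set in the proxy. Combining (ii) with the Markov-boundary property that $\mathrm{MB}$ separates $\mathbf{O}^X$ from the rest, this yields a "stationary" open-path relation that can be composed. Collapsing the composition back into a single proxy path is where care is needed, especially for colliders whose descendants lie in $\mathbf{Z}$ outside the window. If this fails, I would fall back to arguing via the Markov boundary directly: $Y$'s dependence on the past at time $t$ is screened off by $\mathrm{MB}$ restricted to the last $k$ steps.
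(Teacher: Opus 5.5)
\textbf{There is a genuine gap, and it sits exactly at the step you flag, the forward escape (your Step~3 has the same problem).}

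For $t \ge k$ you send $X_t$ to $X_{\tau(t)} = X_k$, which is the last action of the proxy. The proxy has no timesteps after $k$, so there is no ``future segment of the proxy beyond $\tau(t)$'' onto which the tail of $p$ could be mapped. Applying shifted isomorphisms $\phi$ to successive windows of the tail puts every piece into the same $k+1$ proxy timesteps. The images therefore do not concatenate into a single path of $(\mathcal{G}_k)_k'$. The induction you sketch for composing them is the whole content of the theorem, not a technicality.

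The missing idea is that the tail never needs to be mapped. Cut $p$ at its first crossing from the window $\{t-k,\dots,t\}$ into $\{t+1,\dots,H\}$.
\begin{enumerate}
\item Since every edge of the projected graph spans at most $k$ steps, this crossing happens at an interface node $I$, entered through some $V_q \in \mathrm{pa}^+(I)$ that lies in the window.
\item In the proxy, the terminal reward $Y_k$ attaches to the window through the same (shifted) extended-parent nodes.
\item So the window prefix of $p$, mapped by $\phi_t^{-1}$ and closed off with the corresponding edge into $Y_k$, is a path from $X_k$ to $Y_k$ in $((\mathcal{G}_k)_k')_{X_k}$.
\item The proxy $d$-separation blocks this path. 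Once you check that the collider status of $\phi_t^{-1}(V_q)$ matches that of $V_q$ on $p$, the block lies on the prefix and transfers back to $p$ via $\phi_t$.
\end{enumerate}
The non-ancestry case works the same way. No directed path $X_k \to \cdots \to Y_k$ means no directed path from $X_t$ into the interface, and the interface is a cut set between the window and $Y$.

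\textbf{Your dismissal of backward escapes is also invalid.} Assumption~\ref{assm:bounded-confounding}(i) bounds $\mathrm{pa}^+(\mathbf{V}_s)$ relative to $s$, not relative to $t$. A window node at timestep $s$ near $t-k$ may have parents or collider-connected neighbours as early as $s-k < t-k$. A path can therefore move to early window timesteps and then leave the window with no contradiction; the only edges (i) rules out are those touching slice $t$ itself.

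The paper handles this with a separate ``sealing'' argument. It uses the Markov-boundary construction of Lines~1--10 of Algorithm~\ref{alg:feasible-pi-backdoor}, not merely admissibility on the proxy:
\begin{itemize}
\item Every window node where such a segment re-enters and continues along a tail edge is either in $\mathbf{Z}_t^H$, so it blocks as a conditioned non-collider, or is $d$-separated from $X_t$ given $\mathbf{Z}_t^H$ inside the window.
\item Re-entries through colliders can only be chained finitely often before the path must pass such a transmitting node.
\end{itemize}
Without some use of the $\mathrm{MB}$ structure of this kind, your Step~2 fails even before the forward direction comes into play.
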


With windowed adjustment sets $\{\mathbf{Z}_t^H\}_{t=0}^{H-1}$ and specification $S$ in hand, the causal encoding $\mathbf{z}_t$ is constructed at each timestep: for each observed variable type $V$ appearing in the adjustment sets, we concatenate its values at lags $\{-1, \dots, -k\}$ relative to $t$. This representation comprises the state observation in both Causal SQIL and Causal IQ-Learn, making the full procedure from causal graph analysis to policy optimization accessible and feasible for arbitrarily long horizons.

\section{Experiments}
\label{sec:experiments}

We evaluate the proposed algorithms on confounded continuous-control locomotion environments derived from OGBench \citep{park2025ogbench}. Each environment is defined as an SCM to support the modeling of unobserved confounders. In Confounded AntMaze ($H{=}1000$), an 8-DoF ant navigates a maze under latent wind; the imitator observes a wind-affected compass $\mathbf{W}$ in place of orientation $\mathbf{O}$ (see Example~\ref{ex:antmaze-scm}). In Confounded HumanoidMaze ($H{=}2000$), a 21-DoF humanoid navigates under latent seismic tremors; the imitator observes a tremor-affected vibration sensor $\mathbf{W}$ in place of the hidden center-of-mass velocity $\mathbf{C}$. In both environments, causal methods exclude $\mathbf{W}$ from the adjustment set, while causally unaware methods condition on it and thereby fail under distributional shift. Full environment details, causal graphs, and visualizations are provided in Appendix~\ref{app:envs}.

We compare eight algorithms total, four causal (Causal BC~\citep{kumor2021sequential}, Causal GAIL~\citep{ruan2023causal}, Causal SQIL (ours), Causal IQ-Learn (ours)) and four causally unaware (BC~\citep{ross2011reduction}, GAIL~\citep{ho2016generative}, SQIL~\citep{reddy2020sqil}, IQ-Learn~\citep{garg2021iqlearn}), to isolate the contributions of causal adjustment and algorithmic choice. Expert policies are constructed via offline-to-online RL (BC and TD3 fine-tuning); full implementation details are in Appendix~\ref{app:hparams}.

\subsection{Results}
\label{sec:experiments-results}

Causal variants use Algorithm~\ref{alg:feasible-pi-backdoor} to compute their per-timestep observation using windowed causal adjustment. Non-causal variants condition on the full $\mathbf{V}^O$ at each timestep. All hyperparameters remain the same between causal and non-causal variants of the same algorithm.

\begin{table}[t]
\vspace{-0.15in}
\centering
\caption{Evaluation results on confounded tasks. Normalized $\mathbb{E}[Y]$ linearly shifts the worst-performing algorithm to $0$ due to the purely negative reward. Best non-expert result per task in \textbf{bold}. Standard errors are provided; unnormalized data can be found in Appendix~\ref{app:raw-data}.}
\label{tab:main-results}
\resizebox{\textwidth}{!}{
\begin{tabular}{llccccccccc}
\hline
& & \textbf{Expert} & \textbf{C-BC} & \textbf{C-GAIL} & \textbf{C-SQIL} & \textbf{C-IQ-Learn} & \textbf{BC} & \textbf{GAIL} & \textbf{SQIL} & \textbf{IQ-Learn} \\
\hline
\multirow{2}{*}{AntMaze-Medium}
& Norm.\ $\mathbb{E}[Y]$  & $271.4 {\scriptstyle \pm 87.5}$ & $250.1 {\scriptstyle \pm 100.9}$ & $239.5 {\scriptstyle \pm 113.6}$ & $\mathbf{275.8} {\scriptstyle \pm 85.7}$ & $257.1 {\scriptstyle \pm 101.3}$ & $0.0 {\scriptstyle \pm 119.2}$ & $10.4 {\scriptstyle \pm 112.2}$ & $4.7 {\scriptstyle \pm 119.1}$ & $2.8 {\scriptstyle \pm 120.8}$ \\
& Success rate\ (\%)       & $87.6 {\scriptstyle \pm 1.0}$ & $77.9 {\scriptstyle \pm 1.3}$ & $74.1 {\scriptstyle \pm 1.4}$ & $\mathbf{90.7} {\scriptstyle \pm 0.9}$ & $84.3 {\scriptstyle \pm 1.2}$ & $0.0 {\scriptstyle \pm 0.0}$ & $0.0 {\scriptstyle \pm 0.0}$ & $0.0 {\scriptstyle \pm 0.0}$ & $0.0 {\scriptstyle \pm 0.0}$ \\
\hline
\multirow{2}{*}{AntMaze-Large}
& Norm.\ $\mathbb{E}[Y]$  & $229.0 {\scriptstyle \pm 122.8}$ & $199.3 {\scriptstyle \pm 121.6}$ & $155.6 {\scriptstyle \pm 122.6}$ & $204.3 {\scriptstyle \pm 125.9}$ & $\mathbf{225.6} {\scriptstyle \pm 127.9}$ & $14.0 {\scriptstyle \pm 150.2}$ & $85.8 {\scriptstyle \pm 132.9}$ & $0.0 {\scriptstyle \pm 151.5}$ & $16.1 {\scriptstyle \pm 153.5}$ \\
& Success rate\ (\%)       & $55.9 {\scriptstyle \pm 1.6}$ & $39.8 {\scriptstyle \pm 1.5}$ & $7.3 {\scriptstyle \pm 0.8}$ & $45.0 {\scriptstyle \pm 1.6}$ & $\mathbf{58.9} {\scriptstyle \pm 1.6}$ & $0.0 {\scriptstyle \pm 0.0}$ & $0.0 {\scriptstyle \pm 0.0}$ & $0.0 {\scriptstyle \pm 0.0}$ & $0.0 {\scriptstyle \pm 0.0}$ \\
\hline
\multirow{2}{*}{HumanoidMaze-Medium}
& Norm.\ $\mathbb{E}[Y]$  & $224.8 {\scriptstyle \pm 218.4}$ & $92.0 {\scriptstyle \pm 172.7}$ & $0.5 {\scriptstyle \pm 161.9}$ & $\mathbf{192.2} {\scriptstyle \pm 217.2}$ & $158.9 {\scriptstyle \pm 203.8}$ & $69.4 {\scriptstyle \pm 169.1}$ & $0.0 {\scriptstyle \pm 162.5}$ & $41.5 {\scriptstyle \pm 157.0}$ & $50.1 {\scriptstyle \pm 169.7}$ \\
& Success rate\ (\%)       & $33.8 {\scriptstyle \pm 1.5}$ & $10.4 {\scriptstyle \pm 1.0}$ & $0.0 {\scriptstyle \pm 0.0}$ & $\mathbf{24.7} {\scriptstyle \pm 1.4}$ & $19.1 {\scriptstyle \pm 1.2}$ & $5.4 {\scriptstyle \pm 0.7}$ & $0.0 {\scriptstyle \pm 0.0}$ & $0.1 {\scriptstyle \pm 0.1}$ & $2.4 {\scriptstyle \pm 0.5}$ \\
\hline
\multirow{2}{*}{HumanoidMaze-Large}
& Norm.\ $\mathbb{E}[Y]$  & $136.4 {\scriptstyle \pm 212.9}$ & $125.1 {\scriptstyle \pm 227.9}$ & $0.8 {\scriptstyle \pm 210.2}$ & $\mathbf{139.3} {\scriptstyle \pm 226.0}$ & $90.0 {\scriptstyle \pm 220.5}$ & $93.0 {\scriptstyle \pm 204.6}$ & $0.0 {\scriptstyle \pm 208.9}$ & $80.7 {\scriptstyle \pm 194.9}$ & $70.8 {\scriptstyle \pm 193.3}$ \\
& Success rate\ (\%)       & $7.0 {\scriptstyle \pm 0.8}$ & $\mathbf{8.0} {\scriptstyle \pm 2.7}$ & $0.0 {\scriptstyle \pm 0.0}$ & $\mathbf{8.0} {\scriptstyle \pm 2.7}$ & $3.0 {\scriptstyle \pm 1.7}$ & $5.0 {\scriptstyle \pm 2.2}$ & $0.0 {\scriptstyle \pm 0.0}$ & $0.0 {\scriptstyle \pm 0.0}$ & $0.0 {\scriptstyle \pm 0.0}$ \\
\hline
\end{tabular}
}
\end{table}

\paragraph{All non-causal methods fail catastrophically.}
As seen in Table~\ref{tab:main-results}, non-causal algorithms consistently fail across all environments: standard BC, GAIL, SQIL, and IQ-Learn achieve $0\%$ or near-$0\%$ success rates with low $\mathbb{E}[Y]$ across all tasks. This confirms that the fundamental failure is not a consequence of any particular learning paradigm but of the decision to condition on all observed variables, and no amount of temporal-difference learning or adversarial training can overcome a fundamentally misspecified conditioning set. The improvement of non-causal algorithms in HumanoidMaze tasks from AntMaze tasks, despite the former being more difficult, can be attributed to the less disruptive confounding effect of seismic tremors than wind fields on the state dynamics.

\paragraph{Causal adjustment is necessary but not sufficient for scaling.}
Although causal adjustment alone leads to significant improvements in most algorithms (Table~\ref{tab:main-results}), Causal BC and especially Causal GAIL see substantial performance decreases as the task horizon and dimensionality increases, with Causal GAIL collapsing to about the level of non-causal GAIL by the HumanoidMaze-Medium task. Causal SQIL and Causal IQ-Learn scale more gracefully despite increasing dimensionality and horizon, and at times surpassing the expert's performance.

\begin{figure}[t]
    \centering
    \begin{subfigure}[t]{0.24\textwidth}
        \centering
        \includegraphics[width=\linewidth]{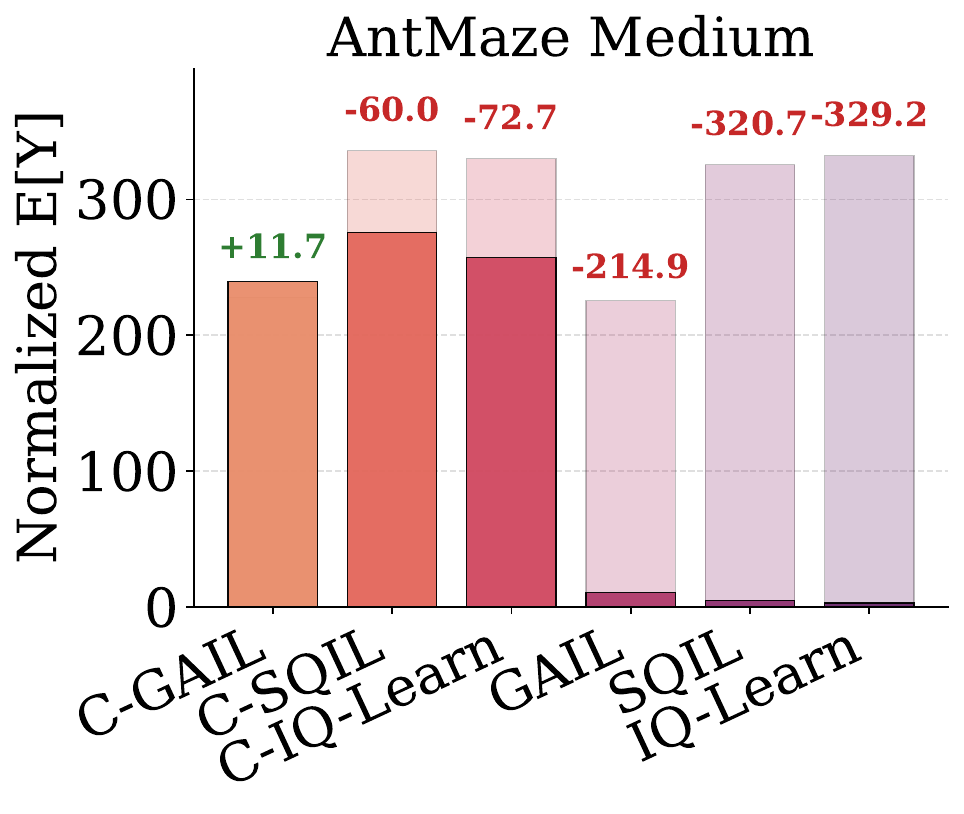}
    \end{subfigure}
    \hfill
    \begin{subfigure}[t]{0.24\textwidth}
        \centering
        \includegraphics[width=\linewidth]{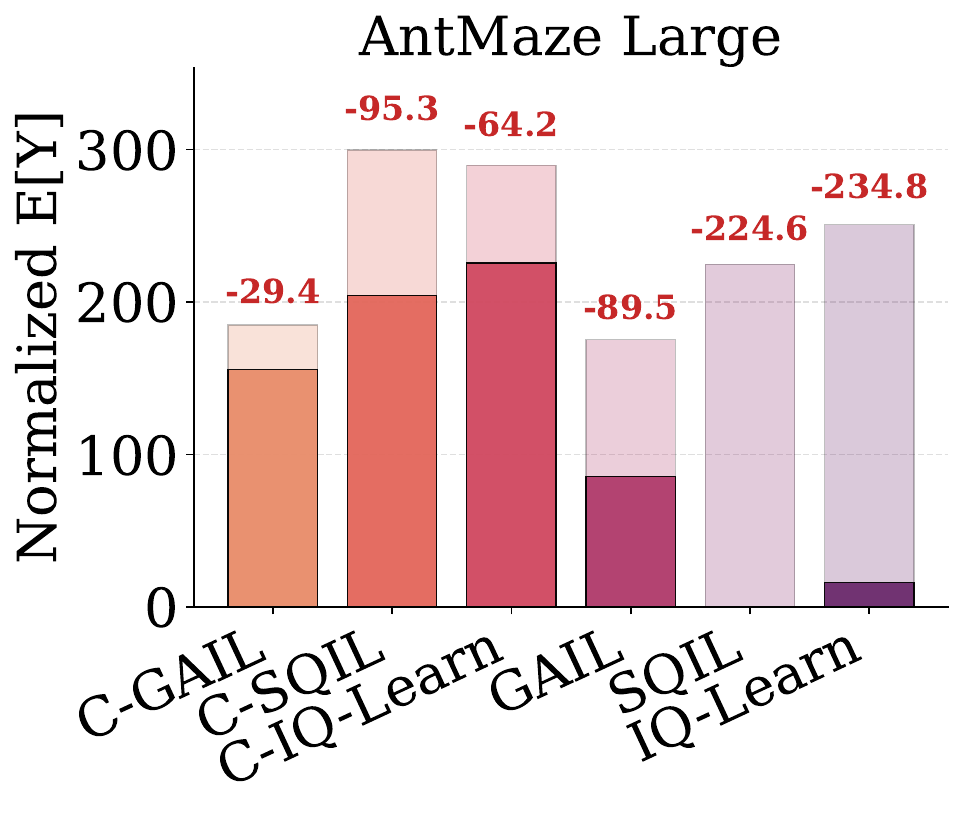}
    \end{subfigure}
    \hfill
    \begin{subfigure}[t]{0.24\textwidth}
        \centering
        \includegraphics[width=\linewidth]{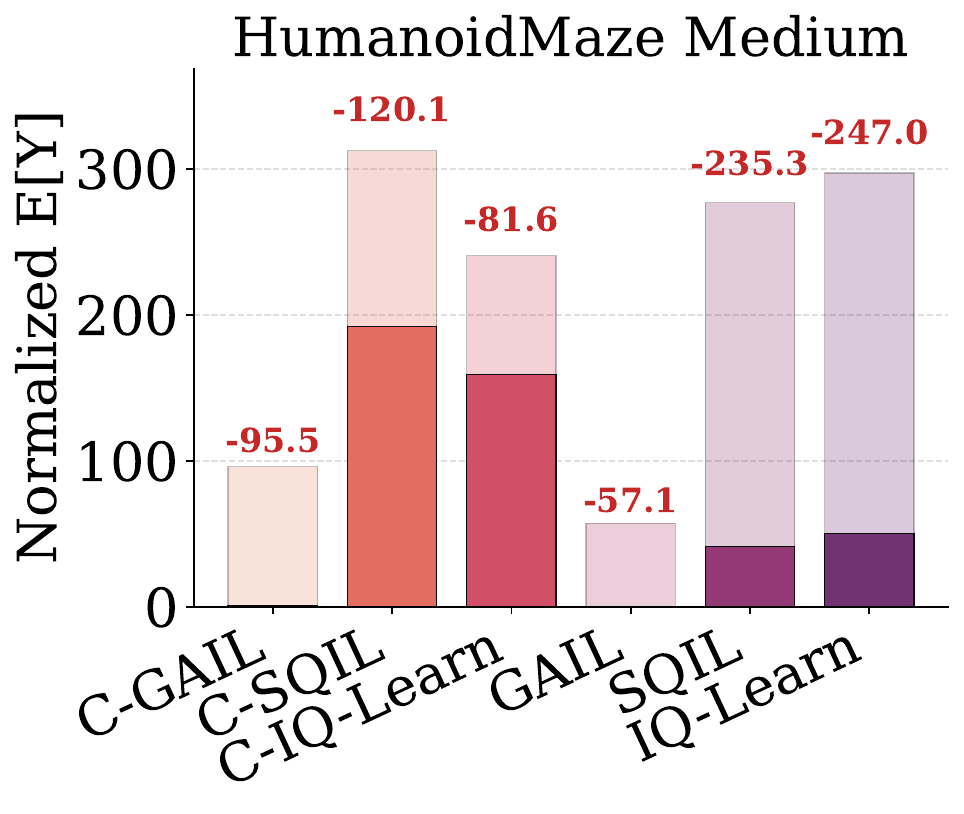}
    \end{subfigure}
    \hfill
    \begin{subfigure}[t]{0.24\textwidth}
        \centering
        \includegraphics[width=\linewidth]{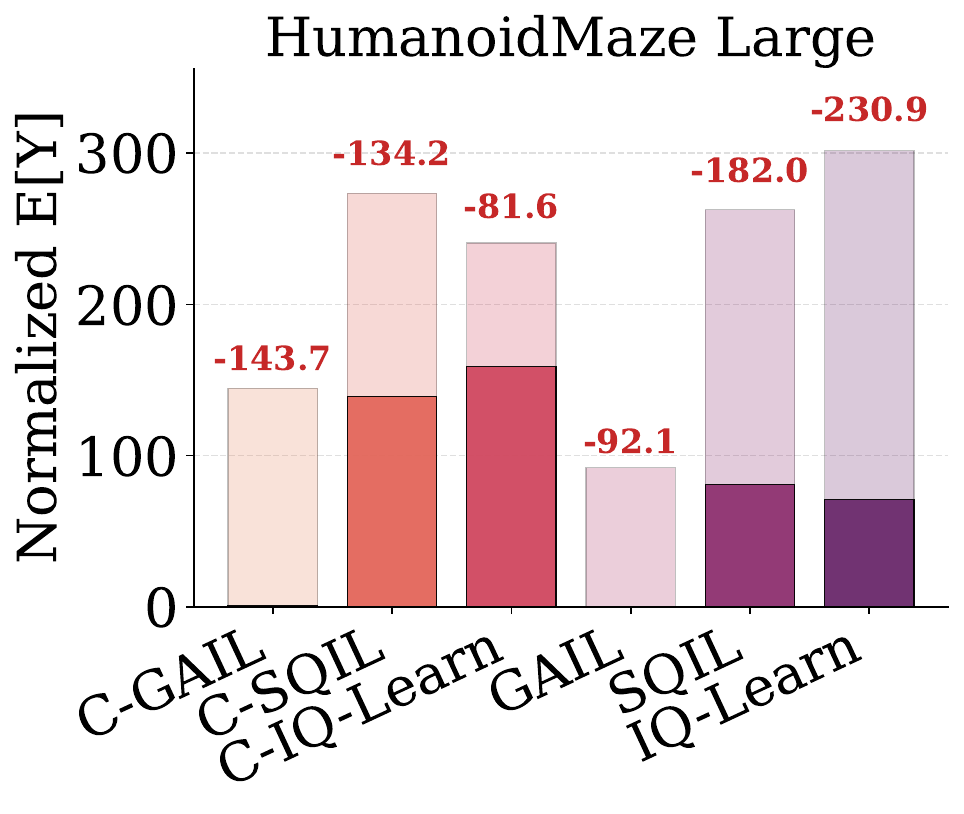}
    \end{subfigure}
    \caption{Evaluation return during training and runtime for GAIL, SQIL, and IQ-Learn (causal and non-causal variants). During training, both variants of each algorithm achieve comparable returns. The gap emerges only at runtime (Table~\ref{tab:main-results}) due to unobserved confounders causing distributional changes from where the expert demonstrations were collected, which affects only non-causal methods. Hyperparameters and other training metrics are provided in detail in Appendix~\ref{app:hparams}.}
    \label{fig:training-curves-main}
\end{figure}

\paragraph{Confounding cannot be revealed by in-distribution evaluation.}
Figure~\ref{fig:training-curves-main} reveals that during training, when the imitator operates under the same confounder distribution $P(\mathbf{U})$ as the expert, non-causal variants of each algorithm achieve nearly identical evaluation returns to causal variants, and in some cases appear better since conditioning on the spurious proxy $\mathbf{W}$ provides additional predictive signal that is useful when under the training $P(\mathbf{U)}$. At runtime, when $P(\mathbf{U})$ shifts, every non-causal method collapses to near-$0\%$ success while the causal methods retain significantly more of the performance. Thus, confounding cannot be diagnosed during training.

\begin{wrapfigure}[10]{r}{0.35\textwidth}
    \centering
    \vspace{-25pt}
    \includegraphics[width=\linewidth]{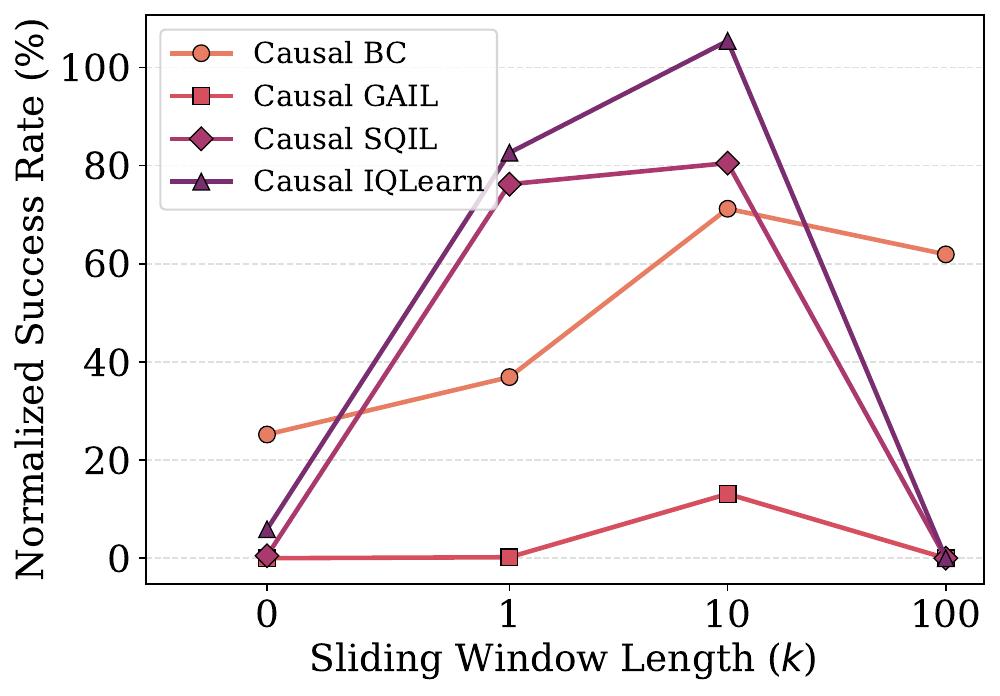}
    \caption{Sensitivity of causal algorithms to $k$ in Algorithm~\ref{alg:feasible-pi-backdoor} on Confounded AntMaze-Large.}
    \label{fig:window-ablation}
\end{wrapfigure}

\paragraph{Windowed approximation is necessary.}
Figure~\ref{fig:window-ablation} demonstrates that a moderate $k \in [1,10]$ is ideal for imitation, whereas $k=0$ (pure Markov) and $k=100$ see significant drops in most methods. This implies that while the environments require sequential decision-making capabilities, the relevant causal effects on any $X_t$ can be captured within a few timesteps; meanwhile, large $k$ bloats the representation. Interestingly, at $k=100$, Q-learning methods collapse due to bootstrapping instability in high-dimensional state spaces while BC remains robust due to its supervised learning approach.

\paragraph{The causality gap dominates the algorithmic gap.}
In every task, the causal variant of each algorithm outperforms its non-causal counterpart, with the exception of Causal GAIL and non-causal GAIL both achieving $0\%$ success rate in HumanoidMaze tasks (Table~\ref{tab:main-results}). Thus, causal reasoning remains the primary determinant of success in confounded environments. The algorithm choice becomes the secondary yet still substantial factor that determines how well a causal method performs.

\section{Conclusion}
\label{sec:conclusions}

We introduce Causal SQIL and Causal IQ-Learn, off-policy CIL algorithms that combine sequential $\pi$-backdoor adjustment with soft Q-learning objectives, and windowed causal adjustment that is tractable for long-horizon control. Our experiments demonstrate that causal adjustment is necessary for robust policy learning and that off-policy Q-learning is necessary for scaling. While our approach relies on knowledge of the causal diagram (see Appendix~\ref{sec:limitations} for a full discussion on limitations), it is algorithm-agnostic. Thus, looking forward, it can be composed with expressive policy classes (e.g. diffusion policies~\citep{chi2023diffusion} or action-chunking transformers~\citep{zhao2023aloha}), high-dimensional sensory inputs where confounding manifests through pixel-level corruptions~\citep{li2025confounding, li2026causal, juliani2026confounding}, and any setting in which the expert and imitator operate under different sensor configurations, such as tele-operation, sim-to-real transfer, and multi-agent imitation. As unobserved confounding is the norm in real-world deployment, integrating causal reasoning with scalable policy learning is essential for trustworthy imitation in safety-critical domains.

\section*{Acknowledgements}
\label{sec:acknowlegements}

This research is supported in part by the NSF, ONR, AFOSR, DoE, Amazon, JP Morgan, and The Alfred P. Sloan Foundation.

\bibliography{main}
\bibliographystyle{rlj}

\beginSupplementaryMaterials


\appendix
\appendix
\section{Related Work}
\label{app:related-work}

\paragraph{Imitation learning.}
Imitation learning (IL) trains policies from expert demonstrations without access to a reward signal. Behavioral cloning (BC) reduces IL to supervised learning~\citep{pomerleau1989alvinn}, but suffers from compounding errors due to covariate shift~\citep{ross2011reduction}. Interactive methods such as DAgger~\citep{ross2011reduction} mitigate this by querying the expert on-policy, though at the cost of requiring an interactive demonstrator. Inverse reinforcement learning (IRL) recovers a reward function rationalizing expert behavior~\citep{abbeel2004apprenticeship, ziebart2008maxent}; adversarial formulations such as GAIL~\citep{ho2016generative} and AIRL~\citep{fu2018learning} cast IRL as occupancy-measure matching, avoiding explicit reward modeling but relying on on-policy rollouts and adversarial training that scale poorly to long horizons. More recently, off-policy value-based methods have achieved strong results on continuous-control benchmarks: SQIL~\citep{reddy2020sqil} reformulates IL as soft Q-learning with binary rewards, and IQ-Learn~\citep{garg2021iqlearn} learns a Q-function whose implicit reward is consistent with expert data. Both build on the SAC framework~\citep{haarnoja2018sac} and propagate the expert signal across trajectories via temporal-difference learning, making them substantially more effective than BC or GAIL on long-horizon tasks. Modern IL has also scaled to real-world robotics through expressive policy classes such as diffusion policies~\citep{chi2023diffusion} and action-chunking transformers~\citep{zhao2023aloha}. All of these methods, however, assume that the expert and imitator share the same observation space, i.e., No Unobserved Confounders (NUC). The present work retains the scalability advantages of off-policy soft Q-learning while relaxing NUC via causal adjustment.

\paragraph{Formal objectives of prior CIL and IL methods.}
For reference, we provide the formal objectives of the methods discussed in the main text. Causal BC~\citep{kumor2021sequential} directly clones the expert's conditional policy over the admissible adjustment variables, $\hat{\pi}_t(x_t \mid \mathbf{z}_t) = P(X_t \mid \mathbf{Z}_t)$, via supervised learning on expert demonstrations. Causal GAIL~\citep{ruan2023causal} extends the framework to inverse reinforcement learning by matching expert and imitator occupancy measures over the adjusted variables:
\begin{equation}
\min_{\pi \sim \mathcal{S}} \max_D \;
\mathbb{E}\!\left[\log D(\mathbf{z}_t, x_t)\right]
+
\mathbb{E}_\pi\!\left[\log(1 - D(\mathbf{z}_t, x_t))\right].
\end{equation}
In non-causal settings, SQIL~\citep{reddy2020sqil} assigns a fixed reward of $1$ to expert transitions and $0$ to policy transitions, and then applies SAC on the combined replay buffer. The SAC critic minimizes the soft Bellman residual
\begin{equation}
\label{eq:sqil-bellman}
y = r + \gamma \Big(\min_{j=1,2} Q_{\bar{\theta}_j}(s', a') - \alpha \log \pi_\phi(a' \mid s')\Big), \qquad
\mathcal{L}_Q = \mathbb{E}\Big[\big(Q_\theta(s,a) - y\big)^2\Big],
\end{equation}
with $a' \sim \pi_\phi(\cdot \mid s')$. The actor maximizes the entropy-regularized objective
\begin{equation}
\label{eq:sqil-obj}
\mathcal{L}_\pi = \mathbb{E}_{s}\Big[\alpha \log \pi_\phi(a \mid s) - \min_{j=1,2} Q_{\theta_j}(s, a)\Big], \quad a \sim \pi_\phi(\cdot \mid s).
\end{equation}
IQ-Learn~\citep{garg2021iqlearn} learns a Q-function whose induced reward is consistent with expert behavior. The critic enforces the soft Bellman equation on expert data,
\begin{equation}
\label{eq:iql-bellman}
\mathcal{L}_{\mathrm{expert}} = \mathbb{E}_{(s,a,s') \sim \mathcal{D}_{\mathrm{exp}}}
\Big[\big(Q_\theta(s, a) - \gamma \, V_{\bar{\theta}}(s')\big)^2\Big],
\end{equation}
where $V(s) = \log \mathbb{E}_{a \sim \pi}[\exp Q(s,a)]$, and applies a policy-consistency regularizer on policy data,
\begin{equation}
\label{eq:iql-reg}
\mathcal{L}_{\mathrm{reg}} = \mathbb{E}_{(s,a) \sim \pi}
\Big[\big(\log \pi_\phi(a \mid s) - Q_\theta(s, a) + V_\theta(s)\big)^2\Big].
\end{equation}
The combined loss $\mathcal{L}_Q = \mathcal{L}_{\mathrm{expert}} + \lambda \, \mathcal{L}_{\mathrm{reg}}$ trains the critic, while the actor uses the same objective as above. Despite their scalability advantages, these methods assume fully observed (i.e., unconfounded) environments; using them in the presence of latent confounders leads to biased policies.

\paragraph{Causal imitation learning.}
Causal imitation learning (CIL) leverages structural causal knowledge to approximate the expert policy in the presence of unobserved confounding.
\citet{zhang2020causal} introduces the $\pi$-backdoor criterion, a complete graphical condition for determining policy imitability from observational data when the expert and imitator have different sensory inputs.
\citet{kumor2021sequential} extend this to sequential decision-making with the sequential $\pi$-backdoor criterion and the \textsc{FindOX} algorithm for constructing per-timestep adjustment sets.
\citet{ruan2023causal} develop CIL via inverse reinforcement learning, and \citet{ruan2024partial} introduce a partial-identification approach that can enable the imitator to surpass expert performance.
In practice, these methods have been instantiated as Causal BC and Causal GAIL, both of which assume access to a $\pi$-backdoor admissible scope.
Despite their theoretical appeal, existing CIL algorithms have remained restricted to low-dimensional, short-horizon domains:
Causal BC inherits the covariate-shift limitations of behavioral cloning, Causal GAIL's reliance on on-policy rollouts and adversarial training leads to sample and stability challenges at scale, and the adjustment sets produced by \textsc{FindOX} grow linearly with the horizon, making exact computation infeasible in long-horizon settings.
Our work addresses all three limitations by combining the causal adjustment framework with off-policy soft Q-learning objectives and introducing a windowed approximation that reduces the adjustment set computation to a fixed-size sliding window.

\paragraph{Confounding, causal confusion, and spurious correlations in IL.}
A related line of work studies the problem of spurious correlations in IL without assuming access to a causal graph.
\citet{dehaan2019causal} identify ``causal confusion,'' showing that conditioning on non-causal features can degrade imitation performance, and propose targeted interventions to select the correct causal model.
\citet{ortega2021delusions} formalize ``delusions'' in sequence models, demonstrating that treating past actions as observations rather than interventions leads to incorrect inference.
Variations of this problem have been studied under different names: as feedback-driven covariate shift~\citep{spencer2021feedback}, as the copycat problem in BC from observation histories~\citep{wen2020copycat}, as confounding in driving settings~\citep{codevilla2019exploring}, and as deconfounding via initial-state interventions~\citep{pfrommer2023initial}.
These works diagnose the problem and propose heuristic or environment-specific solutions. In contrast, the CIL framework this paper builds on provides a complete, non-parametric graphical criterion for determining imitability and identifying the correct adjustment set.

Several works address confounded IL through mechanisms other than graphical backdoor adjustment.
\citet{swamy2022causal} apply instrumental variable (IV) regression to handle temporally correlated noise in expert actions, proposing DoubIL and ResiduIL.
\citet{swamy2022sequence} prove that on-policy sequence models can asymptotically recover expert behavior when the expert observes privileged information, while showing that off-policy methods ``latch'' onto confounded features---a failure mode that directly motivates the causal adjustment layer in our off-policy algorithms.
\citet{vuorio2024deconfounded} train a variational inference model to infer the expert's latent information and learn a latent-conditional policy.
\citet{shao2025dmlil} propose DML-IL, a unifying framework that reformulates causal IL as a conditional moment restriction problem solved via IV regression, handling both expert-observable and expert-unobservable confounders.
\citet{zeng2025confounded} similarly leverage IVs for confounded sequential IL.
These IV and latent-inference approaches typically assume additive or temporally bounded confounding for instrument validity, or require ergodic dynamics for latent inference; they do not require a known causal graph but trade this for stronger distributional assumptions. Our approach instead assumes a known causal graph and applies non-parametric graphical adjustment, which is complementary: it provides exact deconfounding guarantees when the graph is available, and our windowed approximation makes it tractable at scale.

\paragraph{Robust imitation learning without causal structure.}
A complementary line of work improves the robustness of imitation policies to distribution shift without positing a causal model, and it is natural to ask whether such methods could substitute for causal adjustment. These methods are best organized by the shift they target. A first family targets the covariate shift induced by the imitator's compounding errors: DAgger~\citep{ross2011reduction} queries the expert on learner-visited states, DART~\citep{laskey2017dart} injects noise into the expert's controls during collection so that demonstrations cover the learner's error distribution, and \citet{chang2021mitigating} handle the offline case by exploiting a supplementary dataset with partial coverage through model-based pessimism. A second family targets covariate shift already present in the demonstrations: distribution-matching methods~\citep{kim2022demodice} align the imitator's stationary distribution with that of the data, while distributionally robust formulations optimize the worst-case imitation loss over an uncertainty set of state distributions around the data: a total-variation ball in DR-BC~\citep{panaganti2023distributionally}, and, in DrilDICE~\citep{seo2024drildice}, an $f$-divergence ball intersected with the set of stationary distributions satisfying the Bellman flow constraint, which handles demonstrations collected from arbitrary non-stationary state distributions. Related robust formulations address noisy expert actions~\citep{bashiri2021distributionally}, suboptimal or corrupted demonstrations~\citep{sasaki2021behavioral, wu2019imitation, xu2022discriminator}, and variations in the environment dynamics~\citep{chae2022robust}; \citet{tennenholtz2021covariate} intertwine this literature with ours by studying covariate shift of latent confounders themselves. The unifying structure of the covariate-shift family is that each method changes where the supervised imitation loss is evaluated, whether by reweighting states, enlarging coverage, or optimizing a worst case over state distributions, while fixing the observational conditional of the expert's action given the imitator's observations. This is explicit in \citet{seo2024drildice}, whose problem statement assumes that the training and target distributions differ only in their state marginals while sharing the same state-conditional expert action everywhere.

\paragraph{Orthogonality of confounding and covariate-shift robustness.}
Distribution shift in sequential decision-making arises from distinct causes (see \citet{bareinboim2016fusion} for a broader treatment of the biases underlying generalization): (1) non-overlap, where the imitator visits state-action regions unsupported by the data; (2) confounding bias, where unobserved variables jointly influence observations, actions, and outcomes, so that the observational conditional differs from the interventionally correct policy; and (3) structural change between the training and deployment environments. Covariate-shift-robust IL addresses cause (1) (and dynamics-robust variants a form of (3)) under the assumption that the per-observation conditional is invariant. However, cause (2) violates that invariance. When unobserved confounders shift at deployment, the conditional $P(x \mid \mathbf{v}^O)$ that was correct on the demonstrations becomes the wrong policy, and no reweighting of where that conditional is fit can repair it, because the bias lives inside the conditional rather than in the marginal it is averaged over. This mirrors the situation in offline RL, where conservatism toward out-of-distribution actions cannot correct value estimates biased by confounding~\citep{li2025confounding}. This is seen in the following instance, in which the covariate-shift machinery is provably inert.

\begin{example}[Confounded Route Choice]
\label{ex:orthogonality}
Consider a single-stage SCM with binary variables. A goal lies on side $Z \sim \mathrm{Bern}(1/2)$ of a junction, and the binary actuation $X$ selects a route $A = X \oplus Z$ relative to the goal side: route $a{=}0$ is exposed and succeeds only in calm conditions, $P(Y{=}1 \mid a{=}0, u) = 1 - u$, while route $a{=}1$ is sheltered and succeeds with probability $3/4$ regardless, $P(Y{=}1 \mid a{=}1, u) = 3/4$. The wind $U \sim \mathrm{Bern}(p)$ is exogenous and unobserved by all agents, with $p_{\mathrm{tr}} = 1/4$ during demonstration collection. The expert observes its orientation $O \sim \mathrm{Bern}(2/3)$ in addition to $Z$; the imitator does not observe $O$ but instead carries a compass $W = O \oplus U$ whose needle is deflected by the wind, so that $O \to W \gets U$ is a collider with no outgoing edges, exactly as in Figure~\ref{fig:generic-causal-graph}. Under $p_{\mathrm{tr}} = 1/4$ both routes have expected reward $3/4$ for every $(z, o)$, so the expert is optimal over its information set and, being indifferent, takes the route it is facing: $X = Z \oplus O$. The induced diagram has edges $Z \to X$, $Z \to Y$, $X \to Y$, $O \to X$, $O \to W$, $U \to W$, $U \to Y$; its latent projection onto $\{Z, W, X, Y\}$ has directed edges $Z \to X$, $Z \to Y$, $X \to Y$ and bidirected edges $X \leftrightarrow W$, $W \leftrightarrow Y$. Applying Definition~\ref{def:seq-pi-backdoor}, the set $\{Z\}$ is $\pi$-backdoor admissible; $\{Z, W\}$ is not, since conditioning on the collider $W$ opens $X \leftrightarrow W \leftrightarrow Y$; and $\varnothing$ is not, since $X \gets Z \to Y$ remains open.

The demonstrations $(z, w, x) \sim P_{\mathrm{tr}}$ have full support (the smallest cell has probability $1/24$) and are drawn i.i.d.\ from the expert's own occupancy. There is thus no non-overlap, no compounding error as the horizon is one, and no mismatch between the dataset and the expert's state distribution as the shift targeted by \citet{seo2024drildice} is intentionally absent. Nevertheless, the compass is genuinely informative in-distribution,
\[
\begin{aligned}
    P_{\mathrm{tr}}(X = z \mid z, W{=}0) = \tfrac{(1/3)(3/4)}{(1/3)(3/4) + (2/3)(1/4)} = \tfrac{3}{5}, \\
    P_{\mathrm{tr}}(X = 1{-}z \mid z, W{=}1) = \tfrac{(2/3)(3/4)}{(2/3)(3/4) + (1/3)(1/4)} = \tfrac{6}{7},
\end{aligned}
\]

so the loss-minimizing deterministic imitator over $(Z, W)$ follows the compass, $\hat{\pi}_n(z, w) = z \oplus w$ (i.e., $a = w$), while the adjusted conditional over $\{Z\}$ is $\pi_c(x \mid z) = P_{\mathrm{tr}}(x \mid z)$, which takes the sheltered route with probability $P(O{=}1) = 2/3$, and its deterministic counterpart $\hat{\pi}_c$ always takes it. At deployment the wind picks up, $p_{\mathrm{te}} = 3/4$. The compass-following imitator now takes the exposed route precisely when the wind is blowing ($P(a{=}0 \mid U{=}1) = P(W{=}0 \mid U{=}1) = 2/3$) and collects zero reward there, whereas the adjusted policies are indifferent to the compass and unaffected by its miscalibration. Table~\ref{tab:orthogonality-values} reports the exact interventional values. Two features mirror our main experiments: in-distribution, the confounded policies trail the expert by margins comparable to evaluation noise ($0.721$ and $0.688$ against $0.750$), so the bias is nearly invisible during training, as in Figure~\ref{fig:training-curves-main}; and the deterministic adjusted policy surpasses the fixed expert at deployment, as causal methods occasionally do in Table~\ref{tab:main-results} \citep[cf.][]{ruan2024partial}. By contrast, the stochastic adjusted policy $\pi_c$ attains the expert's interventional value under every deployment wind distribution: the expert's action satisfies $X = Z \oplus O$ with $O \perp (Z, U)$, so the expert and $do(\pi_c)$ induce the same joint over $(Z, X)$ with $X \perp U \mid Z$ in both, and since the parents of $Y$ are $\{X, Z, U\}$, the two interventional distributions of $Y$ coincide for every $P(U)$.
$\hfill \blacksquare$
\end{example}

\begin{table}[H]
\centering
\caption{Interventional values $\mathbb{E}[Y \mid do(\pi)]$ in Example~\ref{ex:orthogonality} under the training wind ($p_{\mathrm{tr}} = 1/4$) and the deployment wind ($p_{\mathrm{te}} = 3/4$). ``Robust-BC output'' is the deterministic policy returned by BC and by every covariate-shift-robust method in the class of Proposition~\ref{prop:reweighting}. All values are exact; the general closed forms in $s = P(U{=}1)$ are shown for reference. Every entry is reproduced by exhaustive enumeration.}
\label{tab:orthogonality-values}
\resizebox{\textwidth}{!}{
\begin{tabular}{llccc}
\hline
\textbf{Policy} & \textbf{Conditions on} & $\mathbb{E}[Y \mid do(\pi)]$, any $s$ & \textbf{Train} ($s{=}\tfrac14$) & \textbf{Deploy} ($s{=}\tfrac34$) \\
\hline
Expert ($X = Z \oplus O$) & $\{Z, O\}$ & $\tfrac12 + \tfrac{1-s}{3}$ & $3/4 = 0.750$ & $7/12 \approx 0.583$ \\
$\pi_c(x \mid z) = P_{\mathrm{tr}}(x \mid z)$ & $\{Z\}$ & $\tfrac12 + \tfrac{1-s}{3}$ & $3/4 = 0.750$ & $7/12 \approx 0.583$ \\
$\hat{\pi}_c$ (deterministic, sheltered) & $\{Z\}$ & $\tfrac34$ & $3/4 = 0.750$ & $3/4 = 0.750$ \\
$\pi_n(x \mid z, w) = P_{\mathrm{tr}}(x \mid z, w)$ & $\{Z, W\}$ & $(1{-}s)\tfrac{173}{210} + s\tfrac{87}{210}$ & $101/140 \approx 0.721$ & $31/60 \approx 0.517$ \\
$\hat{\pi}_n$ (Robust-BC output: $a = w$) & $\{Z, W\}$ & $(1{-}s)\tfrac{5}{6} + s\tfrac{1}{4}$ & $11/16 \approx 0.688$ & $\mathbf{19/48 \approx 0.396}$ \\
\hline
\end{tabular}
}
\end{table}

\begin{proposition}[State reweighting cannot remove confounding bias]
\label{prop:reweighting}
Consider the demonstration distribution $P_{\mathrm{tr}}$ of Example~\ref{ex:orthogonality}; any single-stage instance with full support over the imitator's observations suffices. For a deterministic policy $\pi$ over observations $(z, w)$, let $C_\pi(z, w) := \mathbb{E}_{x \sim P_{\mathrm{tr}}(\cdot \mid z, w)}[\ell(\pi(z, w), x)]$ denote the expected loss against the sampled expert actions, with $\ell$ the 0--1 loss. Then $\hat{\pi}_n(z, w) = \arg\max_x P_{\mathrm{tr}}(x \mid z, w)$ minimizes $\mathbb{E}_{(z,w) \sim d}[C_\pi(z, w)]$ simultaneously for every distribution $d$ over observations. Consequently, $\hat{\pi}_n$ attains the minimum of every objective formed by nonnegatively aggregating the per-observation losses $\{C_\pi(z, w)\}$: including any supremum or infimum over a family of state weightings, with or without an $f$-divergence regularizer,
\[
\min_\pi \; \sup_{d \in \mathcal{Q}} \; \mathbb{E}_{d}\!\left[C_\pi\right] - \alpha D_f(d \,\Vert\, d_D),
\]
for every uncertainty set $\mathcal{Q}$, every convex generator $f$, and every $\alpha \geq 0$. This class contains behavioral cloning, adversarially weighted BC, DR-BC~\citep{panaganti2023distributionally}, OptiDICE-BC~\citep{lee2021optidice}, and DrilDICE~\citep{seo2024drildice}, which therefore all return the compass-following policy of Example~\ref{ex:orthogonality} and its deployment value of $19/48$. This also holds for the stochastic conditional $\pi_n(x \mid z, w) = P_{\mathrm{tr}}(x \mid z, w)$ under the log loss.
\end{proposition}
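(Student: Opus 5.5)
The argument reduces the whole class of objectives to a pointwise statement. For a deterministic policy, the per-observation loss $C_\pi(z,w)$ depends on $\pi$ only through the single value $\pi(z,w)$. This is because the conditional $P_{\mathrm{tr}}(\cdot \mid z,w)$ is fixed by the demonstrations and is not altered by any reweighting of the observation marginal. Under the 0--1 loss,
\[
C_\pi(z,w) = 1 - P_{\mathrm{tr}}(\pi(z,w) \mid z,w),
\]
so $\hat{\pi}_n(z,w) = \arg\max_x P_{\mathrm{tr}}(x \mid z,w)$ minimizes $C_\pi(z,w)$ at every observation simultaneously. Full support guarantees that every conditional is well defined, so $\hat{\pi}_n$ is defined everywhere. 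In Example~\ref{ex:orthogonality} it equals $z \oplus w$, by the computed values $3/5$ and $6/7$.

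First I will show that $\hat{\pi}_n$ minimizes $\mathbb{E}_d[C_\pi] = \sum_{(z,w)} d(z,w)\,C_\pi(z,w)$ for every $d$. This is immediate: it is a nonnegative combination of terms, each minimized by $\hat{\pi}_n$, so $\mathbb{E}_d[C_{\hat{\pi}_n}] \le \mathbb{E}_d[C_\pi]$ for all $\pi$ and all $d$. Next I will lift this to the general aggregation. Let $\Phi(\{C_\pi(z,w)\})$ be any functional that is monotone nondecreasing in each coordinate. This covers $\sup_{d\in\mathcal{Q}} \{\mathbb{E}_d[C_\pi] - \alpha D_f(d\Vert d_D)\}$, because for each fixed $d$ the penalty does not depend on $\pi$ and the expectation is monotone, and a supremum or infimum of monotone maps is monotone. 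Coordinatewise domination $C_{\hat{\pi}_n} \le C_\pi$ then gives $\Phi(C_{\hat{\pi}_n}) \le \Phi(C_\pi)$. This argument needs no convexity of $f$, no condition on $\mathcal{Q}$, and no attainment of the supremum.

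I will then verify membership of the named methods by writing each in this form. BC is the case $d = d_D$. Adversarially weighted BC and DR-BC take a supremum over weightings in a TV ball. OptiDICE-BC and DrilDICE weight by a stationary-distribution ratio $d/d_D$, optimized subject to Bellman-flow constraints and an $f$-divergence penalty. In each case the constraint set $\mathcal{Q}$ does not depend on the policy's actions on the fixed data. This is the step I expect to be the main obstacle: I must check that in DrilDICE the feasible set of stationary distributions is defined from the data or transition model rather than from $\pi$. If it does depend on $\pi$, I would restrict the claim to the formulation as stated in the proposition, where $\mathcal{Q}$ is fixed. Given membership, each method returns $\hat{\pi}_n$, with ties broken arbitrarily; here there are no ties, since $3/5, 6/7 \neq 1/2$. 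Its deployment value, $19/48$, is read off from Table~\ref{tab:orthogonality-values} by enumeration at $s = 3/4$.

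Finally, for the stochastic claim under log loss I will repeat the argument with $C_\pi(z,w) = -\sum_x P_{\mathrm{tr}}(x\mid z,w)\log \pi(x\mid z,w)$. By Gibbs' inequality this quantity is uniquely minimized pointwise by $\pi = P_{\mathrm{tr}}(\cdot\mid z,w)$, and the same monotone-aggregation lemma then gives the conclusion for every objective in the class.
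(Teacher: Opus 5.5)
Your core argument is the paper's. $C_\pi(z,w)$ depends on $\pi$ only through $\pi(z,w)$, and full support makes $\hat\pi_n$ a simultaneous pointwise minimizer ($\pi_n$ under log loss, by Gibbs). A simultaneous pointwise minimizer minimizes every nonnegative aggregate. Your coordinatewise-monotonicity lemma is a cleaner rendering of the paper's remark that the penalty ``does not depend on $\pi$'': for fixed $d$, the map $C \mapsto \mathbb{E}_d[C] - \alpha D_f(d \Vert d_D)$ is nondecreasing, and suprema and infima of nondecreasing maps are nondecreasing. You are right that it needs neither convexity of $f$ nor attainment.

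The gap is in turning this into ``these methods return $\hat\pi_n$,'' which is where the paper does extra work.

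First, DrilDICE as an algorithm is not a fixed family of weightings over observations. Its weights $w^*(s,a)$ live on state--action pairs, are recomputed from $\pi$ through the error term $e_{\pi,\nu}$, and feed an alternating scheme. Your lemma covers only functionals monotone in the vector $\{C_\pi(z,w)\}$. A weight that depends on $a$ and multiplies a per-sample loss $\ell(\pi(s),a)$ reweights the expert's conditional within a state, which can move the argmax. So membership is not automatic, and the check you propose (whether the feasible set depends on $\pi$) is not the one that matters.

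The missing step, which the paper supplies, runs as follows. Each DrilDICE policy update minimizes $\mathbb{E}_{d_D}[w^*(s,a)\,C_\pi(s)]$ with the weights held fixed. Since $C_\pi(s)$ is constant in $a$ given $s$, this is state-weighted BC with $\omega(s) = \mathbb{E}_{d_D}[w^*(s,a)\mid s] \ge 0$. In the single-stage case $e_{\pi,\nu} = C_\pi(s) - \nu(s)$, so the weights are state-measurable anyway. Hence $\hat\pi_n$ minimizes every policy update whatever the weights: it is the output of the two-step instantiation and a fixed point of the alternation, and the weights' dependence on $\pi$ is harmless. Your fallback of restricting to a fixed $\mathcal{Q}$ would instead abandon the DrilDICE conclusion the proposition asserts.

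Second, ``returns $\hat\pi_n$'' requires $\hat\pi_n$ to be the \emph{unique} minimizer. A strict pointwise argmax ($3/5, 6/7 \neq 1/2$) does not give that. If the relevant weighting puts zero mass on some observation, every policy agreeing with $\hat\pi_n$ elsewhere ties with it. Nonnegative DICE weights permit this, and an infimum over weightings can force it. The paper obtains uniqueness only under the proviso that the optimal weighting charges every observation. You need that condition, or a separate argument that it holds for each named method.
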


\begin{proof}
Each objective decomposes over observations, $\mathbb{E}_d[C_\pi] = \sum_{z,w} d(z, w)\, C_\pi(z, w)$, and by full support $C_\pi(z, w)$ is well defined at every $(z, w)$ and is minimized pointwise by $\hat{\pi}_n$ (respectively, by $\pi_n$ under the log loss). A simultaneous pointwise minimizer minimizes every nonnegatively weighted aggregate of the per-observation losses, hence every supremum or infimum of such aggregates over families of weightings, and subtracting a term $\alpha D_f(d \Vert d_D)$ that does not depend on $\pi$ changes neither the inner optimizer's feasible directions in $\pi$ nor the outer $\arg\min$. For DrilDICE specifically, each policy update minimizes $\mathbb{E}_{(s,a) \sim d_D}[w^*(s, a)\, C_\pi(s)]$ with fixed nonnegative weights; since the cost is constant in $a$ given $s$, this equals a state-weighted BC loss with weights $\omega(s) = \mathbb{E}_{d_D}[w^*(s, a) \mid s] \geq 0$, and in the single-stage problem the weights are moreover directly state-measurable, as $e_{\pi,\nu}(s, a, s') = C_\pi(s) - \nu(s)$ with the terminal value fixed to zero. Hence $\hat{\pi}_n$ is returned by the two-step instantiation of DrilDICE and is a fixed point of its alternating scheme. When the optimal weighting places positive mass on every observation, $\hat{\pi}_n$ is the unique minimizer, since the argmax in Example~\ref{ex:orthogonality} is strict at every $(z, w)$.
\end{proof}

The orthogonality runs in both directions, and neither correction subsumes the other. Causal adjustment does nothing for cause (1): Causal BC in Figure~\ref{fig:failure-bc} drifts outside the demonstration support on AntMaze-Large despite conditioning on admissible adjustment sets, purely a non-overlap failure. Symmetrically, the covariate-shift constructions of \citet{seo2024drildice} contain no unobserved confounding, so a causal method handed the unconfounded diagram would reduce to plain BC and inherit its failure there. Note that robustifying against the deployment-time shift in the observable marginals would not rescue the confounded imitator in Example~\ref{ex:orthogonality}: the compass marginal does shift ($P(W{=}1)$ moves from $7/12$ to $5/12$), but the failure is driven by the shifted conditional, which every method in the class intentionally leaves unmodified. Because the two corrections act on different components of the learning problem, either the conditioning set or the distribution over which the loss is evaluated, they compose rather than compete: Causal SQIL and Causal IQ-Learn are themselves such a composition, pairing adjustment for cause (2) with off-policy temporal-difference learning against the compounding-error consequences of cause (1), and composing windowed causal adjustment with DICE-style stationary distribution corrections is a natural direction for future work.

\paragraph{Causal reinforcement learning.}
Causal reasoning has been increasingly integrated into RL to handle confounded offline data and distribution shift.
\citet{li2025confounding} develop confounding-robust deep RL via causal adjustment in the presence of unobserved confounders;
\citet{li2026causal} extend this to offline settings via causal flow Q-learning;
and \citet{juliani2026confounding} address confounders in continuous control through automatic reward shaping.
These works operate in the RL setting where a reward signal is available, whereas the present paper addresses the harder IL setting where the reward is entirely latent.
Nevertheless, the structural causal formalization and the challenge of off-policy learning under confounders are shared, and our causal adjustment layer is compatible with advances in causal RL.

\paragraph{Learned history compression.}
Since Algorithm~\ref{alg:feasible-pi-backdoor} ultimately encodes a fixed window of recent observations, a natural question is whether a learned sequence model, such as an RNN or Transformer over $\mathbf{s}_{t-k:t}$, could replace graphical adjustment altogether. The two mechanisms are not interchangeable: Algorithm~\ref{alg:feasible-pi-backdoor} determines both which variables may be conditioned on and which lags are required, whereas a learned compressor addresses only the latter. A sequence model over the raw history still conditions on every observed variable, including the collider $\mathbf{W}$, and its training objective provides no signal to exclude it: the spurious association is predictive in-distribution (Figure~\ref{fig:training-curves-main}), and history-conditioned imitators are prone to latching onto past actions and confounded features~\citep{wen2020copycat, swamy2022sequence}, so such a model would inherit the biased conditional that defeats the non-causal baselines in Table~\ref{tab:main-results}. Furthermore, Theorem~\ref{thm:windowed-adjustment} and the window ablation (Figure~\ref{fig:window-ablation}) indicate that the sufficient window is small relative to the horizon in our environments, limiting the benefit of expressive history models over the direct encoding of Algorithm~\ref{alg:feasible-pi-backdoor}. Learned compression instead becomes appropriate when Assumption~\ref{assm:bounded-confounding} fails and long-range latent dependencies require summarizing history beyond any fixed window (Appendix~\ref{sec:limitations}); composing the variable selection of Algorithm~\ref{alg:feasible-pi-backdoor} with learned temporal compression of the selected variables is a noteworthy direction for future work.

\section{Proof}
\label{app:proof-windowed}

\begin{theorem}[Restatement of Theorem~\ref{thm:windowed-adjustment}]
Let $\mathcal{G}_H$ be the causal diagram of an SCM unrolled over horizon $H$, and let Assumption~\ref{assm:bounded-confounding} hold with window size $k$. Let $\{\mathbf{Z}_t^k\}_{t=0}^{k}$ be adjustment sets constructed by Algorithm~\ref{alg:feasible-pi-backdoor} that satisfy the sequential $\pi$-backdoor criterion (Definition~\ref{def:seq-pi-backdoor}) for the proxy graph $\mathcal{G}_k$ with horizon $k+1$. Then the transferred sets $\{\mathbf{Z}_t^H\}_{t=0}^{H-1}$ returned by Algorithm~\ref{alg:feasible-pi-backdoor} satisfy the sequential $\pi$-backdoor criterion for $(\mathcal{G}_H, \mathbf{X}, Y)$.
\end{theorem}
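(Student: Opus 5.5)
We prove the claim by a locality-plus-isomorphism argument. We verify, for each $t \in \{0,\dots,H-1\}$ separately, that $\mathbf{Z}_t^H$ satisfies one of the two disjuncts of Definition~\ref{def:seq-pi-backdoor} in $\mathcal{G}_H$. The idea is to show that every path relevant to that check lives inside a window of width $k$ ending at $t$. That window is then isomorphic to a window of the proxy graph $\mathcal{G}_k$ by Assumption~\ref{assm:bounded-confounding}(ii), and the proxy sets are admissible by hypothesis.

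Throughout we work on the latent projection $\widetilde{\mathcal{G}}_H$. The first step is the standard fact that d-separation among observed nodes is preserved under latent projection. This lets us replace every latent-mediated path by a single directed or bidirected edge.

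The first step is a \emph{locality lemma}. Fix $t$ and suppose $X_t \in \mathrm{An}_{\mathcal{G}_t'}(Y)$, so the first disjunct must be checked. Consider any path $p$ from $X_t$ to $Y$ in $(\mathcal{G}_t')_{X_t}$ that is open given $\mathbf{Z}_t^H$. We claim $p$ can be shortened to a path that enters the "past" of $X_t$ only through nodes at timesteps $\ge t-k$.

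The argument runs as follows. Consider the first node of $p$ that is not a descendant of $X_t$. The segment of $p$ leaving $X_t$ backward begins with an edge into $X_t$, that is, a directed or bidirected edge from a node of $\mathrm{pa}^+(X_t)$ or its C-component. By Assumption~\ref{assm:bounded-confounding}(i), that node lies at a timestep $\ge t-k$. We then iterate this observation along colliders: each collider on an open path must be in $\mathrm{An}(\mathbf{Z}_t^H)$, and each $\mathbf{Z}_t^H$ node is at lag $\le k$. From this we conclude that any open backdoor path has an open "witness" subpath confined to $\mathbf{V}_{t-k:t}$ together with the forward portion toward $Y$.

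In $\mathcal{G}_t'$, future actions $X_j$ have parents only in $\mathbf{Z}_j^H$, which is also windowed. Time-homogeneity therefore lets us collapse the forward portion onto the corresponding portion of $\mathcal{G}_k$, up to the final step $Y \in \mathbf{V}_H$.

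The second step is the \emph{transfer}. Using Assumption~\ref{assm:bounded-confounding}(ii), we define an isomorphism $\phi$ from the window $(\mathbf{V}_{t-k},\dots,\mathbf{V}_t,X_t)$ in $\mathcal{G}_H$ onto the final $k+1$ steps of $\mathcal{G}_k$. By Line~12 of Algorithm~\ref{alg:feasible-pi-backdoor}, $\phi(\mathbf{Z}_t^H) = \mathbf{Z}_{t'}^k$ for the matching proxy index $t'$.

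There are two cases according to the algorithm's case split:
\begin{itemize}
\item If $X_{t'} \notin \mathrm{BA}$, then $X_{t'} \notin \mathrm{An}(Y)$ in the proxy manipulated graph, and we transport this non-ancestrality back to $X_t$.
\item Otherwise, an open path in $\mathcal{G}_H$ given $\mathbf{Z}_t^H$ would map under $\phi$ to an open path in $(\mathcal{G}_{k,t'}')_{X_{t'}}$ given $\mathbf{Z}_{t'}^k$. This contradicts the hypothesis that the proxy sets satisfy Definition~\ref{def:seq-pi-backdoor}.
\end{itemize}

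For $t<k$ the window is truncated at time $0$. We treat these indices by matching them directly with the initial segment of $\mathcal{G}_k$, which is itself the initial segment of $\mathcal{G}_H$.

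The main obstacle is the locality lemma, and in particular the forward part of paths toward $Y$. A path from $X_t$ to $Y$ may travel through many future timesteps, and could in principle re-enter the past via bidirected edges attached to later variables, with $\mathrm{pa}^+$ of a later node reaching back below $t-k$. Our plan for this is to exploit the fact that every such re-entry must pass a collider or a future action. Future actions are cut in $\mathcal{G}_t'$ except for edges from their windowed $\mathbf{Z}_j$. We then argue by induction on the number of future timesteps traversed, using Markov-boundary blocking (the role of $\mathrm{MB}$ in Line~3), that the path is blocked or else has an equivalent open path of bounded temporal span.

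If this induction fails in general, we fall back to a weaker version: the Markov boundary computed in $\mathcal{G}_k^Y$ separates $\mathbf{O}^X$ from the remaining variables. Time-homogeneity then implies that the same separation holds slice-by-slice in $\mathcal{G}_H$, and we invoke \citet[Lemma~3.2]{kumor2021sequential} on $\mathcal{G}_H$ directly.
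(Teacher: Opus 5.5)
Your architecture is the paper's: latent projection, a $\mathrm{pa}^+$-bounded window, a time shift onto $\mathcal{G}_k$ with $X_t \mapsto X_k$, a two-case transfer, and direct inclusion for $t<k$. The gap is the forward tail, which you yourself flag as the main obstacle, and it cannot be closed the way you suggest. Under your $\phi$ the window fills all $k+1$ steps of $\mathcal{G}_k$, so after $X_k$ the proxy contains only the sink $Y_k$. There is therefore no ``corresponding portion of $\mathcal{G}_k$'' onto which to collapse the part of a path lying in timesteps $t+1,\ldots,H$. For the same reason, neither ``transport non-ancestrality back'' nor ``an open path maps under $\phi$ to an open path'' has content: $\phi$ is undefined there. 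Even a successful induction bounding an open path to timesteps $[t-k,t+k]$ would leave it without an image in the proxy. The missing idea is the paper's interface argument (its Steps 3--4). The variables $\mathbf{I}_t$ at timesteps $t+1,\ldots,t+k$ that have a parent in, or are collider-connected to, the window form a cut between $\mathrm{Win}_t$ and $Y$. Their in-window extended parents correspond under $\phi_t^{-1}$ exactly to the extended parents of $Y_k$. One then truncates any path at its first interface node $I$ and maps the prefix, ending at its last window node $V_q \in \mathrm{pa}^+(I)$, into the proxy. That prefix is re-attached to $Y_k$ through the corresponding edge, and one checks that the collider status of $V_q$ is unchanged, so the block guaranteed in $\mathcal{G}_k$ lands on the mapped prefix. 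The same cut handles the non-ancestral case. This identification is not a formality: $Y$ occurs only once, so slice-wise time-homogeneity does not by itself say that $Y_k$ attaches to the window the way $\mathbf{I}_t$ does. Your fallback is circular. Lemma~3.2 of Kumor et al.\ concerns the Markov boundary of the set returned by \textsc{FindOX} on $\mathcal{G}_H$ itself. Showing that the clipped proxy sets realize it, or that separation holds ``slice by slice'' (d-separation is not slice-local), is the theorem itself.

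The backward half of your locality lemma also fails as argued. Requiring colliders to lie in $\mathrm{An}(\mathbf{Z}_t^H)$ confines nothing, because ancestors of in-window nodes reach arbitrarily far into the past, and non-colliders are not constrained at all. The $\mathrm{pa}^+$ bound only limits each edge to $k$ lags, so a path can walk back $k$ steps per edge, e.g.\ $X_t \gets S_t \gets S_{t-1} \gets \cdots$. Proxy admissibility cannot exclude such paths either, because $\mathcal{G}_k$ has no nodes before timestep $0$. What seals the window in the paper is blocking at its lower boundary:
\begin{itemize}
\item The edge by which a path leaves $\mathrm{Win}_t$ downward places an arrowhead on its in-window endpoint $A$.
\item If $A$ transmits along a tail, it must belong to $\mathbf{Z}_t^H$ or already be d-separated from $X_t$ given $\mathbf{Z}_t^H$ inside the window. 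This is the Markov-boundary property of the construction in Lines~1--10, and the paper stresses that this is where the construction, rather than mere proxy admissibility, is used.
\item Excursions through conditioned colliders must re-enter the window, and finiteness eventually forces a transmitting boundary node that blocks the path.
\end{itemize}
You invoke $\mathrm{MB}$ only for the forward direction, but it is needed here as well.
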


\begin{proof}
We show that for every $t \in \{0, \ldots, H{-}1\}$, one of the two conditions of Definition~\ref{def:seq-pi-backdoor} holds for $\mathbf{Z}_t^H$ in the full-horizon manipulated graph $(\mathcal{G}_H)_t'$. We handle $t \geq k$ first, then $t < k$.

\paragraph{Setup and notation.}
Throughout the proof we work in the latent projections $\widetilde{\mathcal{G}}_H$ and $\widetilde{\mathcal{G}}_k$ onto $\mathbf{V}^O \cup \{Y\}$ (respectively $Y_k$), on which Assumption~\ref{assm:bounded-confounding}(i) is stated. Since latent projection preserves d-separation among the retained variables and directed (ancestral) relationships between them \citep{verma1990equivalence, tian2002ccomp}, both conditions of Definition~\ref{def:seq-pi-backdoor} are invariant under projection, and it suffices to establish them in the projected graphs; the manipulated-graph operations (deleting incoming edges of future actions and re-parenting them on the observed sets $\mathbf{Z}_j^H$) involve only retained variables and can equivalently be performed on the projections. To avoid clutter, we write $\mathcal{G}_H$ and $\mathcal{G}_k$ for the projections in the remainder of the proof. In particular, every node of these graphs is observed (or is the reward), $\mathrm{pa}^+$ always denotes the extended parent set in the projection, and a single edge may represent a latent-mediated path of the original SCM, so edges span up to $k$ timesteps rather than one.

Let $(\mathcal{G}_H)_t'$ denote the manipulated graph at timestep $t$: all incoming edges into future actions $X_j$ ($j > t$) are removed and each $X_j$ is given parents $\mathbf{Z}_j^H$. Let $((\mathcal{G}_H)_t')_{X_t}$ further delete all outgoing edges from $X_t$. Define $\mathrm{Win}_t := \{t{-}k, \ldots, t\}$. We use $V_r$ (non-bold) for an individual variable at timestep $r$.

\paragraph{Step 1: Structure of the manipulated graph.}

By $k$-bounded influence (Assumption~\ref{assm:bounded-confounding}(i)), for every $t$ the extended parent set $\mathrm{pa}^+(\mathbf{V}_t)$ lies within the window $\mathrm{Win}_t = \{t{-}k, \ldots, t\}$: all directed parents of $\mathbf{V}_t$, all variables collider-connected to $\mathbf{V}_t$, and the parents of those variables, occur at timesteps $\geq t - k$. In $(\mathcal{G}_H)_t'$, the original incoming edges to future actions are replaced by edges from $\mathbf{Z}_j^H$, which by Algorithm~\ref{alg:feasible-pi-backdoor} contain only variables within $k$ steps of $j$; this replacement only deletes edges and redirects action parents to in-window variables; since deleting a bidirected edge can only shrink a C-component (Definition~\ref{def:ccomp}) and redirecting adds only in-window parents, no extended parent set grows beyond its width-$k$ bound. Therefore the bound on $\mathrm{pa}^+$ is preserved in $(\mathcal{G}_H)_t'$. Note also that the bound implies every edge of the (projected, manipulated) graph spans at most $k$ timesteps: for a directed edge $B \to A$, $B \in \mathrm{pa}(A) \subseteq \mathrm{pa}^+(\mathbf{V}_{\mathrm{slice}(A)})$, and for a bidirected edge $B \leftrightarrow A$ with $B$ earlier, $B$ is collider-connected to $A$ and hence $B \in \mathrm{pa}^+(\mathbf{V}_{\mathrm{slice}(A)})$.

Crucially, the window is sealed against \emph{open} escapes into the past. Suppose a path segment lying behind $X_t$ (i.e., before any crossing into the tail defined below) leaves $\mathrm{Win}_t$ downward, and let $A \in \mathrm{Win}_t$ be a node at which it exits or re-enters, with neighbor $B$ on the segment at a timestep $< t-k$. Since directed edges respect the temporal order and every edge spans at most $k$ lags, the boundary edge must be $B \to A$ or $B \leftrightarrow A$; either way it places an arrowhead at $A$. If the other path-edge at $A$ also carries an arrowhead, $A$ is a collider, and the segment is blocked at $A$ unless $A \in \mathrm{An}(\mathbf{Z}_t^H)$; if it carries a tail, $A$ is a non-collider, and the segment is blocked at $A$ unless $A \notin \mathbf{Z}_t^H$. It is here that the proof uses the specific construction of Lines~1--10 of Algorithm~\ref{alg:feasible-pi-backdoor} rather than mere proxy admissibility. By the Markov-boundary property of the construction \citep[Definition~3.1 and Lemma~3.2]{kumor2021sequential}, transferred to $\mathrm{Win}_t$ by the isomorphism of Step~2, every observed window variable that receives an incoming arrowhead from outside the conditioned corridor and transmits along a tail edge either belongs to $\mathbf{Z}_t^H$ (and hence blocks the segment as a conditioned non-collider) or is d-separated from $X_t$ given $\mathbf{Z}_t^H$ within the window (so that the segment is blocked before reaching $X_t$). In the collider case, $A \in \mathbf{Z}_t^H$ opens $A$, but the segment then continues below the window, where all nodes are unconditioned, and must eventually re-enter $\mathrm{Win}_t$ to reach $Y$ (every edge spans at most $k$ lags, and $Y$ lies above the window); at the re-entry node the same dichotomy applies, and since a finite path can sustain only finitely many below-window excursions, it must ultimately traverse a transmitting (tail-edge) window node, where the Markov-boundary property blocks it. Hence no open path behind $X_t$ leaves $\mathrm{Win}_t$, and no variable at a timestep $\leq t-k-1$ participates in an open path into $\mathrm{Win}_t$.

We decompose $(\mathcal{G}_H)_t'$ into two parts:
\begin{itemize}[topsep=2pt, parsep=0pt, itemsep=2pt]
    \item The local subgraph $\mathcal{L}_t$: the restriction of $(\mathcal{G}_H)_t'$ to variables at timesteps in $\mathrm{Win}_t$.
    \item The tail subgraph $\mathcal{T}_t$: the restriction of $(\mathcal{G}_H)_t'$ to variables at timesteps in $\{t{+}1, \ldots, H\}$, including $Y$.
\end{itemize}
By the $\mathrm{pa}^+$ bound, every directed or bidirected connection between $\mathcal{L}_t$ and $\mathcal{T}_t$ runs from a variable in $\{t{-}k{+}1, \ldots, t\} \subset \mathrm{Win}_t$ to a variable in $\{t{+}1, \ldots, t{+}k\} \subset \mathcal{T}_t$. We call the variables in $\mathcal{T}_t$ at timesteps $\{t{+}1, \ldots, t{+}k\}$ that are collider-connected to, or have a parent in, $\mathcal{L}_t$ the interface variables $\mathbf{I}_t$. By the sealing property, every path leaving $\mathcal{L}_t$ for $\mathcal{T}_t$ does so through a node in $\mathbf{I}_t$ whether via a directed or bidirected edge; these interface nodes together with their incident edges are themselves contained in $\mathrm{pa}^+$-bounded neighborhoods of the window. In particular, both directed and bidirected boundary crossings are confined to $\mathbf{I}_t$; the manipulated-graph construction additionally removes incoming edges to future actions $X_j \in \mathcal{T}_t$, so no boundary crossing terminates at a future action.

\paragraph{Step 2: Isomorphism between the local subgraph and the proxy graph.}

Consider the proxy graph $\mathcal{G}_k$ on timesteps $\{0, \ldots, k\}$ with terminal reward $Y_k$. Define the time-shift $\phi_t(\tau) = \tau + (t - k)$. By time-homogeneity (Assumption~\ref{assm:bounded-confounding}(ii)), $\phi_t$ induces a graph isomorphism between $\mathcal{G}_k$ restricted to $\{0, \ldots, k\}$ and $\mathcal{L}_t$, preserving all directed and bidirected edges of the projections as well as the identity of state variables, actions, and reward. (Time-homogeneity of $\mathscr{F}$ carries over to the projections, since the latent projection is determined by the diagram.)

This extends to manipulated graphs. In the proxy manipulated graph $(\mathcal{G}_k)_k'$, timestep $k$ is the final step so there are no future actions to modify. In $\mathcal{L}_t$, the action $X_t$ is likewise the last action (all actions at $j > t$ belong to $\mathcal{T}_t$). The adjustment sets satisfy $\mathbf{Z}_{\phi_t(\tau)}^H = \phi_t(\mathbf{Z}_\tau^k)$ by the identical relative-lag construction of Algorithm~\ref{alg:feasible-pi-backdoor}. Therefore $\mathcal{L}_t$ with the manipulated-graph modifications is isomorphic to $(\mathcal{G}_k)_k'$ under $\phi_t$.

\paragraph{Step 3: Relating $Y_k$ and $Y$ via the interface.}

In $\mathcal{G}_k$, the terminal reward $Y_k$ receives edges from variables at timesteps $\{k{-}k, \ldots, k\} = \{0, \ldots, k\}$ (by $k$-bounded influence). Since $Y$ is a terminal sink node with only incoming edges in both $\mathcal{G}_k$ and $\mathcal{G}_H$, and its parent structure follows the same time-homogeneous template, the structural role of $Y_k$ relative to $\{0, \ldots, k\}$ in $\mathcal{G}_k$ is identical to that of $Y$ relative to $\{H{-}k, \ldots, H\}$ in $\mathcal{G}_H$. Under the isomorphism, these correspond to variables in $\mathrm{Win}_t$ in $\mathcal{G}_H$.

In $(\mathcal{G}_H)_t'$, the terminal reward $Y$ at timestep $H$ is not solely adjacent to $\mathrm{Win}_t$; it is connected to $\mathrm{Win}_t$ only through paths that traverse $\mathcal{T}_t$. However, by Step~1, \emph{all} connections from $\mathcal{L}_t$ into $\mathcal{T}_t$ pass through the interface $\mathbf{I}_t$ at timesteps $\{t{+}1, \ldots, t{+}k\}$. This means that $\mathbf{I}_t$ is a cut set between $\mathrm{Win}_t$ and $Y$ in $\mathcal{T}_t$: every path from any variable in $\mathcal{L}_t$ to $Y$ in $(\mathcal{G}_H)_t'$ must pass through some node of $\mathbf{I}_t$.

The extended parents of $Y_k$ in $\mathcal{G}_k$ (variables at timesteps $\{0, \ldots, k\}$) are mapped by $\phi_t$ to variables in $\mathrm{Win}_t$ that are exactly the extended parents of the interface variables $\mathbf{I}_t$ within $\mathcal{L}_t$. Thus $Y_k$ in $\mathcal{G}_k$ and $Y$ in $\mathcal{G}_H$ attach to $\mathcal{L}_t$ through the same set of nodes, with $Y_k$ directly and $Y$ through $\mathbf{I}_t$ and the tail. By the sealing property of Step~1, every path from a variable in $\mathcal{L}_t$ to $Y$ in $(\mathcal{G}_H)_t'$ passes through $\mathbf{I}_t$.

\paragraph{Step 4: Transferring the sequential $\pi$-backdoor conditions.}

Since $\{\mathbf{Z}_t^k\}$ satisfies the sequential $\pi$-backdoor on $\mathcal{G}_k$, at timestep $k$ either:

\emph{Case A: $X_k \notin \mathrm{An}_{(\mathcal{G}_k)_k'}(Y_k)$.}

This means there is no directed path from $X_k$ to $Y_k$ in $(\mathcal{G}_k)_k'$. By the isomorphism of Step~2, there is no directed path from $X_t$ to any interface variable in $\mathbf{I}_t$ within $\mathcal{L}_t$. Since every directed path from $X_t$ to $Y$ in $(\mathcal{G}_H)_t'$ must pass through $\mathbf{I}_t$ (Step~3), and no directed path from $X_t$ reaches $\mathbf{I}_t$, we conclude $X_t \notin \mathrm{An}_{(\mathcal{G}_H)_t'}(Y)$.

\emph{Case B: $(X_k \perp\!\!\!\perp Y_k \mid \mathbf{Z}_k^k)_{((\mathcal{G}_k)_k')_{X_k}}$.}

We show $(X_t \perp\!\!\!\perp Y \mid \mathbf{Z}_t^H)_{((\mathcal{G}_H)_t')_{X_t}}$. Let $p$ be any path from $X_t$ to $Y$ in $((\mathcal{G}_H)_t')_{X_t}$.

Since outgoing edges from $X_t$ are deleted, $p$ must leave $X_t$ via an incoming edge from a parent or a node collider-connected to $X_t$, all of which lie in $\mathrm{Win}_t$ by the $\mathrm{pa}^+$ bound (Step~1). If any node of $p$ behind $X_t$ (i.e., before its first crossing into $\mathcal{T}_t$) occurred at a timestep $< t-k$, then $p$ would exit $\mathrm{Win}_t$ downward, and by the sealing property of Step~1 it would be blocked by $\mathbf{Z}_t^H$; in that case we are done. It therefore suffices to consider paths whose portion behind $X_t$, up to the first crossing into $\mathcal{T}_t$, is contained in $\mathcal{L}_t$.

Since $p$ connects $X_t$ to $Y$ and $Y \notin \mathcal{L}_t$, the path $p$ must at some point cross from $\mathcal{L}_t$ into $\mathcal{T}_t$. By Step~1, this crossing must go through an interface variable $I \in \mathbf{I}_t$.

Now consider the path $p$ as having two segments: the segment $p_L$ from $X_t$ to the first interface variable $I$ (contained in $\mathcal{L}_t \cup \{I\}$), and the segment $p_T$ from $I$ onward to $Y$ (contained in $\mathcal{T}_t$). We show $p$ is blocked by $\mathbf{Z}_t^H$ by showing $p_L$ is blocked.

Construct the corresponding path $p_L'$ in $((\mathcal{G}_k)_k')_{X_k}$: apply $\phi_t^{-1}$ to map $p_L$ from $\mathcal{L}_t$ to $\mathcal{G}_k$. The interface variable $I$ at timestep $t + j$ (for some $1 \leq j \leq k$) maps to a variable at timestep $k + j$ in $\mathcal{G}_k$. However, $\mathcal{G}_k$ only contains timesteps $\{0, \ldots, k\}$, so if $j \geq 1$, the variable $I$ maps to a timestep beyond $\mathcal{G}_k$.

We handle this as follows. By definition of $\mathbf{I}_t$, the interface variable $I$ is connected to $\mathcal{L}_t$ through its extended parents in $\mathcal{L}_t$: either a directed parent, or a node collider-connected to $I$ whose own parents lie in $\mathcal{L}_t$. By Step~3, $Y_k$ in the proxy attaches to the in-window nodes through the same extended-parent structure (under $\phi_t$). Therefore any path from $X_k$ that reaches an extended parent of $Y_k$ in $((\mathcal{G}_k)_k')_{X_k}$ corresponds to a path from $X_t$ that reaches an extended parent of $\mathbf{I}_t$ in $((\mathcal{G}_H)_t')_{X_t}$, and conversely.

Concretely, the path $p_L$ from $X_t$ to $I$ reaches $I$ through some node $V_q \in \mathrm{pa}^+(I) \cap \mathcal{L}_t$. The sub-path from $X_t$ to $V_q$ lies entirely in $\mathcal{L}_t$ (any step to a timestep $< t-k$ is covered by the sealing property of Step~1, under which $p$ is already blocked) and maps under $\phi_t^{-1}$ to a path from $X_k$ to $\phi_t^{-1}(V_q)$ in $((\mathcal{G}_k)_k')_{X_k}$. Because $\mathrm{pa}^+$ is defined as the parents of a C-component, the collider/non-collider status of $V_q$ along $p_L$ is preserved under $\phi_t^{-1}$: a node entered through a bidirected edge inside a collider-connected chain remains so in the proxy, and a directed parent remains a directed parent. Since $V_q$ is an extended parent of an interface variable, $\phi_t^{-1}(V_q)$ is an extended parent of $Y_k$ in $\mathcal{G}_k$ (Step~3), so the path from $X_k$ to $\phi_t^{-1}(V_q)$ extends to $Y_k$ through the corresponding parent or collider-connecting edge, forming a path $q$ from $X_k$ to $Y_k$ in $((\mathcal{G}_k)_k')_{X_k}$.

By the $d$-separation hypothesis, $q$ is blocked by $\mathbf{Z}_k^k$. The extension from $\phi_t^{-1}(V_q)$ to $Y_k$ introduces no open collider at $\phi_t^{-1}(V_q)$ that was not already present on $p_L$: if the extension makes $\phi_t^{-1}(V_q)$ a collider, the identical collider configuration appears at $V_q$ on the full-graph path $p$ (via the matching edge into the interface chain), so the two paths share collider status at this node and a block of one implies a block of the other. Hence the block of $q$ occurs on the sub-path from $X_k$ to $\phi_t^{-1}(V_q)$, at a node of $\mathbf{Z}_k^k$. Mapping back via $\phi_t$, the same block occurs on $p_L$ at the corresponding node of $\mathbf{Z}_t^H = \phi_t(\mathbf{Z}_k^k)$. Therefore $p$ is blocked by $\mathbf{Z}_t^H$.

\paragraph{Boundary timesteps ($t < k$).}

For $t < k$, the window is $\{0, \ldots, t\}$. The $\mathrm{before}(X_t)$ constraint in Algorithm~\ref{alg:feasible-pi-backdoor} (Line~6) ensures $\mathbf{Z}_t^k$ contains only variables at timesteps $\leq t - 1$. Since $t < k$, all such variables satisfy $\tau \geq 0 > t - k$, so the clipping in Lines~11--13 is vacuous: $\mathbf{Z}_t^H = \mathbf{Z}_t^k$. The subgraph of $\mathcal{G}_H$ restricted to $\{0, \ldots, t\}$ is identical to the subgraph of $\mathcal{G}_k$ on the same timesteps (a direct subgraph inclusion requiring no time-shift). The manipulated graph $(\mathcal{G}_H)_t'$ restricted to $\{0, \ldots, t\}$ matches $(\mathcal{G}_k)_t'$ restricted to $\{0, \ldots, t\}$, since the modifications at timesteps $> t$ (replacement of future action parents) do not alter the subgraph at $\{0, \ldots, t\}$. Therefore the $d$-separation or non-ancestry condition holding for $X_t$ in $(\mathcal{G}_k)_t'$ also holds in $(\mathcal{G}_H)_t'$.

\medskip

Since for every $t \in \{0, \ldots, H{-}1\}$ one of the two conditions of Definition~\ref{def:seq-pi-backdoor} holds, $\{\mathbf{Z}_t^H\}_{t=0}^{H-1}$ satisfies the sequential $\pi$-backdoor criterion for $(\mathcal{G}_H, \mathbf{X}, Y)$.
\end{proof}

\subsection{Necessity of Extended Parents Set}
\label{app:why-pa-plus}
Assumption~\ref{assm:bounded-confounding}(i) bounds the temporal reach of the extended parent set $\mathrm{pa}^+(\mathbf{V}_t)$ rather than that of the ordinary parent set $\mathrm{pa}(\mathbf{V}_t)$ or, equivalently, the maximal length of an individual edge. This is necessary, as an SCM may satisfy a naive edge-length bound yet violate the windowed criterion.

\begin{figure}[H]
    \centering
    \includegraphics[width=0.7\linewidth]{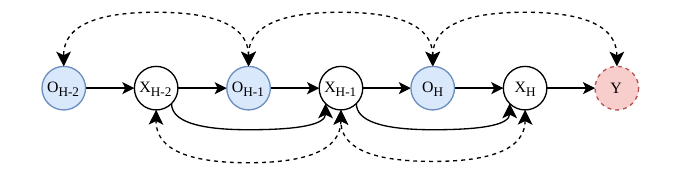}
    \caption{Causal graph for Example~\ref{ex:pa-plus-necessity}, shown over the final three timesteps. Every edge spans a single timestep, yet the open path $\rho$, of $X_H \leftrightarrow X_{H-1} \leftarrow X_{H-2} \to O_{H-1} \leftrightarrow O_H \leftrightarrow Y$, reaches the non-collider $X_{H-2}$ at lag $-2$, outside the width-$1$ window $\mathrm{Win}_H = \{H-1, H\}$.}
    \label{fig:ccomp-example}
\end{figure}

\begin{example}
\label{ex:pa-plus-necessity}
Consider an SCM unrolled over horizon $H$ with endogenous variables $\{O_t, X_t\}_{t=1}^{H}$ and terminal reward $Y$ (Figure~\ref{fig:ccomp-example}), with directed edges $O_t \to X_t$, $X_t \to O_{t+1}$, and $X_t \to X_{t+1}$, terminal edge $X_H \to Y$, and bidirected edges $O_t \leftrightarrow O_{t+1}$, $X_t \leftrightarrow X_{t+1}$, and $O_H \leftrightarrow Y$ from unobserved confounders, with only $Y \in \mathbf{V}^L$. Every directed and bidirected edge spans exactly one timestep, satisfying the condition that $\mathrm{pa}(V_t)$ lie within $k = 1$ of $V_t, V_t \in \mathbf{V}$.

Now examine action $X_H$ at the final step under the resulting window from Algorithm~\ref{alg:feasible-pi-backdoor}, $\mathrm{Win}_H = \{H-1, H\}$, and consider the path
\[
\rho:\quad X_H \leftrightarrow X_{H-1} \leftarrow X_{H-2} \to O_{H-1} \leftrightarrow O_H \leftrightarrow Y.
\]
On $\rho$, the nodes $X_{H-1}$, $O_{H-1}$, $O_H$ are colliders that are conditioned and thus opened. To block $\rho$ one must condition on $X_{H-2}$. But $X_{H-2}$ lies at timestep $H-2$ outside $\mathrm{Win}_H$, so a window of width $k=1$ omits it and $\rho$ remains open rendering the windowed adjustment set is invalid.

Though $X_{H-2}$ is only a single directed edge removed from the window, it is a parent of the C-component $O_{H-2}, O_{H-1}, O_H, Y$ due to $X_{H-2} \to O_{H-1}$. Hence $X_{H-2} \in \mathrm{pa}^+(\mathbf{V}_H)$, even though no single bidirected edge reaches it from the window. Assumption~\ref{assm:bounded-confounding}(i), which bounds $\mathrm{pa}^+$, is violated by this SCM at $k=1$ as it requires every element of $\mathrm{pa}^+(\mathbf{V}_H)$ including $X_{H-2}$ to occur no earlier than $H-k$, forcing $k \geq 2$. This blocks $\rho$ and deems the windowed adjustment set valid while satisfying Assumption~\ref{assm:bounded-confounding}. Therefore, by bounding $\mathrm{pa}^+(V)$ instead of $\mathrm{pa}(V)$, collider-connected paths that have causal influence on $V$ beyond the edge length of direct parents are considered. $\hfill \blacksquare$
\end{example}

\newpage

\section{Algorithm Pseudocode}
\label{app:pseudocode}

Algorithms~\ref{alg:causal-sqil} and~\ref{alg:causal-iqlearn} give the full training procedures for Causal SQIL and Causal IQ-Learn, respectively. Both algorithms share the same causal encoding layer (Algorithm~\ref{alg:feasible-pi-backdoor}) and SAC-style actor update, differing only in how the critic is trained. In Causal SQIL the critic minimizes a standard soft Bellman residual on binary-labeled transitions, whereas in Causal IQ-Learn it minimizes a chi-squared divergence objective on implicit rewards. All other components (entropy tuning, soft target updates, replay buffer management, and the causal adjustment algorithm) are identical.

\begin{algorithm}[H]
\caption{Causal SQIL}
\label{alg:causal-sqil}
\begin{algorithmic}[1]

\REQUIRE Expert demonstrations $\mathcal{D}_{\mathrm{exp}}$, causal graph $\mathcal{G}$, window size $k$, horizon $H$, discount $\gamma$, soft update rate $\tau$, batch size $B$.

\STATE Compute windowed adjustment sets $\{\mathbf{Z}_t^H\}_{t=0}^{H-1}$ and window specification $S$ via Algorithm~\ref{alg:feasible-pi-backdoor}.
\STATE Build encoder $\textsc{Encode}(\mathbf{s}, t)$ from $S$: concatenates values of $V \in \mathbf{V}^O \cap \mathbf{Z}_t^H$ at lags $\{0, -1, \dots, -k\}$ relative to $t$, zero-padding when $t < k$.

\STATE Initialize actor $\pi_\phi$, twin Q-networks $Q_{\theta_1}, Q_{\theta_2}$, target networks $Q_{\bar{\theta}_1} \leftarrow Q_{\theta_1}$, $Q_{\bar{\theta}_2} \leftarrow Q_{\theta_2}$.
\STATE Initialize entropy coefficient $\log \alpha \leftarrow 0$, target entropy $\bar{\mathcal{H}} \leftarrow -|\mathbf{A}|$.

\STATE $\mathcal{B}_{\mathrm{exp}} \leftarrow \varnothing$, \; $\mathcal{B}_{\pi} \leftarrow \varnothing$.
\FOR{each transition $(\mathbf{s}_t, \mathbf{x}_t, \mathbf{s}_{t+1}, d_t)$ in $\mathcal{D}_{\mathrm{exp}}$}
    \STATE $\mathcal{B}_{\mathrm{exp}} \leftarrow \mathcal{B}_{\mathrm{exp}} \cup \{(\textsc{Encode}(\mathbf{s}_t, t),\; \mathbf{x}_t,\; r{=}1,\; \textsc{Encode}(\mathbf{s}_{t+1}, t{+}1),\; d_t)\}$
\ENDFOR

\FOR{each episode $e = 1, 2, \dots$}
    \STATE Reset environment, observe $\mathbf{s}_0$.
    \FOR{$t = 0, \dots, H-1$}
        \STATE $\mathbf{z}_t \leftarrow \textsc{Encode}(\mathbf{s}_t, t)$
        \STATE Sample $\mathbf{x}_t \sim \pi_\phi(\cdot \mid \mathbf{z}_t)$, execute, observe $\mathbf{s}_{t+1}$, $d_t$.
        \STATE $\mathcal{B}_{\pi} \leftarrow \mathcal{B}_{\pi} \cup \{(\mathbf{z}_t,\; \mathbf{x}_t,\; r{=}0,\; \textsc{Encode}(\mathbf{s}_{t+1}, t{+}1),\; d_t)\}$
    \ENDFOR

    \FOR{each gradient step}
        \STATE Sample $\{(\mathbf{z}, \mathbf{x}, r, \mathbf{z}', d)\}_{i=1}^{B}$ with $B/2$ from $\mathcal{B}_{\mathrm{exp}}$ and $B/2$ from $\mathcal{B}_{\pi}$.

        \STATE \textit{// Critic update}
        \STATE $\mathbf{x}' \sim \pi_\phi(\cdot \mid \mathbf{z}')$
        \STATE $y \leftarrow r + \gamma (1 - d)\big(\min_{j} Q_{\bar{\theta}_j}(\mathbf{z}', \mathbf{x}') - \alpha \log \pi_\phi(\mathbf{x}' \mid \mathbf{z}')\big)$
        \STATE $\mathcal{L}_Q \leftarrow \frac{1}{2}\sum_{j=1}^{2} \big\| Q_{\theta_j}(\mathbf{z}, \mathbf{x}) - y \big\|^2$
        \STATE Update $\theta_1, \theta_2$ by $\nabla_{\theta} \mathcal{L}_Q$.

        \STATE \textit{// Actor update}
        \STATE $\tilde{\mathbf{x}} \sim \pi_\phi(\cdot \mid \mathbf{z})$ \quad (reparameterized)
        \STATE $\mathcal{L}_\pi \leftarrow \mathbb{E}\big[\alpha \log \pi_\phi(\tilde{\mathbf{x}} \mid \mathbf{z}) - \min_{j} Q_{\theta_j}(\mathbf{z}, \tilde{\mathbf{x}})\big]$
        \STATE Update $\phi$ by $\nabla_\phi \mathcal{L}_\pi$.

        \STATE \textit{// Entropy tuning}
        \STATE $\mathcal{L}_\alpha \leftarrow -\log \alpha \;\mathbb{E}\big[\log \pi_\phi(\tilde{\mathbf{x}} \mid \mathbf{z}) + \bar{\mathcal{H}}\big]$
        \STATE Update $\log \alpha$ by $\nabla \mathcal{L}_\alpha$, \; $\alpha \leftarrow \exp(\log \alpha)$.

        \STATE \textit{// Soft target update}
        \STATE $\bar{\theta}_j \leftarrow \tau\,\theta_j + (1-\tau)\,\bar{\theta}_j$ \quad for $j = 1, 2$.
    \ENDFOR
\ENDFOR

\STATE \textbf{return} $\pi_\phi$.

\end{algorithmic}
\end{algorithm}

\begin{algorithm}[H]
\caption{Causal IQ-Learn}
\label{alg:causal-iqlearn}
\begin{algorithmic}[1]

\REQUIRE Expert demonstrations $\mathcal{D}_{\mathrm{exp}}$, causal graph $\mathcal{G}$, window size $k$, horizon $H$, discount $\gamma$, soft update rate $\tau$, batch size $B$, number of value samples $K$.

\STATE Compute adjustment sets and build encoder $\textsc{Encode}$ as in Algorithm~\ref{alg:causal-sqil}, steps 1--2.

\STATE Initialize actor $\pi_\phi$, twin Q-networks $Q_{\theta_1}, Q_{\theta_2}$, target networks $Q_{\bar{\theta}_1} \leftarrow Q_{\theta_1}$, $Q_{\bar{\theta}_2} \leftarrow Q_{\theta_2}$.
\STATE Initialize entropy coefficient $\log \alpha \leftarrow 0$, target entropy $\bar{\mathcal{H}} \leftarrow -|\mathbf{A}|$.

\STATE $\mathcal{B}_{\mathrm{exp}} \leftarrow \varnothing$, \; $\mathcal{B}_{\pi} \leftarrow \varnothing$.
\STATE Encode expert demonstrations into $\mathcal{B}_{\mathrm{exp}}$ as $\{(\mathbf{z}_t, \mathbf{x}_t, \mathbf{z}_{t+1}, d_t)\}$ (no reward labels).

\FOR{each episode $e = 1, 2, \dots$}
    \STATE Roll out $\pi_\phi$ in the environment with causal encoding (as in Algorithm~\ref{alg:causal-sqil}, steps 9--13). Store transitions in $\mathcal{B}_{\pi}$.

    \FOR{each gradient step}
        \STATE Sample $\{(\mathbf{z}^e, \mathbf{x}^e, \mathbf{z}'^e, d^e)\}_{i=1}^{B/2}$ from $\mathcal{B}_{\mathrm{exp}}$ and $\{(\mathbf{z}^p, \mathbf{x}^p, \mathbf{z}'^p, d^p)\}_{i=1}^{B/2}$ from $\mathcal{B}_{\pi}$.
        \STATE Let $(\mathbf{z}, \mathbf{x}, \mathbf{z}', d)$ denote the concatenation of expert and policy batches.

        \STATE \textit{// Compute soft state value via Monte Carlo}
        \STATE $\{\tilde{\mathbf{x}}_m\}_{m=1}^{K} \sim \pi_\phi(\cdot \mid \mathbf{z}')$
        \STATE $\bar{V}(\mathbf{z}') \leftarrow \frac{1}{K} \sum_{m=1}^{K} \big[\min_{j} Q_{\bar{\theta}_j}(\mathbf{z}', \tilde{\mathbf{x}}_m) - \alpha \log \pi_\phi(\tilde{\mathbf{x}}_m \mid \mathbf{z}')\big]$

        \STATE \textit{// Implicit reward}
        \STATE $\hat{r}_j(\mathbf{z}, \mathbf{x}) \leftarrow Q_{\theta_j}(\mathbf{z}, \mathbf{x}) - \gamma(1 - d)\,\bar{V}(\mathbf{z}')$ \quad for $j = 1, 2$

        \STATE \textit{// Critic update (chi-squared divergence)}
        \STATE $\mathcal{L}_{Q_j} \leftarrow -\mathbb{E}_{\mathrm{exp}}\big[\hat{r}_j(\mathbf{z}^e, \mathbf{x}^e)\big] + \frac{1}{2}\,\mathbb{E}_{\mathrm{all}}\big[\hat{r}_j(\mathbf{z}, \mathbf{x})^2\big]$ \quad for $j = 1, 2$
        \STATE Update $\theta_1, \theta_2$ by $\nabla_{\theta} \mathcal{L}_{Q}$.

        \STATE \textit{// Actor update (identical to Algorithm~\ref{alg:causal-sqil}, steps 21--24)}
        \STATE $\tilde{\mathbf{x}} \sim \pi_\phi(\cdot \mid \mathbf{z})$, \; $\mathcal{L}_\pi \leftarrow \mathbb{E}\big[\alpha \log \pi_\phi(\tilde{\mathbf{x}} \mid \mathbf{z}) - \min_{j} Q_{\theta_j}(\mathbf{z}, \tilde{\mathbf{x}})\big]$
        \STATE Update $\phi$ by $\nabla_\phi \mathcal{L}_\pi$.

        \STATE \textit{// Entropy tuning and soft target update (identical to Algorithm~\ref{alg:causal-sqil}, steps 26--29)}
        \STATE Update $\log \alpha$, $\alpha$, and $\bar{\theta}_1, \bar{\theta}_2$.
    \ENDFOR
\ENDFOR

\STATE \textbf{return} $\pi_\phi$.

\end{algorithmic}
\end{algorithm}

\newpage

For completeness, we reproduce the \textsc{FindOX} algorithm of \citet{kumor2021sequential} (Algorithm~1 in that paper), which returns the maximal set $\mathbf{O}_X \subseteq \mathbf{V}^O$ from which sequential $\pi$-backdoor admissible sets can be constructed. Given the causal diagram $\mathcal{G}$, action set $\mathbf{X}$, and target $Y$, \textsc{FindOX} iteratively grows $\mathbf{O}_X$ by checking, for each observed node, whether a valid backdoor adjustment exists that would make that node a non-ancestor of $Y$ in the manipulated graph. A sequential $\pi$-backdoor exists for $(\mathcal{G}, \mathbf{X}, Y)$ if and only if $\mathbf{X} \subseteq \mathbf{O}_X$ \citep[Theorem~3.1]{kumor2021sequential}. Once $\mathbf{O}_X$ is obtained, the per-action adjustment sets $\mathbf{Z}_t$ are constructed from the Markov boundary of $\mathbf{O}_X$ in $\mathcal{G}^Y_{\mathbf{X}'}$ (where $\mathbf{X}' = \mathbf{O}_X \cap \mathbf{X}$), intersected with $\mathrm{before}(\mathbf{X}_t)$. Our windowed adjustment procedure (Algorithm~\ref{alg:feasible-pi-backdoor}) invokes \textsc{FindOX} on a short-horizon proxy graph $\mathcal{G}_k$ rather than the full-horizon graph $\mathcal{G}_H$.

\begin{algorithm}[H]
\caption{\textsc{FindOX} \citep{kumor2021sequential}: Find largest valid $\mathbf{O}_X$ in ancestral graph of $Y$}
\label{alg:findox}
\begin{algorithmic}[1]

\REQUIRE Causal diagram $\mathcal{G}$, action set $\mathbf{X}$, target $Y$.

\STATE \textbf{function} \textsc{HasValidAdjustment}($\mathcal{G}, \mathbf{O}^X, O_i, X_i$)
\STATE \quad $C \leftarrow$ the C-component of $O_i$ in $\mathcal{G}^Y$
\STATE \quad $\mathcal{G}_C \leftarrow$ the subgraph of $\mathcal{G}^Y$ containing only $\mathrm{Pa}^+(C)$ and intermediate latent variables
\STATE \quad $\mathbf{O}^C \leftarrow C \setminus (\mathbf{O}^X \cup \{O_i\})$ \hfill \textit{// Elements of C-component that might be ancestors of $Y$ in $\mathcal{G}'_i$}
\STATE \quad \textbf{return} $(O_i \perp\!\!\!\perp \mathbf{O}^C \mid \mathbf{O}^C \cap \mathrm{before}(X_i))$ in $\mathcal{G}_C$

\STATE

\STATE \textbf{function} \textsc{FindOX}($\mathcal{G}, \mathbf{X}, Y$)
\STATE \quad $\mathscr{O}^X \leftarrow$ empty map from elements of $\mathbf{V}^O$ to elements of $\mathbf{X}$
\STATE \quad \textbf{repeat}
\STATE \quad\quad \textbf{for} $O_i \in \mathbf{V}^O$ of $\mathcal{G}^Y$ (ancestral graph of $Y$) in reverse temporal order \textbf{do}
\STATE \quad\quad\quad \textbf{if} $|\mathrm{ch}^+(O_i)| > 0$ and $\mathrm{ch}^+(O_i) \subseteq \mathrm{keys}(\mathscr{O}^X)$ \textbf{then}
\STATE \quad\quad\quad\quad $X_i \leftarrow$ earliest element of $\mathscr{O}^X[\mathrm{ch}^+(O_i)]$ in temporal order
\STATE \quad\quad\quad\quad \textbf{if} \textsc{HasValidAdjustment}($\mathcal{G}, \mathrm{keys}(\mathscr{O}^X), O_i, X_i$) \textbf{then}
\STATE \quad\quad\quad\quad\quad $\mathscr{O}^X[O_i] \leftarrow X_i$
\STATE \quad\quad\quad \textbf{else if} $O_i \in \mathbf{X}$ and \textsc{HasValidAdjustment}($\mathcal{G}, \mathrm{keys}(\mathscr{O}^X), O_i, O_i$) \textbf{then}
\STATE \quad\quad\quad\quad $\mathscr{O}^X[O_i] \leftarrow O_i$
\STATE \quad \textbf{while} $|\mathscr{O}^X|$ changed in most recent pass
\STATE \quad \textbf{return} $\mathrm{keys}(\mathscr{O}^X)$

\end{algorithmic}
\end{algorithm}

\newpage

\section{Environment Details}
\label{app:envs}

We describe each confounded environment in detail, including the causal structure, the confounding mechanism, and the observation partition. Table~\ref{tab:env-summary} summarizes the key dimensions and setup.

\paragraph{Confounded AntMaze.}
The base task is goal-conditioned navigation in a windy maze using an 8-DoF ant robot as described in Example~\ref{ex:antmaze-scm}. The imitator does not observe $\mathbf{O}$, or torso orientation; to compensate, a 2D compass reading $\mathbf{W}$ is added providing a noisy surrogate for heading. The latent wind field $\mathbf{U}$ follows a piecewise-constant gust process and affects both the dynamics and the reward function. The compass is prone to distributional shift between expert and imitator environments due to changing influence from $\mathbf{U}$. Causally unaware methods that condition on $\mathbf{W}$ conflate the wind's influence on the compass with the ant's true heading, learning policies that turn into walls when the wind changes direction. Causal methods exclude $\mathbf{W}$ from their adjustment set and learn to navigate the maze using position, joint angles, and velocities alone, without explicit orientation information.

\begin{figure}[H]
    \centering
    \includegraphics[width=0.7\linewidth]{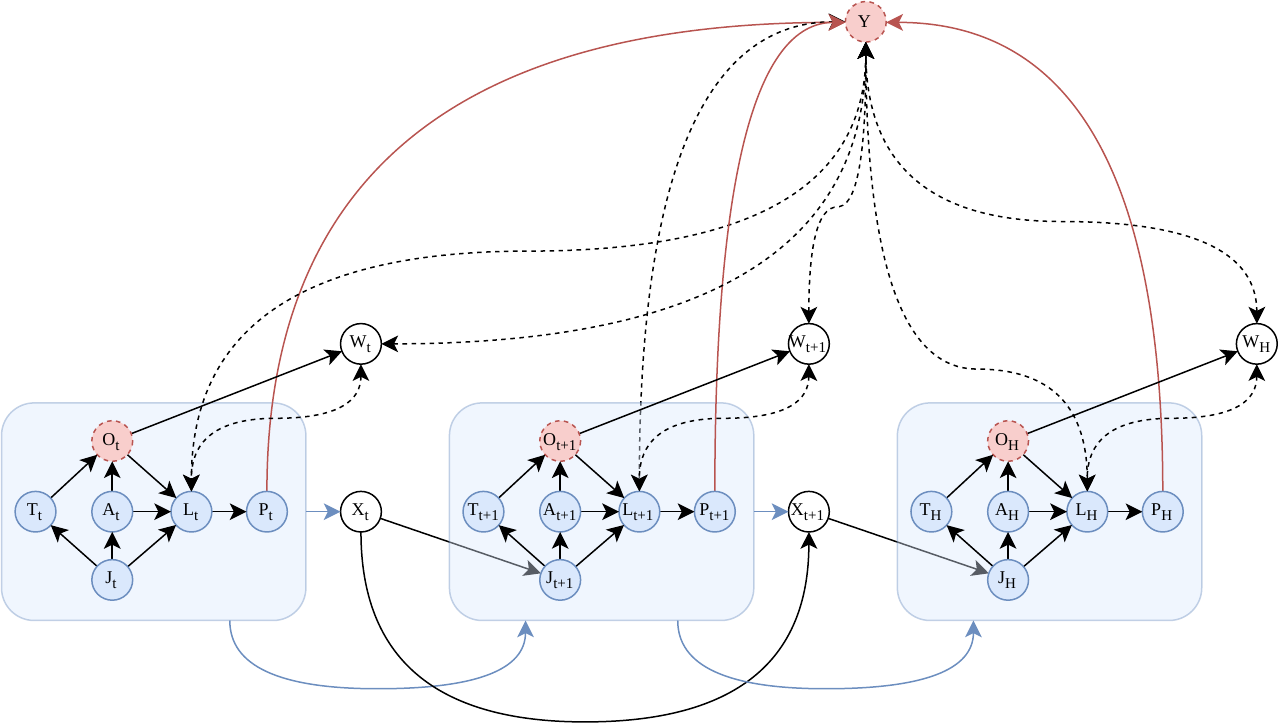}
    \caption{Confounded AntMaze. $\mathbf{P}$ is global position, $\mathbf{L}$ is torso linear velocity, $\mathbf{O}$ is torso orientation, $\mathbf{A}$ is joint angles, $\mathbf{T}$ is torso angular velocity, $\mathbf{J}$ is joint angular velocities, $\mathbf{U}$ is the latent wind field, $\mathbf{W}$ is the compass, $\mathbf{X}$ is joint torques, and $\mathbf{Y}$ is the latent terminal reward.}
    \label{fig:ant-causal-graph}
\end{figure}

\begin{figure}[H]
    \centering
    \includegraphics[width=0.5\linewidth]{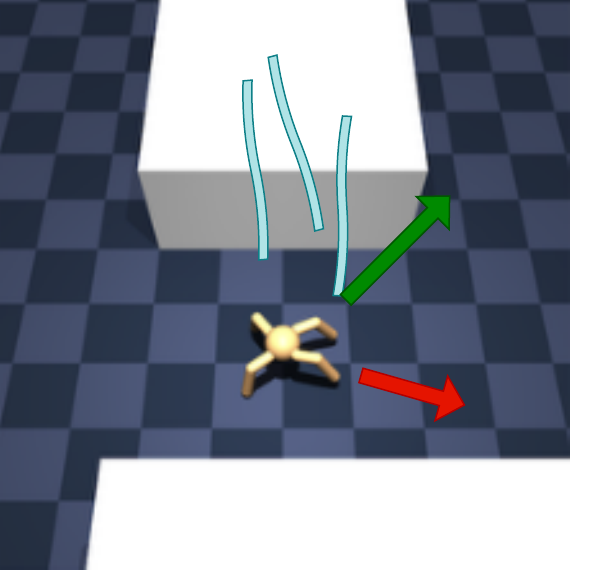}
    \caption{Visualization for Confounded AntMaze, demonstrating the effect of latent winds $\mathbf{U}$ (blue) on the heading compass $\mathbf{W}$ (red) despite the original heading $\mathbf{O}$ (green) being completely different.}
    \label{fig:visual-antmaze}
\end{figure}

\paragraph{Confounded HumanoidMaze.}
The base task is goal-conditioned navigation in a maze using a 21-DoF humanoid robot. The imitator does not observe $\mathbf{C}$, the 3D center-of-mass velocity; to compensate, a 2D ground-vibration reading $\mathbf{W}$ is added that provides a noisy surrogate for locomotion velocity. The latent seismic tremor $\mathbf{U}$ follows a piecewise-constant impulse process and affects both the dynamics (applying external force to the torso) and the reward function. The vibration sensor is prone to distributional shift between expert and imitator environments due to changing influence from $\mathbf{U}$. Causally unaware methods that condition on $\mathbf{W}$ conflate the tremor's influence on the vibration reading with the humanoid's true velocity, learning policies that stumble or over-correct when the tremor changes direction. Causal methods exclude $\mathbf{W}$ from their adjustment set and must learn to navigate the maze using position, joint angles, head height, extremity positions, torso orientation, and joint velocities alone, without explicit velocity information. The confounding effect of the seismic tremor is less disruptive than that of the wind field in AntMaze.

\begin{figure}[H]
    \centering
    \includegraphics[width=0.7\linewidth]{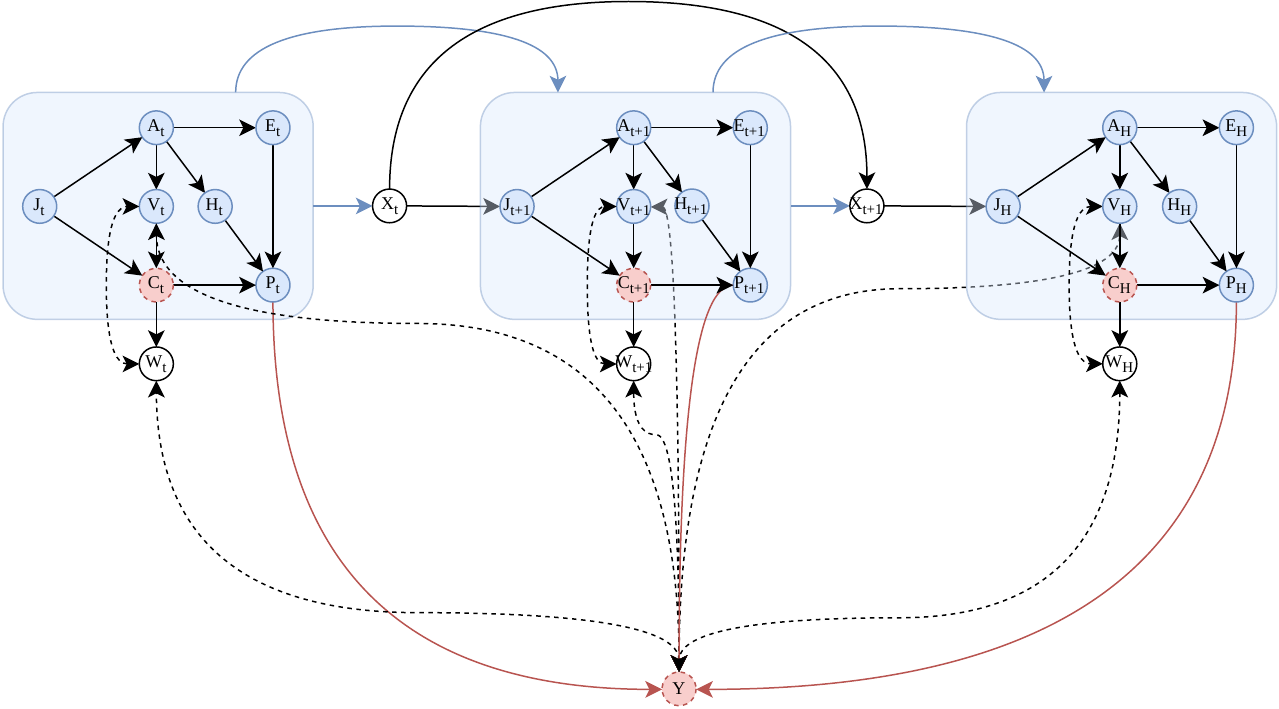}
    \caption{Confounded HumanoidMaze. $\mathbf{P}$ is global position, $\mathbf{A}$ is joint angles, $\mathbf{H}$ is head height, $\mathbf{E}$ is extremities in torso frame, $\mathbf{V}$ is torso vertical, $\mathbf{C}$ is center-of-mass gravity, $\mathbf{J}$ is joint velocities, $\mathbf{U}$ is the latent seismic tremor, $\mathbf{W}$ is the vibration sensor, $\mathbf{X}$ is joint torques, and $\mathbf{Y}$ is the latent reward.}
    \label{fig:humanoid-causal-graph}
\end{figure}

\begin{figure}[H]
    \centering
    \includegraphics[width=0.5\linewidth]{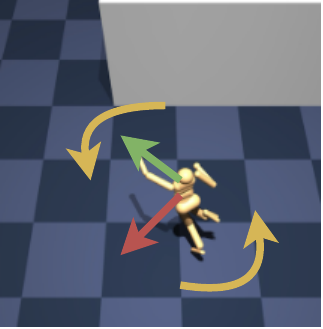}
    \caption{Visualization for Confounded HumanoidMaze, demonstrating the effect of latent seismic tremors $\mathbf{U}$ (yellow) on the perceived ground vibration sensor $\mathbf{W}$ (red) despite the true center-of-mass velocity $\mathbf{C}$ (green) pointing in a completely different direction.}
    \label{fig:env-humanoidmaze}
\end{figure}

\begin{table}[H]
\centering
\begin{tabular}{lccccc}
\hline
\textbf{Environment} & $H$ & $|\mathbf{V}|$ & $|\mathbf{X}|$ & \textbf{Latent Confounder} & \textbf{Collider} \\
\hline
AntMaze  & 1000 & 35 & 8  & Wind field & Compass \\
HumanoidMaze & 2000 & 89 & 21 & Seismic tremors & Vibration \\
\hline
\end{tabular}
\caption{Summary of confounded environments. $|\mathbf{V}|$ denotes the imitator's observation dimensionality (excluding hidden variables). $|\mathbf{X}|$ is the action dimensionality.}
\label{tab:env-summary}
\end{table}

\section{Hyperparameters and Training Details}
\label{app:hparams}

Table~\ref{tab:hparams} summarizes the hyperparameters for all algorithms. Causal and non-causal variants of each algorithm use identical hyperparameters and network architectures; the only difference is the input representation (causally-adjusted $\mathbf{Z}_t$ vs.\ full observation $\mathbf{V}^O_t$). Where hyperparameters differ between environment families we write $\mathcal{A}$ (AntMaze) and $\mathcal{H}$ (HumanoidMaze). All tasks used 3 seeds. Note: due to limited computational resources, HumanoidMaze-Large evaluations consisted of $100$ episodes rather than the other tasks' $1000$ episodes, and non-causal SQIL for HumanoidMaze-Large was trained on only $1M$ timesteps instead of $2M$; this decision was informed by the fact that every algorithm in every task reached its best checkpoint before the $1M$ timestep mark during training.

\begin{table}[H]
\centering
\caption{Hyperparameters for all algorithms. Causal and non-causal variants share identical settings. ``---'' denotes a hyperparameter not applicable to that algorithm. $\mathcal{A}$\,/\,$\mathcal{H}$ distinguishes AntMaze and HumanoidMaze values where they differ.}
\label{tab:hparams}
\resizebox{\textwidth}{!}{
\begin{tabular}{lcccc}
\hline
\textbf{Hyperparameter} & \textbf{BC} & \textbf{GAIL} & \textbf{SQIL} & \textbf{IQ-Learn} \\
\hline
\multicolumn{5}{l}{\textit{Network Architecture}} \\
Hidden dimension & 256 & 256 & 256 & 256 \\
Actor residual blocks & 4 & 3 & 3 & 3 \\
Actor dropout & 0.0 & 0.05 & 0.05 & 0.05 \\
Layer normalization & $\checkmark$ & $\checkmark$ & $\checkmark$ & $\checkmark$ \\
Activation & SiLU & SiLU & SiLU & SiLU \\
Output squashing & Tanh & Tanh & Tanh & Tanh \\
\hline
\multicolumn{5}{l}{\textit{Optimization}} \\
Optimizer & Adam & Adam & Adam & Adam \\
Actor learning rate & $3 \times 10^{-4}$ & $1 \times 10^{-4}$ & $3 \times 10^{-4}$ & $3 \times 10^{-4}$ \\
Critic learning rate & --- & $3 \times 10^{-4}$ & $3 \times 10^{-4}$ & $3 \times 10^{-4}$ \\
Batch size & 2048 & 1024 & 256 & 256 \\
Discount $\gamma$ & --- & 0.99 & 0.99 & 0.99 \\
Max gradient norm & --- & 0.5 & 1.0 & 1.0 \\
\hline
\multicolumn{5}{l}{\textit{BC-Specific}} \\
Loss function & Huber & --- & --- & --- \\
Training epochs & $\mathcal{A}$: 100\;/\;$\mathcal{H}$-Med: 100\;/\;$\mathcal{H}$-Large: 200 & --- & --- & --- \\
Early stopping patience & $\mathcal{A}$: 15\;/\;$\mathcal{H}$-Med: 30\;/\;$\mathcal{H}$-Large: 15 & --- & --- & --- \\
Validation fraction & 0.2 & --- & --- & --- \\
\hline
\multicolumn{5}{l}{\textit{GAIL-Specific (PPO + Discriminator)}} \\
PPO clip ratio $\epsilon$ & --- & 0.2 & --- & --- \\
GAE $\lambda$ & --- & 0.95 & --- & --- \\
PPO epochs per round & --- & $\mathcal{A}$: 4\;/\;$\mathcal{H}$: 2 & --- & --- \\
PPO entropy coefficient & --- & $10^{-2}$ & --- & --- \\
Value loss coefficient & --- & 0.5 & --- & --- \\
Normalize advantages & --- & $\checkmark$ & --- & --- \\
Discriminator learning rate & --- & $3 \times 10^{-4}$ & --- & --- \\
Discriminator dropout & --- & 0.2 & --- & --- \\
Discriminator updates per round & --- & 2 & --- & --- \\
Discriminator minibatch size & --- & 1024 & --- & --- \\
Gradient penalty $\lambda_{\mathrm{GP}}$ & --- & 5.0 & --- & --- \\
Episodes per round & --- & $\mathcal{A}$: 20\;/\;$\mathcal{H}$: 10 & --- & --- \\
Total training rounds & --- & $\mathcal{A}$: 500\;/\;$\mathcal{H}$: 200 & --- & --- \\
Disc.\ LR scheduler & --- & StepLR($100$, $0.5$) & --- & --- \\
\hline
\multicolumn{5}{l}{\textit{SQIL / IQ-Learn (SAC-Based)}} \\
Soft update rate $\tau$ & --- & --- & 0.005 & 0.005 \\
Entropy coefficient $\alpha$ & --- & --- & auto-tuned & auto-tuned \\
Entropy learning rate & --- & --- & $3 \times 10^{-4}$ & $3 \times 10^{-4}$ \\
Target entropy $\bar{\mathcal{H}}$ & --- & --- & $-|\mathbf{A}|$ & $-|\mathbf{A}|$ \\
$\alpha$ clamp range & --- & --- & $[e^{-\log 1000},\; e^{-\log 10}]$ & $[e^{-\log 1000},\; e^{-\log 10}]$ \\
Replay buffer capacity & --- & --- & $10^{6}$ & $10^{6}$ \\
Expert buffer ratio & --- & --- & 0.5 & 0.5 \\
Random exploration steps & --- & --- & 5000 & 5000 \\
Update-to-data ratio & --- & --- & $\mathcal{A}$: 0.25\;/\;$\mathcal{H}$: 0.5 & $\mathcal{A}$: 0.25\;/\;$\mathcal{H}$: 0.5 \\
Total timesteps & --- & --- & $2 \times 10^{6}$ & $2 \times 10^{6}$ \\
V estimation & --- & --- & --- & $\mathcal{A}$: MC ($K{=}16$)\;/\;$\mathcal{H}$: single-sample \\
Critic LR scheduler & --- & --- & CosineAnnealing & CosineAnnealing \\
\hline
\multicolumn{5}{l}{\textit{Shared}} \\
Lookback window $k$ & $\mathcal{A}$: 10\;/\;$\mathcal{H}$: 2 & $\mathcal{A}$: 10\;/\;$\mathcal{H}$: 2 & $\mathcal{A}$: 10\;/\;$\mathcal{H}$: 2 & $\mathcal{A}$: 10\;/\;$\mathcal{H}$: 2 \\
Max episode steps & $\mathcal{A}$: 1000\;/\;$\mathcal{H}$: 2000 & $\mathcal{A}$: 1000\;/\;$\mathcal{H}$: 2000 & $\mathcal{A}$: 1000\;/\;$\mathcal{H}$: 2000 & $\mathcal{A}$: 1000\;/\;$\mathcal{H}$: 2000 \\
\hline
\end{tabular}
}
\end{table}

\paragraph{Expert construction.}
Expert policies for the maze tasks are constructed using offline-to-online RL. For each environment, we begin by training a goal-conditioned behavioral cloning (BC) policy on provided demonstrations from the base OGBench dataset. This policy provides an initialization that captures the global structure of the task, but it is not yet ready for the confounders introduced in the modified environment. To obtain an expert that reflects performance under the confounded dynamics, we then fine-tune this BC policy through a period of off-policy actor--critic training using TD3 \citep{fujimoto2018td3}. Reward shaping is added optionally during fine-tuning to compensate for the sparse reward signals. The resulting expert policy is capable of operating effectively under the latent disturbances, partial observability, and altered transition dynamics of the confounded environment.

\paragraph{Network architecture.}
All algorithms share a residual MLP backbone for the actor. The actor network maps the causally-adjusted encoding $\mathbf{z}_t$ to actions through a linear projection into hidden dimension $256$, followed by residual blocks, and a final linear output with tanh squashing to the action bounds. Each residual block consists of LayerNorm, SiLU, a linear layer, a second LayerNorm, SiLU, dropout, and a second linear layer, with a skip connection from input to output. For GAIL, SQIL, and IQ-Learn, the actor outputs the mean of a squashed Gaussian with a state-independent learnable log-variance; for BC, the actor outputs a deterministic action. The Q-networks (SQIL and IQ-Learn) use the same residual MLP architecture, taking concatenated $(\mathbf{z}_t, \mathbf{x}_t)$ as input and producing a scalar output. GAIL's value network (critic) uses a simpler three-layer MLP with ReLU activations, taking $\mathbf{z}_t$ as input. GAIL's discriminator similarly uses a three-layer MLP with ReLU activations, skip connections between layers, higher dropout ($0.2$), and binary cross-entropy loss to classify $(\mathbf{z}_t, \mathbf{x}_t)$ pairs as expert or policy-generated.

\paragraph{SQIL training details.}
Causal SQIL uses SAC with twin Q-networks and soft target updates ($\tau = 0.005$). The entropy coefficient $\alpha$ is automatically tuned toward a target entropy of $-|\mathbf{X}|$ and clamped to $[\exp(-\log 1000),\; \exp(-\log 10)] \approx [0.001, 0.1]$. The replay buffer is split equally: $50\%$ capacity for expert transitions (labeled $r = 1$) and $50\%$ for policy transitions ($r = 0$), with each training batch sampled $50/50$ from both halves. The first $5{,}000$ timesteps use random actions for exploration before policy rollouts begin. Training runs for $2 \times 10^{6}$ environment steps. The update-to-data (UTD) ratio is $0.25$ for AntMaze tasks (one gradient step per four environment steps) and $0.5$ for HumanoidMaze tasks. A cosine annealing schedule is applied to the critic learning rate.

\paragraph{IQ-Learn training details.}
Causal IQ-Learn shares the SAC actor update and twin Q-network architecture with Causal SQIL. The key difference is the critic loss: instead of minimizing a soft Bellman residual on binary-labeled transitions, IQ-Learn minimizes the chi-squared divergence between the implicit reward distribution under the expert and the combined (expert $+$ policy) data. For AntMaze tasks, the soft state value $V(\mathbf{z}')$ is estimated via $K = 16$ Monte Carlo samples from the current policy; for HumanoidMaze tasks, a single-sample estimate is used (matching the standard SAC target computation), which avoids the high variance of multi-sample estimates in the higher-dimensional action space. The entropy coefficient $\alpha$ is auto-tuned and clamped to $[\exp(-\log 1000),\; \exp(-\log 10)] \approx [0.001, 0.1]$ to prevent entropy collapse or explosion. IQ-Learn uses the same UTD ratios as SQIL ($0.25$ for AntMaze, $0.5$ for HumanoidMaze).

\paragraph{GAIL training details.}
Causal GAIL alternates between on-policy rollouts and discriminator--policy updates. Each round collects $20$ episodes for AntMaze ($10$ for HumanoidMaze), up to $H$ steps each, using the current policy, computes advantages via GAE ($\lambda = 0.95$, $\gamma = 0.99$), and performs PPO updates with clipping ratio $\epsilon = 0.2$, entropy regularization coefficient $10^{-2}$, and value loss coefficient $0.5$. The number of PPO epochs per round is $4$ for AntMaze and $2$ for HumanoidMaze. The discriminator is updated $2$ times per round on minibatches of $1024$ with gradient penalty ($\lambda_{\mathrm{GP}} = 5.0$). The discriminator learning rate follows a StepLR schedule, halving every $100$ rounds. Training runs for $500$ rounds on AntMaze and $200$ rounds on HumanoidMaze.

\paragraph{BC training details.}
Causal BC performs supervised learning on expert demonstrations using Huber loss, optimized with Adam at learning rate $3 \times 10^{-4}$. The actor uses $4$ residual blocks with no dropout, trained for up to $100$ epochs (200 for HumanoidMaze-Large) with early stopping on a held-out $20\%$ validation split. The early stopping patience is $15$ for all tasks except HumanoidMaze-Medium, which uses a patience of $30$. Training uses large batches of $2048$ to reduce variance in the gradient estimates. BC is the only algorithm that does not interact with the environment during training; it learns entirely from the static expert dataset.

\section{Additional Results}
\label{app:results}

\subsection{Raw Evaluation Data}
\label{app:raw-data}

\begin{table}[H]
\centering
\caption{Evaluation results on confounded tasks without normalization. Best non-expert result per task in \textbf{bold}.}
\label{tab:raw-results}
\resizebox{\textwidth}{!}{
\begin{tabular}{l|cc|cc|cc|cc}
\hline
& \multicolumn{2}{c}{Ant-Med} & \multicolumn{2}{c}{Ant-Large} & \multicolumn{2}{c}{Hum-Med} & \multicolumn{2}{c}{Hum-Large} \\
\cline{2-3} \cline{4-5} \cline{6-7} \cline{8-9}
\textbf{Algorithm} & $\mathbb{E}[Y]$ & SR\ (\%) & $\mathbb{E}[Y]$ & SR\ (\%) & $\mathbb{E}[Y]$ & SR\ (\%) & $\mathbb{E}[Y]$ & SR\ (\%) \\
\hline
Expert & $-104.0 {\scriptstyle \pm 87.5}$ & $87.6 {\scriptstyle \pm 1.0}$ & $-255.2 {\scriptstyle \pm 122.8}$ & $55.9 {\scriptstyle \pm 1.6}$ & $-522.9 {\scriptstyle \pm 218.4}$ & $33.8 {\scriptstyle \pm 1.5}$ & $-840.6 {\scriptstyle \pm 212.9}$ & $7.0 {\scriptstyle \pm 0.8}$ \\
C-BC & $-125.2 {\scriptstyle \pm 100.9}$ & $77.9 {\scriptstyle \pm 1.3}$ & $-284.9 {\scriptstyle \pm 121.6}$ & $39.8 {\scriptstyle \pm 1.5}$ & $-655.7 {\scriptstyle \pm 172.7}$ & $10.4 {\scriptstyle \pm 1.0}$ & $-858.6 {\scriptstyle \pm 227.9}$ & $\mathbf{8.0} {\scriptstyle \pm 2.7}$ \\
C-GAIL & $-135.8 {\scriptstyle \pm 113.6}$ & $74.1 {\scriptstyle \pm 1.4}$ & $-328.5 {\scriptstyle \pm 122.6}$ & $7.3 {\scriptstyle \pm 0.8}$ & $-747.2 {\scriptstyle \pm 161.9}$ & $0.0 {\scriptstyle \pm 0.0}$ & $-976.3 {\scriptstyle \pm 210.2}$ & $0.0 {\scriptstyle \pm 0.0}$ \\
C-SQIL & $\mathbf{-99.6} {\scriptstyle \pm 85.7}$ & $\mathbf{90.7} {\scriptstyle \pm 0.9}$ & $-279.9 {\scriptstyle \pm 125.9}$ & $45.0 {\scriptstyle \pm 1.6}$ & $\mathbf{-555.5} {\scriptstyle \pm 217.2}$ & $\mathbf{24.7} {\scriptstyle \pm 1.4}$ & $\mathbf{-837.7} {\scriptstyle \pm 226.0}$ & $\mathbf{8.0} {\scriptstyle \pm 2.7}$ \\
C-IQL & $-118.2 {\scriptstyle \pm 101.3}$ & $84.3 {\scriptstyle \pm 1.15}$ & $\mathbf{-258.6} {\scriptstyle \pm 127.9}$ & $\mathbf{58.9} {\scriptstyle \pm 1.6}$ & $-588.8 {\scriptstyle \pm 203.8}$ & $19.1 {\scriptstyle \pm 1.2}$ & $-887.1 {\scriptstyle \pm 220.5}$ & $3.0 {\scriptstyle \pm 1.7}$ \\
BC & $-375.3 {\scriptstyle \pm 119.2}$ & $0.0 {\scriptstyle \pm 0.0}$ & $-470.2 {\scriptstyle \pm 150.2}$ & $0.0 {\scriptstyle \pm 0.0}$ & $-678.4 {\scriptstyle \pm 169.1}$ & $5.4 {\scriptstyle \pm 0.7}$ & $-884.1 {\scriptstyle \pm 204.6}$ & $5.0 {\scriptstyle \pm 2.2}$ \\
GAIL & $-364.9 {\scriptstyle \pm 112.2}$ & $0.0 {\scriptstyle \pm 0.0}$ & $-398.4 {\scriptstyle \pm 132.9}$ & $0.0 {\scriptstyle \pm 0.0}$ & $-747.7 {\scriptstyle \pm 162.5}$ & $0.0 {\scriptstyle \pm 0.0}$ & $-977.0 {\scriptstyle \pm 208.9}$ & $0.0 {\scriptstyle \pm 0.0}$ \\
SQIL & $-370.7 {\scriptstyle \pm 119.1}$ & $0.0 {\scriptstyle \pm 0.0}$ & $-484.2 {\scriptstyle \pm 151.5}$ & $0.0 {\scriptstyle \pm 0.0}$ & $-706.2 {\scriptstyle \pm 157.0}$ & $0.1 {\scriptstyle \pm 0.1}$ & $-896.3 {\scriptstyle \pm 194.9}$ & $0.0 {\scriptstyle \pm 0.0}$ \\
IQL & $-372.5 {\scriptstyle \pm 120.8}$ & $0.0 {\scriptstyle \pm 0.0}$ & $-468.1 {\scriptstyle \pm 153.5}$ & $0.0 {\scriptstyle \pm 0.0}$ & $-697.6 {\scriptstyle \pm 169.7}$ & $2.4 {\scriptstyle \pm 0.5}$ & $-906.2 {\scriptstyle \pm 193.3}$ & $0.0 {\scriptstyle \pm 0.0}$ \\
\hline
\end{tabular}
}
\end{table}

\paragraph{Remark on expert success rates.}
We note that the raw expert success rates in Table~\ref{tab:raw-results} are substantially lower than those typically reported in standard MuJoCo benchmarks. This is by design: our confounded environments are partially observable even to the expert, as the exogenous disturbances $\mathbf{U}$ (represented by bidirected arrows in the causal diagrams) are unobserved by all agents, including the expert. The unpredictable external forces generated by $\mathbf{U}$ make consistent task completion inherently difficult regardless of the agent's sensory access or learning algorithm. In this sense, the environments are designed to reflect realistic deployment conditions in which no agent has full knowledge of the environment dynamics, rather than the fully observed settings common in standard benchmarks. Nevertheless, the expert has strictly more information than the imitator (observing $\mathbf{V}^O \cup \mathbf{V}^L$ rather than $\mathbf{V}^O$ alone) and serves as a meaningful upper bound on what can be achieved given the imitator's observation set. The normalized metrics in Table~\ref{tab:main-results} should therefore be interpreted relative to this upper bound: an algorithm recovering, say, $80\%$ of the expert's success rate is operating near the frontier of what is achievable given the imitator's partial observability, not at $80\%$ of a trivially solvable task. Cases in which a causal method exceeds $100\%$ normalized performance (e.g., Causal SQIL on AntMaze-Medium, Causal IQ-Learn on AntMaze-Large) indicate that the imitator's learned policy is slightly more robust to the stochastic disturbances than the expert policy obtained via offline-to-online RL, likely because the off-policy Q-learning objective implicitly averages over the disturbance distribution encountered during training rather than committing to the point estimates used during expert fine-tuning.

\subsection{Episode Lengths}
\label{app:episode-lengths}

We report the mean episode length for successfully solved episodes in Table~\ref{tab:episode-lengths}. Shorter episodes indicate more efficient navigation. Only algorithms with nonzero success rates are included.

\begin{table}[H]
\centering
\caption{Mean episode length (steps) for successfully solved episodes ($\pm$ std). Only algorithms with nonzero success rates are shown. Best result per task in \textbf{bold}.}
\label{tab:episode-lengths}
\resizebox{\textwidth}{!}{
\begin{tabular}{lcccc}
\hline
\textbf{Algorithm} & \textbf{AntMaze-Med} & \textbf{AntMaze-Large} & \textbf{HumanoidMaze-Med} & \textbf{HumanoidMaze-Large} \\
\hline
Expert & $301 {\scriptstyle \pm 57}$ & $648 {\scriptstyle \pm 99}$ & $976 {\scriptstyle \pm 410}$ & $1463 {\scriptstyle \pm 366}$ \\
C-BC & $318 {\scriptstyle \pm 79}$ & $683 {\scriptstyle \pm 116}$ & $1356 {\scriptstyle \pm 344}$ & $1477 {\scriptstyle \pm 421}$ \\
C-GAIL & $309 {\scriptstyle \pm 75}$ & $709 {\scriptstyle \pm 134}$ & --- & --- \\
C-SQIL & $\mathbf{299} {\scriptstyle \pm 62}$ & $695 {\scriptstyle \pm 119}$ & $\mathbf{912} {\scriptstyle \pm 425}$ & $\mathbf{1251} {\scriptstyle \pm 316}$ \\
C-IQL & $321 {\scriptstyle \pm 90}$ & $\mathbf{636} {\scriptstyle \pm 108}$ & $1024 {\scriptstyle \pm 443}$ & $1848 {\scriptstyle \pm 84}$ \\
BC & --- & --- & $1364 {\scriptstyle \pm 320}$ & $1663 {\scriptstyle \pm 237}$ \\
GAIL & --- & --- & --- & --- \\
SQIL & --- & --- & $1800 {\scriptstyle \pm 0}$ & --- \\
IQL & --- & --- & $1085 {\scriptstyle \pm 396}$ & --- \\
\hline
\end{tabular}
}
\end{table}

\subsection{Wall-Clock Runtime of Full vs.\ Windowed Adjustment}
\label{app:wallclock}

To empirically motivate the windowed approximation of Algorithm~\ref{alg:feasible-pi-backdoor}, we measure the wall-clock time required to compute the per-timestep adjustment sets $\mathbf{Z}_t^k$ under different horizons up to $H=2000$. Runtime for $H=1000, 2000$ reflects full sequential $\pi$-backdoor computation for environments such as AntMaze and HumanoidMaze, while the runtime for smaller $H$ (i.e. $H=2, 10$) is equivalent to that of Algorithm~\ref{alg:feasible-pi-backdoor} which uses $k=2, 10$ respectively in our experiments due to windowed adjustment.

\begin{figure}[H]
    \centering
    \includegraphics[width=0.65\linewidth]{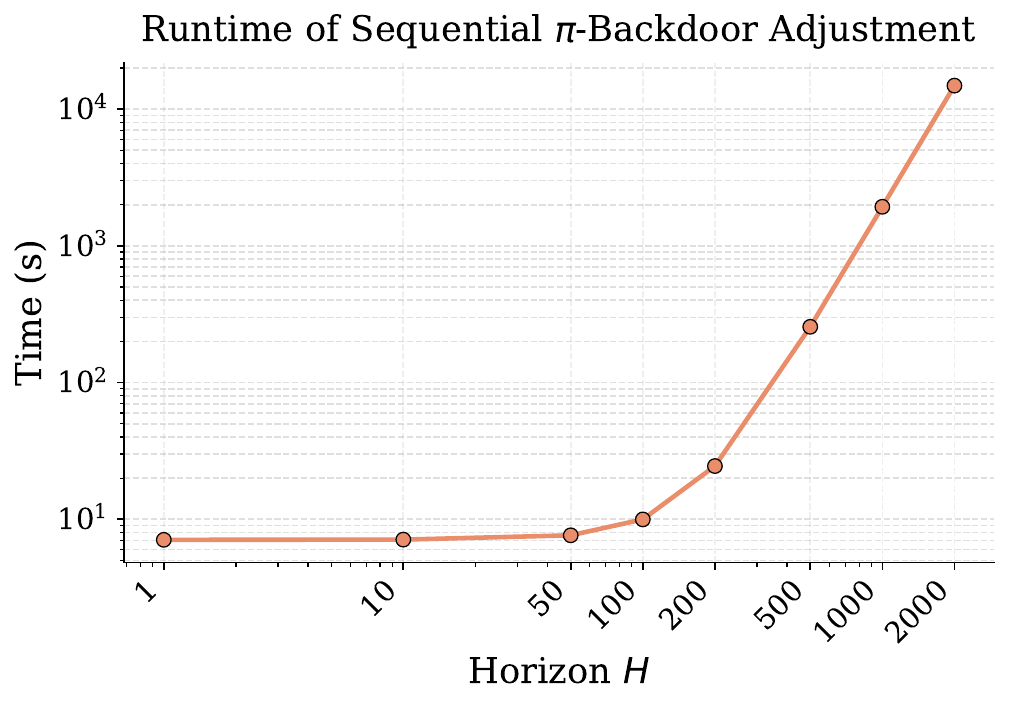}
    \caption{Runtime to compute adjustment sets as a function of horizon $H$ (log--log scale). Algorithm~\ref{alg:feasible-pi-backdoor} ($H=2$ for HumanoidMaze, $H=10$ for AntMaze; Appendix~\ref{app:hparams}) completes in a few seconds, while full-horizon adjustment ($H=1000, 2000$) takes over four hours.}
    \label{fig:wallclock}
\end{figure}

The runtime is roughly constant for small $H$ but grows super-linearly once $H$ exceeds a few hundred, increasing from $7.1$s at $H=1$ to $14{,}854$s ($4.1$ hours) at $H=2000$. By restricting the sequential $\pi$-backdoor computation to a fixed-size window (Theorem~\ref{thm:windowed-adjustment}), Algorithm~\ref{alg:feasible-pi-backdoor} reduces this cost to $O(k)$ independent of $H$, making causal adjustment tractable for the long-horizon tasks in this work.

\subsection{Ablations over Horizon Length and Confounding Strength}
\label{app:ablations-hc}

The main experiments (Table~\ref{tab:main-results}) fix the episode horizon at the default ($H=1000$ for AntMaze, $H=2000$ for HumanoidMaze) and the confounding strength at $100\%$. To test the effect of these settings in higher granularity, we sweep the horizon $H \in \{500, 1000, 1500\}$ (Table~\ref{tab:ablation-horizon}) and the confounding strength $\in \{15\%, 50\%, 85\%\}$ (Table~\ref{tab:ablation-confound}) on both AntMaze tasks (in which confounding strength is represented by heading compass diversion from the true heading) for all eight algorithms; due to limited computational resources, the HumanoidMaze tasks are excluded from these ablations. Both sweeps utilize the same expert as the main experiments, though for $H \in \{500, 1500\}$ expert demonstrations are generated with the appropriate horizon. Note that at $0\%$ confounding strength the compass is an uncorrupted (if noisy) heading sensor, so a non-causal imitator would incur no confounding bias and could even benefit from the additional signal, as seen in the $15\%$-confounded columns. The $H=1000$ and $100\%$-confounded columns reproduce the results reported in Table~\ref{tab:main-results} and are listed for reference. $\mathbb{E}[Y]$ is normalized in the same manner as in Table~\ref{tab:main-results}. Because the reward penalizes every step taken before reaching the goal, it reflects solving speed; thus, an episode solved in a given number of steps yields a larger normalized value under a longer horizon (failed episodes accrue penalties for all $H$ steps), while the anchor changes with the strength setting per configuration. Normalized $\mathbb{E}[Y]$ therefore compares algorithms within a configuration and is generally not comparable across horizons or across strengths.

\begin{table}[H]
\centering
\begin{subtable}{\textwidth}
\centering
\resizebox{\textwidth}{!}{
\begin{tabular}{l|cc|cc|cc}
\hline
& \multicolumn{2}{c}{$H=500$} & \multicolumn{2}{c}{$H=1000$} & \multicolumn{2}{c}{$H=1500$} \\
\cline{2-3} \cline{4-5} \cline{6-7}
\textbf{Algorithm} & $\mathbb{E}[Y]$ & SR\ (\%) & $\mathbb{E}[Y]$ & SR\ (\%) & $\mathbb{E}[Y]$ & SR\ (\%) \\
\hline
C-BC & $90.4 {\scriptstyle \pm 62.4}$ & $78.2 {\scriptstyle \pm 1.3}$ & $250.1 {\scriptstyle \pm 100.9}$ & $77.9 {\scriptstyle \pm 1.3}$ & $406.5 {\scriptstyle \pm 153.1}$ & $81.3 {\scriptstyle \pm 1.2}$ \\
C-GAIL & $76.5 {\scriptstyle \pm 68.9}$ & $64.5 {\scriptstyle \pm 1.4}$ & $239.5 {\scriptstyle \pm 113.6}$ & $74.1 {\scriptstyle \pm 1.4}$ & $400.1 {\scriptstyle \pm 159.4}$ & $80.2 {\scriptstyle \pm 1.3}$ \\
C-SQIL & $98.3 {\scriptstyle \pm 59.3}$ & $\mathbf{87.5} {\scriptstyle \pm 1.0}$ & $\mathbf{275.8} {\scriptstyle \pm 85.7}$ & $\mathbf{90.7} {\scriptstyle \pm 0.9}$ & $431.0 {\scriptstyle \pm 135.8}$ & $87.0 {\scriptstyle \pm 1.1}$ \\
C-IQL & $\mathbf{99.1} {\scriptstyle \pm 59.0}$ & $86.3 {\scriptstyle \pm 1.1}$ & $257.1 {\scriptstyle \pm 101.3}$ & $84.3 {\scriptstyle \pm 1.1}$ & $\mathbf{431.7} {\scriptstyle \pm 134.0}$ & $\mathbf{87.2} {\scriptstyle \pm 1.1}$ \\
BC & $17.1 {\scriptstyle \pm 77.7}$ & $0.0 {\scriptstyle \pm 0.0}$ & $0.0 {\scriptstyle \pm 119.2}$ & $0.0 {\scriptstyle \pm 0.0}$ & $82.3 {\scriptstyle \pm 132.8}$ & $0.5 {\scriptstyle \pm 0.1}$ \\
GAIL & $17.1 {\scriptstyle \pm 77.3}$ & $0.0 {\scriptstyle \pm 0.0}$ & $10.4 {\scriptstyle \pm 112.2}$ & $0.0 {\scriptstyle \pm 0.0}$ & $199.0 {\scriptstyle \pm 170.4}$ & $26.5 {\scriptstyle \pm 1.0}$ \\
SQIL & $6.0 {\scriptstyle \pm 81.9}$ & $0.0 {\scriptstyle \pm 0.0}$ & $4.7 {\scriptstyle \pm 119.1}$ & $0.0 {\scriptstyle \pm 0.0}$ & $2.3 {\scriptstyle \pm 144.4}$ & $0.0 {\scriptstyle \pm 0.0}$ \\
IQL & $0.0 {\scriptstyle \pm 84.3}$ & $0.0 {\scriptstyle \pm 0.0}$ & $2.8 {\scriptstyle \pm 120.8}$ & $0.0 {\scriptstyle \pm 0.0}$ & $0.0 {\scriptstyle \pm 145.2}$ & $0.0 {\scriptstyle \pm 0.0}$ \\
\hline
\end{tabular}
}
\caption{AntMaze-Medium}
\label{tab:ablation-horizon-medium}
\end{subtable}
\vspace{0.75em}
\begin{subtable}{\textwidth}
\centering
\resizebox{\textwidth}{!}{
\begin{tabular}{l|cc|cc|cc}
\hline
& \multicolumn{2}{c}{$H=500$} & \multicolumn{2}{c}{$H=1000$} & \multicolumn{2}{c}{$H=1500$} \\
\cline{2-3} \cline{4-5} \cline{6-7}
\textbf{Algorithm} & $\mathbb{E}[Y]$ & SR\ (\%) & $\mathbb{E}[Y]$ & SR\ (\%) & $\mathbb{E}[Y]$ & SR\ (\%) \\
\hline
C-BC & $61.2 {\scriptstyle \pm 83.9}$ & $\mathbf{0.7} {\scriptstyle \pm 0.3}$ & $199.3 {\scriptstyle \pm 121.6}$ & $39.8 {\scriptstyle \pm 1.5}$ & $271.6 {\scriptstyle \pm 179.6}$ & $25.3 {\scriptstyle \pm 1.4}$ \\
C-GAIL & $49.1 {\scriptstyle \pm 89.4}$ & $0.0 {\scriptstyle \pm 0.0}$ & $155.6 {\scriptstyle \pm 122.6}$ & $7.3 {\scriptstyle \pm 0.8}$ & $257.8 {\scriptstyle \pm 167.3}$ & $16.2 {\scriptstyle \pm 0.9}$ \\
C-SQIL & $63.3 {\scriptstyle \pm 84.2}$ & $0.4 {\scriptstyle \pm 0.1}$ & $204.3 {\scriptstyle \pm 125.9}$ & $45.0 {\scriptstyle \pm 1.6}$ & $\mathbf{330.4} {\scriptstyle \pm 194.4}$ & $\mathbf{46.7} {\scriptstyle \pm 1.6}$ \\
C-IQL & $\mathbf{65.4} {\scriptstyle \pm 82.6}$ & $0.1 {\scriptstyle \pm 0.1}$ & $\mathbf{225.6} {\scriptstyle \pm 127.9}$ & $\mathbf{58.9} {\scriptstyle \pm 1.6}$ & $322.1 {\scriptstyle \pm 181.6}$ & $32.4 {\scriptstyle \pm 1.4}$ \\
BC & $5.6 {\scriptstyle \pm 105.9}$ & $0.0 {\scriptstyle \pm 0.0}$ & $14.0 {\scriptstyle \pm 150.2}$ & $0.0 {\scriptstyle \pm 0.0}$ & $8.0 {\scriptstyle \pm 184.6}$ & $0.0 {\scriptstyle \pm 0.0}$ \\
GAIL & $15.4 {\scriptstyle \pm 102.1}$ & $0.0 {\scriptstyle \pm 0.0}$ & $85.8 {\scriptstyle \pm 132.9}$ & $0.0 {\scriptstyle \pm 0.0}$ & $58.2 {\scriptstyle \pm 174.0}$ & $0.0 {\scriptstyle \pm 0.0}$ \\
SQIL & $1.3 {\scriptstyle \pm 107.5}$ & $0.0 {\scriptstyle \pm 0.0}$ & $0.0 {\scriptstyle \pm 151.5}$ & $0.0 {\scriptstyle \pm 0.0}$ & $1.1 {\scriptstyle \pm 186.3}$ & $0.0 {\scriptstyle \pm 0.0}$ \\
IQL & $0.0 {\scriptstyle \pm 108.1}$ & $0.0 {\scriptstyle \pm 0.0}$ & $16.1 {\scriptstyle \pm 153.5}$ & $0.0 {\scriptstyle \pm 0.0}$ & $0.0 {\scriptstyle \pm 186.6}$ & $0.0 {\scriptstyle \pm 0.0}$ \\
\hline
\end{tabular}
}
\caption{AntMaze-Large}
\label{tab:ablation-horizon-large}
\end{subtable}
\caption{Ablation over horizon length $H \in \{500, 1000, 1500\}$ on Confounded AntMaze, at the default $100\%$ confounding strength. Best result per column in \textbf{bold}. Normalized $\mathbb{E}[Y]$ is not comparable across horizons (see text).}
\label{tab:ablation-horizon}
\end{table}

The prevalence of scalable causal algorithms over non-causal and less scalable algorithms is stable across horizons. On AntMaze-Medium, every causal method attains $64$--$91\%$ success at all three horizons while every non-causal method remains near $0\%$, with a single exception discussed below, and on AntMaze-Large the same ordering holds at every $H$ under normalized $\mathbb{E}[Y]$. At $H=500$ on AntMaze-Large, success rates saturate near zero for all methods including the causal ones; this is likely due to successful expert traversals requiring $648 \pm 99$ steps (Table~\ref{tab:episode-lengths}). Normalized $\mathbb{E}[Y]$ is more informative in this case, with causal imitators' higher values indicating that they progress further toward the goal than non-causal imitators even under an infeasible time budget. The clearest departure from the overall pattern is non-causal GAIL at $H=1500$ on Medium ($26.5\%$). Two factors plausibly contribute. First, GAIL's training budget is specified in rounds and episodes (Appendix~\ref{app:hparams}), so its total environment interaction scales linearly with $H$ and is largest in this configuration, whereas SQIL and IQ-Learn are capped at $2 \times 10^{6}$ steps throughout. Second, GAIL is the only on-policy algorithm among the non-causal baselines, and on-policy divergence minimization has been shown to partially recover from errors induced by unobserved contexts where off-policy conditional fitting latches onto them \citep{swamy2022sequence}. Even so, this partial recovery remains far below every causal method at the same horizon ($80.2$--$87.2\%$).

\begin{table}[H]
\centering
\begin{subtable}{\textwidth}
\centering
\resizebox{\textwidth}{!}{
\begin{tabular}{l|cc|cc|cc|cc}
\hline
& \multicolumn{2}{c}{$15\%$} & \multicolumn{2}{c}{$50\%$} & \multicolumn{2}{c}{$85\%$} & \multicolumn{2}{c}{$100\%$} \\
\cline{2-3} \cline{4-5} \cline{6-7} \cline{8-9}
\textbf{Algorithm} & $\mathbb{E}[Y]$ & SR\ (\%) & $\mathbb{E}[Y]$ & SR\ (\%) & $\mathbb{E}[Y]$ & SR\ (\%) & $\mathbb{E}[Y]$ & SR\ (\%) \\
\hline
C-BC & $59.1 {\scriptstyle \pm 99.5}$ & $80.6 {\scriptstyle \pm 1.2}$ & $246.8 {\scriptstyle \pm 98.8}$ & $81.1 {\scriptstyle \pm 1.2}$ & $248.5 {\scriptstyle \pm 99.3}$ & $81.6 {\scriptstyle \pm 1.2}$ & $250.1 {\scriptstyle \pm 100.9}$ & $77.9 {\scriptstyle \pm 1.3}$ \\
C-GAIL & $45.5 {\scriptstyle \pm 115.2}$ & $76.6 {\scriptstyle \pm 1.3}$ & $254.2 {\scriptstyle \pm 102.4}$ & $83.4 {\scriptstyle \pm 1.2}$ & $240.0 {\scriptstyle \pm 110.0}$ & $79.5 {\scriptstyle \pm 1.3}$ & $239.5 {\scriptstyle \pm 113.6}$ & $74.1 {\scriptstyle \pm 1.4}$ \\
C-SQIL & $\mathbf{73.2} {\scriptstyle \pm 97.9}$ & $\mathbf{86.5} {\scriptstyle \pm 1.1}$ & $\mathbf{267.2} {\scriptstyle \pm 90.1}$ & $\mathbf{87.8} {\scriptstyle \pm 1.0}$ & $256.3 {\scriptstyle \pm 99.4}$ & $82.6 {\scriptstyle \pm 1.2}$ & $\mathbf{275.8} {\scriptstyle \pm 85.7}$ & $\mathbf{90.7} {\scriptstyle \pm 0.9}$ \\
C-IQL & $67.5 {\scriptstyle \pm 102.3}$ & $82.5 {\scriptstyle \pm 1.2}$ & $256.1 {\scriptstyle \pm 102.2}$ & $83.4 {\scriptstyle \pm 1.2}$ & $\mathbf{257.8} {\scriptstyle \pm 97.8}$ & $\mathbf{84.6} {\scriptstyle \pm 1.1}$ & $257.1 {\scriptstyle \pm 101.3}$ & $84.3 {\scriptstyle \pm 1.1}$ \\
BC & $0.0 {\scriptstyle \pm 108.9}$ & $47.7 {\scriptstyle \pm 1.4}$ & $136.5 {\scriptstyle \pm 113.7}$ & $30.6 {\scriptstyle \pm 1.4}$ & $29.9 {\scriptstyle \pm 110.5}$ & $0.0 {\scriptstyle \pm 0.0}$ & $0.0 {\scriptstyle \pm 119.2}$ & $0.0 {\scriptstyle \pm 0.0}$ \\
GAIL & $61.1 {\scriptstyle \pm 104.3}$ & $80.5 {\scriptstyle \pm 1.2}$ & $152.9 {\scriptstyle \pm 128.5}$ & $40.0 {\scriptstyle \pm 1.5}$ & $150.3 {\scriptstyle \pm 130.9}$ & $38.4 {\scriptstyle \pm 1.4}$ & $10.4 {\scriptstyle \pm 112.2}$ & $0.0 {\scriptstyle \pm 0.0}$ \\
SQIL & $62.6 {\scriptstyle \pm 102.8}$ & $80.2 {\scriptstyle \pm 1.3}$ & $0.0 {\scriptstyle \pm 118.3}$ & $0.0 {\scriptstyle \pm 0.0}$ & $0.0 {\scriptstyle \pm 118.8}$ & $0.0 {\scriptstyle \pm 0.0}$ & $4.7 {\scriptstyle \pm 119.1}$ & $0.0 {\scriptstyle \pm 0.0}$ \\
IQL & $64.8 {\scriptstyle \pm 99.1}$ & $80.7 {\scriptstyle \pm 1.2}$ & $1.0 {\scriptstyle \pm 118.2}$ & $0.0 {\scriptstyle \pm 0.0}$ & $1.8 {\scriptstyle \pm 118.1}$ & $0.0 {\scriptstyle \pm 0.0}$ & $2.8 {\scriptstyle \pm 120.8}$ & $0.0 {\scriptstyle \pm 0.0}$ \\
\hline
\end{tabular}
}
\caption{AntMaze-Medium}
\label{tab:ablation-confound-medium}
\end{subtable}
\vspace{0.75em}
\begin{subtable}{\textwidth}
\centering
\resizebox{\textwidth}{!}{
\begin{tabular}{l|cc|cc|cc|cc}
\hline
& \multicolumn{2}{c}{$15\%$} & \multicolumn{2}{c}{$50\%$} & \multicolumn{2}{c}{$85\%$} & \multicolumn{2}{c}{$100\%$} \\
\cline{2-3} \cline{4-5} \cline{6-7} \cline{8-9}
\textbf{Algorithm} & $\mathbb{E}[Y]$ & SR\ (\%) & $\mathbb{E}[Y]$ & SR\ (\%) & $\mathbb{E}[Y]$ & SR\ (\%) & $\mathbb{E}[Y]$ & SR\ (\%) \\
\hline
C-BC & $43.7 {\scriptstyle \pm 128.9}$ & $41.5 {\scriptstyle \pm 1.5}$ & $189.1 {\scriptstyle \pm 127.3}$ & $39.9 {\scriptstyle \pm 1.5}$ & $191.2 {\scriptstyle \pm 128.5}$ & $40.6 {\scriptstyle \pm 1.5}$ & $199.3 {\scriptstyle \pm 121.6}$ & $39.8 {\scriptstyle \pm 1.5}$ \\
C-GAIL & $2.9 {\scriptstyle \pm 129.6}$ & $15.5 {\scriptstyle \pm 0.8}$ & $154.2 {\scriptstyle \pm 138.8}$ & $25.9 {\scriptstyle \pm 1.2}$ & $142.5 {\scriptstyle \pm 129.3}$ & $10.1 {\scriptstyle \pm 0.9}$ & $155.6 {\scriptstyle \pm 122.6}$ & $7.3 {\scriptstyle \pm 0.8}$ \\
C-SQIL & $\mathbf{57.3} {\scriptstyle \pm 131.0}$ & $\mathbf{51.8} {\scriptstyle \pm 1.6}$ & $207.8 {\scriptstyle \pm 136.1}$ & $\mathbf{54.6} {\scriptstyle \pm 1.6}$ & $201.7 {\scriptstyle \pm 127.8}$ & $49.0 {\scriptstyle \pm 1.6}$ & $204.3 {\scriptstyle \pm 125.9}$ & $45.0 {\scriptstyle \pm 1.6}$ \\
C-IQL & $52.0 {\scriptstyle \pm 131.0}$ & $46.8 {\scriptstyle \pm 1.6}$ & $\mathbf{212.3} {\scriptstyle \pm 131.7}$ & $52.7 {\scriptstyle \pm 1.5}$ & $\mathbf{204.1} {\scriptstyle \pm 129.8}$ & $\mathbf{51.4} {\scriptstyle \pm 1.6}$ & $\mathbf{225.6} {\scriptstyle \pm 127.9}$ & $\mathbf{58.9} {\scriptstyle \pm 1.6}$ \\
BC & $13.1 {\scriptstyle \pm 122.9}$ & $18.4 {\scriptstyle \pm 1.2}$ & $79.0 {\scriptstyle \pm 134.3}$ & $0.0 {\scriptstyle \pm 0.0}$ & $14.9 {\scriptstyle \pm 148.2}$ & $0.0 {\scriptstyle \pm 0.0}$ & $14.0 {\scriptstyle \pm 150.2}$ & $0.0 {\scriptstyle \pm 0.0}$ \\
GAIL & $0.0 {\scriptstyle \pm 127.6}$ & $11.9 {\scriptstyle \pm 1.0}$ & $120.0 {\scriptstyle \pm 124.0}$ & $6.5 {\scriptstyle \pm 0.4}$ & $12.5 {\scriptstyle \pm 148.2}$ & $0.0 {\scriptstyle \pm 0.0}$ & $85.8 {\scriptstyle \pm 132.9}$ & $0.0 {\scriptstyle \pm 0.0}$ \\
SQIL & $23.7 {\scriptstyle \pm 121.3}$ & $14.9 {\scriptstyle \pm 1.0}$ & $1.7 {\scriptstyle \pm 151.1}$ & $0.0 {\scriptstyle \pm 0.0}$ & $0.0 {\scriptstyle \pm 152.1}$ & $0.0 {\scriptstyle \pm 0.0}$ & $0.0 {\scriptstyle \pm 151.5}$ & $0.0 {\scriptstyle \pm 0.0}$ \\
IQL & $25.0 {\scriptstyle \pm 121.3}$ & $21.2 {\scriptstyle \pm 1.0}$ & $0.0 {\scriptstyle \pm 151.8}$ & $0.0 {\scriptstyle \pm 0.0}$ & $0.0 {\scriptstyle \pm 151.9}$ & $0.0 {\scriptstyle \pm 0.0}$ & $16.1 {\scriptstyle \pm 153.5}$ & $0.0 {\scriptstyle \pm 0.0}$ \\
\hline
\end{tabular}
}
\caption{AntMaze-Large}
\label{tab:ablation-confound-large}
\end{subtable}
\caption{Ablation over confounding strength $\in \{15\%, 50\%, 85\%, 100\%\}$ on Confounded AntMaze, at the default horizon $H=1000$. Best result per column in \textbf{bold}. Normalized $\mathbb{E}[Y]$ is not comparable across strengths (see text).}
\label{tab:ablation-confound}
\end{table}

The confounding-strength sweep isolates the mechanism behind the causal methods' advantage. Because causal variants exclude the compass $\mathbf{W}$ from their adjustment sets, and the strength parameter scales only the compass corruption, the learning problem faced by a causal imitator is identical at every strength; the causal rows (e.g., Causal IQ-Learn on Large: $46.8$--$58.9\%$) therefore provide a scale against which the behavior of non-causal methods can be assessed (e.g. the higher variation of Causal GAIL on Large, $7.3$--$25.9\%$, is consistent with its training instability at scale discussed in Section~\ref{sec:algos}). Their smaller normalized $\mathbb{E}[Y]$ values in the $15\%$ rows reflect the improved worst-performing anchor at low strength rather than any degradation of the causal methods, whose success rates remain consistent. Non-causal methods, conversely, degrade monotonically as strength increases, and they collapse in a consistent order: SQIL and IQ-Learn reach $0\%$ success by $50\%$ strength on both tasks, BC follows ($0\%$ by $85\%$ on Medium and by $50\%$ on Large), and GAIL persists longest ($38.4\%$ at $85\%$ strength on Medium; $6.5\%$ at $50\%$ on Large). This phenomenon implies how directly each objective commits to the confounded conditional: BC fits $P(x_t \mid \mathbf{v}^O_t)$ pointwise; SQIL and IQ-Learn additionally propagate it through bootstrapped value targets across the horizon, amplifying the corrupted association; and GAIL, the only on-policy method, matches occupancy measures under its own rollouts rather than fitting the conditional directly, consistent with the greater robustness of on-policy divergence minimization to unobserved contexts \citep{swamy2022sequence}, although GAIL's larger interaction budget (Appendix~\ref{app:hparams}) cannot be fully separated from this effect.

At the lowest strength, conditioning on $\mathbf{W}$ becomes nearly costless and sometimes net-beneficial: $15\%$ confounding leaves the compass a mostly faithful heading sensor even after the runtime shift, and non-causal GAIL ($80.5\%$) exceeds Causal GAIL ($76.6\%$) on Medium while non-causal SQIL and IQ-Learn recover to within a few points of their causal counterparts ($80.2$ against $86.5\%$ and $80.7$ against $82.5\%$). This substantiates the observation of Figure~\ref{fig:training-curves-main} that the spurious proxy carries genuine predictive signal, and is consistent with the discussion in Appendix~\ref{app:related-work}: the bias resides in the conditional over the imitator's observations, and its cost is governed by how far that conditional moves at runtime. Excluding $\mathbf{W}$ sacrifices this signal for invariance, a trade that is repaid as strength grows. The favorable regime for non-causal methods does not extend to the harder task, however: on Large at $15\%$, Causal SQIL ($51.8\%$) still more than doubles the best non-causal result ($21.2\%$), as even mild per-step miscalibration compounds over the longer traversal. Relatedly, non-causal BC is anomalously weak at $15\%$ on Medium ($47.7\%$ against roughly $80\%$ for the other non-causal methods): BC is the only algorithm that never interacts with the environment during training (Appendix~\ref{app:hparams}) and thus has no mechanism to recover once the mildly miscalibrated compass drifts it off the demonstration support, an interaction between confounding bias and compounding error, the two causes of distribution shift discussed in Appendix~\ref{app:related-work}. Finally, the low-strength rows provide further evidence for non-causal methods' comparatively better HumanoidMaze results being an effect of weaker confounding influence.

\section{Limitations}
\label{sec:limitations}

\paragraph{Known causal graph.}
Our framework assumes that the causal diagram $\mathcal{G}$ is specified \textit{a priori}. In practice, the graph must be elicited from domain expertise or estimated from data. Misspecification of $\mathcal{G}$, such as missing an edge from a confounder to the action, can lead to invalid adjustment sets and biased policies. Integrating causal discovery from observational data, either as a preprocessing step or jointly with policy learning, remains an important open problem.

\paragraph{Bounded temporal influence of confounders.}
The windowed approximation in Algorithm~\ref{alg:feasible-pi-backdoor} assumes that the influence of any single confounder realization decays within $k$ timesteps. This is justified by the MuJoCo physics of the environments we consider, but environments with long-range latent dependencies (e.g. persistent hidden goals, slowly drifting dynamics, or latent agent intentions in multi-agent settings) would violate this assumption. In such cases, alternative approximation strategies (e.g., hierarchical windows or attention-based aggregation over history) or exact methods on compressed representations would be required.

\paragraph{No exploitation of reward structure.}
Our algorithms treat the reward as entirely latent and make no parametric assumptions about its form. Prior work on partial identification for CIL \citep{ruan2024partial} has shown that incorporating reward priors can tighten bounds on the imitating policy and even enable the imitator to surpass expert performance. Combining Causal IQ-Learn with such reward priors is a natural extension that we leave to future work.

\paragraph{Sensitivity to expert optimality.}
While Causal SQIL and Causal IQ-Learn successfully address the compounding error and credit assignment issues seen in Causal BC and Causal GAIL, they are more sensitive than BC to the quality and optimality of expert demonstrations. Because these off-policy algorithms utilize demonstrations to ground an implicit reward signal or a value function via soft Q-learning objectives, they are more susceptible to noise in high-dimensional state-action spaces where the expert signal may be weak. This dependency is clearly demonstrated in the HumanoidMaze-Large task (Table~\ref{tab:main-results}), where the expert itself is sub-optimal, achieving a success rate of only $7.0\%$. While Causal BC maintains a success rate of $8.0\%$, Causal IQ-Learn collapses to $3.0\%$ and Causal SQIL matches at $8.0\%$ despite both algorithms outperforming BC in other tasks where the expert is more consistent. These results suggest that when the expert is sub-optimal, the off-policy grounding of value functions is more easily corrupted than the supervised cloning objective, indicating that the scalability benefits of our proposed methods are most reliably realized when provided with a high-quality expert signal.

\end{document}